\documentclass{article}

\usepackage{microtype}
\usepackage{graphicx}
\usepackage{subcaption}
\usepackage{booktabs} %

\usepackage{hyperref}

\usepackage[accepted]{icml/icml2026}

\usepackage{amsmath}
\usepackage{amssymb}
\usepackage{mathtools}
\usepackage{amsthm}
\usepackage{thmtools}
\usepackage{multirow}
\usepackage{comment}
\usepackage{enumitem}
\usepackage[capitalize,noabbrev]{cleveref}

\usepackage{soul}
\usepackage{color}

\theoremstyle{plain}
\newtheorem{theorem}{Theorem}[section]
\newtheorem{proposition}[theorem]{Proposition}
\newtheorem{lemma}[theorem]{Lemma}
\newtheorem{corollary}[theorem]{Corollary}
\theoremstyle{definition}
\newtheorem{definition}[theorem]{Definition}
\newtheorem{assumption}{Assumption}

\theoremstyle{remark}
\newtheorem{remark}[theorem]{Remark}

\usepackage[textsize=tiny]{todonotes}

\icmltitlerunning{Spectral Gradient Descent Mitigates Anisotropy-Driven Misalignment: A Case Study in Phase Retrieval}

\DeclareMathOperator{\Tr}{Tr}

\DeclareMathOperator{\diag}{diag}

\DeclareMathOperator{\spn}{span}

\DeclareMathOperator{\sgn}{Sign}
\DeclareMathOperator{\Sign}{Sign}
\DeclareMathOperator{\polar}{polar}

\newcommand{\calM}{\ensuremath{\mathcal{M}}}

\newcommand{\expec}{\ensuremath{\mathbb{E}}}

\definecolor{asparagus}{rgb}{0.53, 0.66, 0.42}

\definecolor{olive}{RGB}{128,128,0}

\newcommand{\Align}{\mathrm{Align}}

\newcommand{\R}{\ensuremath{\mathbb{R}}}

\begin{document}

\twocolumn[
  \icmltitle{Spectral Gradient Descent Mitigates Anisotropy-Driven Misalignment: \\ A Case Study in Phase Retrieval}

  \icmlsetsymbol{equal}{*}

  \begin{icmlauthorlist}
    \icmlauthor{Guillaume  Braun}{riken}
    \icmlauthor{Han Bao}{ism,tohoku,riken}
    \icmlauthor{Wei Huang}{riken,ism}
    \icmlauthor{Masaaki Imaizumi}{ut,riken,ku}
  \end{icmlauthorlist}

  \icmlaffiliation{riken}{RIKEN AIP}
  \icmlaffiliation{ism}{The Institute of Statistical Mathematics}
  \icmlaffiliation{tohoku}{Tohoku University}
  \icmlaffiliation{ut}{The University of Tokyo}
  \icmlaffiliation{ku}{Kyoto University}

  \icmlcorrespondingauthor{Guillaume Braun}{guillaume.braun@riken.jp}

  \icmlkeywords{Machine Learning, ICML}

  \vskip 0.3in
]

\printAffiliationsAndNotice{}  %

\begin{abstract}
Spectral gradient methods, such as the Muon optimizer, modify gradient updates by preserving directional information while discarding scale, and have shown strong empirical performance in deep learning. We investigate the mechanisms underlying these gains through a dynamical analysis of a nonlinear phase retrieval model with anisotropic Gaussian inputs, equivalent to training a two-layer neural network with quadratic activation and fixed second-layer weights.
Focusing on a spiked covariance setting where the dominant variance direction is orthogonal to the signal, we show that gradient descent (GD) suffers from a variance-induced misalignment: during the early escaping stage, the high-variance but uninformative spike direction is multiplicatively amplified, degrading alignment with the true signal under strong anisotropy. In contrast, spectral gradient descent (SpecGD) removes this spike amplification effect, leading to stable alignment and accelerated noise contraction. Numerical experiments confirm the theory and show that these phenomena persist under broader anisotropic covariances.
\end{abstract}

\begin{figure}[t]
  \centering
    \includegraphics[width=\linewidth]{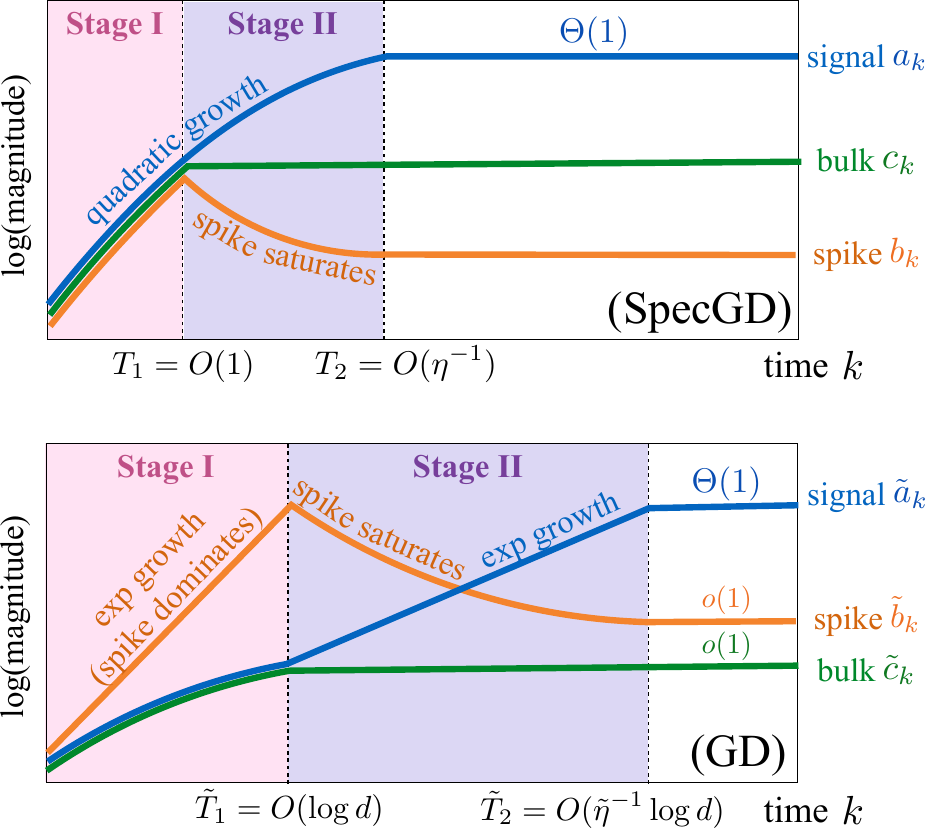}
  \caption{
    Qualitative summary of the learning dynamics of SpecGD (\textbf{top}) and GD (\textbf{bottom}).
    The parameter matrix is projected to the teacher \emph{signal}, spurious \emph{spike}, and isotropic \emph{bulk} components,
    and the respective magnitudes (or coefficients) are denoted by $a_k$, $b_k$, and $c_k$, at time $k$, formally defined in Section~\ref{sec:ODE}. $\eta$ and $\Tilde{\eta}$ are learning rates for SpecGD and GD, and $d$ is data dimension.
    In our theoretical analysis, we track how these three coefficients behave dynamically.
    \textbf{(i)~Two stages:} all coefficients grow in Stage~I and the signal coefficient approaches to $\Theta(1)$ (alignment) in Stage~II;
    \textbf{(ii)~Spike dominance:} spike $b_k$ dominates the others without the scale invariance of SpecGD;
    \textbf{(iii)~Spike saturation:} after $b_k$ stops growing, the dynamics enter Stage~II and $a_k$ begins to dominate (alignment);
    \textbf{(iv)~Transition time:} SpecGD has no dimensional dependency in its transition time $T_1$ and hence exits both stages faster thanks to the synchronous growth in Stage~I.
  } \vskip-0.1in
  \label{fig:pop_abcr_side_by_side_rev}
\end{figure}

\section{Introduction}

Modern deep neural networks are predominantly trained by first-order optimization algorithms, most notably Stochastic Gradient Descent \citep{Robbins1951ASA} and its adaptive variants. These methods, including AdaGrad \citep{adagrad}, RMSProp \citep{tieleman2012rmsprop}, Adam \citep{KingmaB14}, and their extensions, %
can be interpreted as Euclidean gradient descent equipped with preconditioning mechanisms based on past gradients or curvature information. The simplest instances, such as Adam, rely on diagonal, coordinate-wise rescaling of the gradient, while more structured approaches such as K-FAC \citep{kfac} and Shampoo \citep{shampoo} exploit matrix-valued second-moment information to construct Kronecker-factored preconditioners. Despite differences in implementation and per-iteration cost, they share a common design principle: the update remains linear in the gradient, and optimization proceeds along a Euclidean descent direction whose scale is adjusted by the preconditioner.

The recently introduced optimizer Muon \citep{jordan2024muon} breaks with the prevailing paradigm of first-order optimization in deep learning. Rather than treating model parameters as vectors and updating them by rescaling the Euclidean gradient, Muon explicitly exploits the matrix structure of layerwise parameters. In each layer, Muon replaces the raw gradient by an orthogonalized matrix direction obtained from its polar decomposition, preserving the left and right singular subspaces of the gradient matrix while discarding its singular values.

Remarkably, this conceptually simple update, which departs from classical gradient preconditioning, has been shown to match or outperform widely used adaptive optimizers such as Adam in large-scale neural network training, including Transformer architectures, while remaining computationally efficient and often reaching a given training loss in substantially fewer optimization steps \citep{liu2025muonscalablellmtraining}. These empirical results suggest that treating parameters as matrices and modifying the geometry of the update, rather than merely rescaling gradients, can fundamentally alter learning dynamics. %

Despite these striking empirical successes, the mechanisms underlying Muon’s performance advantages remain only partially understood. Existing explanations can be broadly grouped into three categories.

First, Muon has been studied from an optimization perspective and admits deterministic or stochastic convergence guarantees \citep{lau2026polargradclassmatrixgradientoptimizers, shen2025convergenceanalysismuon, chang2025convergencemuon, sfyraki2025lionsmuonsoptimizationstochastic,sato2025convergenceboundcriticalbatch}. While these results establish optimization guarantees, such as convergence to stationary points under appropriate learning rates, 
the statistical perspective is additionally necessary to explain the improved representation learning performance of Muon.

Second, several works geometrically characterize when spectral gradient updates can be beneficial. For instance, \citet{fan2025implicitbiasspectraldescent} show that Muon converges to a max-margin solution in the spectral norm in linear classification, whereas GD favors the $\ell_2$-norm max-margin solution;
{%
similarly, \citet{chen2025muon} show that Muon converges to a minimum spectral-norm solution characterized through KKT conditions.
}
Related analyses identify conditions under which gradient orthogonalization is advantageous, such as low stable-rank features or rapidly growing curvature \citep{su2025isotropiccurvaturemodelunderstanding, davis2025spectralgradientupdateshelp}.
These approaches typically require intricate per-iteration conditions whose implications for long-term behavior remain unclear.

Finally, the end-to-end training dynamics of SpecGD have gained attention. \citet{vasudeva2025muonsspectraldesignbenefits} study SpecGD in a linear classification model under class imbalance and show that it learns all principal components of the data at the same rate, whereas GD prioritizes dominant components. 
A closely related insight appears in the Transformer setting, where \citet{wang2025muon} combine empirical evidence and a linear associative-memory analysis to show that Muon yields more balanced updates under heavy-tailed data.  
Closest to our setting, \citet{wang2025highdimensional} analyze Muon in an isotropic linear regression setting and show that, in the large-batch regime, Muon effectively behaves as a linear method, suggesting that anisotropy or nonlinear feature learning are necessary to theoretically describe the advantages of Muon.
This result motivates our study to focus on nonlinear regression under anisotropic inputs.

To see a fundamental difference between SpecGD and GD,
we consider an anisotropic and nonlinear setting and show that SpecGD induces qualitatively different training dynamics from GD.
Unlike \citet{ma2026preconditioning}, which study a simplified Muon dynamics with decoupled spectral components and condition-number-free convergence, our analysis reveals a different mechanism in phase retrieval. SpecGD suppresses anisotropy-driven misalignment by controlling variance amplification in a coupled dynamics that evolves through successive stages.

\subsection{Contributions}

{%
We study the training dynamics of SpecGD in an anisotropic phase retrieval setting and compare it with GD. Our goal is to understand the mechanism by which SpecGD interacts with anisotropic data
and how this affects the global learning trajectory.
Below, we summarize our findings.

\begin{itemize}[leftmargin=*]
    \item \textbf{SpecGD as a sign-based update in an adaptive basis.} We show that the learning dynamics of both SpecGD and GD admit an exact reduction to a three-dimensional invariant manifold capturing the evolution of the \emph{signal}, spurious \emph{spike}, and isotropic \emph{bulk} components. On this reduced manifold, SpecGD performs a sign-based (scale-invariant) gradient update, not in the ambient Euclidean basis. %
    This adaptive normalization fundamentally alters how the algorithm interacts with anisotropy, preventing amplification of uninformative directions that dominate GD dynamics. 
    Compare the spike coefficient $b_k$ in Figure~\ref{fig:pop_abcr_side_by_side_rev} to see the difference in their responses to anisotropy.
    \item \textbf{Stage-wise learning dynamics.} We show that the learning dynamics exhibit a two-stage structure.
    During the \emph{growth stage} (Stage~I) of SpecGD, all of the \{signal, spike, bulk\} coefficients grow simultaneously.
    Then, the \emph{alignment stage} (Stage~II) follows, where the signal coefficient continues to grow and align with the signal direction eventually, while the \{spike, bulk\} coefficients saturate.

    In the growth stage, SpecGD amplifies all \{signal, spike, bulk\} coefficients at comparable rates;
    GD heterogeneously amplifies so that the spike coefficient quickly dominates the \{signal, bulk\} coefficients, driven by the anisotropic data covariance.
    Compare the Stage~I $(a_k,b_k,c_k)$ curves in Figure~\ref{fig:pop_abcr_side_by_side_rev}.

    This difference forces GD to use a smaller learning rate for stability and delays the onset of the alignment stage. In contrast, SpecGD avoids spike dominance, leading to earlier and more robust alignment. Figure~\ref{fig:pop_abcr_side_by_side_rev} compare the transition time $T_1$ of Stage~I.

    \item \textbf{Empirical validation beyond the stylized model.} Through numerical experiments, we demonstrate that SpecGD continues to outperform GD under more general covariance structures, such as power-law spectra, and in finite-sample regimes. These results suggest that the proposed mechanism captures a robust feature of spectral gradient methods rather than an artifact of the stylized spiked covariance model.

\end{itemize}

}

\subsection{Other Related Work}

\paragraph{Non-convex optimization and feature learning.}
A broad line of work studies gradient-based training dynamics in neural networks, with the goal of characterizing feature learning, convergence, and computational--statistical trade-offs in nonconvex models such as the multi-index model \citep{saad1995dynamics,saad1995line,goldt2019dynamics,veiga2022phase,ben2022high,abbe2022merged,AbbeAM23,bietti2023learning, arnaboldi2023high,dandi24Benefits, repetita24,collins2024hitting,bruna2025surveyalgorithmsmultiindexmodels}. These analyses provide a detailed understanding of how gradient-based methods transition from lazy or kernel-like regimes to genuine feature learning, and how this transition depends on model architecture, initialization, and complexity. However, most existing analyses rely on isotropic or weakly structured inputs, which allow the dynamics to collapse onto a finite set of order parameters.

\paragraph{Learning from anisotropic data.}
Beyond the linear setting, where anisotropy and power-law spectra have been extensively studied \citep{maloney2022solvablemodelneuralscaling, bahri2024explaining,atanasov2024scaling,bordelon24,worschech2024analyzingneuralscalinglaws,paquette2024,lin2024scaling}, the impact of anisotropic inputs on nonlinear regression and feature learning is less well understood. Existing works have primarily focused on restricted settings: \citet{ba2023learning} analyze single-index models under a spiked covariance structure;
\citet{braun2025learning} consider learning under a more general but mild form of anisotropy. More recently, \citet{wortsman2025kernel} study kernel ridge regression with power-law covariates, showing that in certain regimes the spectral decay of the inputs is transferred to the learned features. Closest to our work, \citet{braun2025fastescapeslowconvergence} characterize the population dynamics of phase retrieval under power-law data, revealing that---unlike in the isotropic case---the learning dynamics are governed by an infinite hierarchy of coupled ordinary differential equations. Their analysis, however, is restricted to gradient flow and a single-neuron estimator.

\paragraph{Quadratic models.}
Quadratic observation models, with phase retrieval as a canonical example, are standard testbeds for studying nonconvex learning dynamics \citep{candes15PR,tan2019phase, davis2020nonsmooth, maPR,dongPR23,tan2023online}. From a learning perspective, they correspond to shallow neural networks with quadratic activations and represent the simplest nonlinear extension of linear prediction. Recent work has analyzed their training dynamics under gradient-based methods \citep{sarao2020optimization,arnaboldi2023escaping,martin2024impact,erba2025nuclear,arous2025learningquadraticneuralnetworks,defilippis2025scaling, braun2025fastescapeslowconvergence}. Our contribution builds on this line of work by studying a multi-neuron quadratic model with spiked input covariance and matrix-aware spectral updates, leading to qualitatively different and faster dynamics.

\subsection{Notations}
We write $a_n \lesssim b_n$ (resp.\ $a_n \gtrsim b_n$) if there exists a constant $C>0$
such that $a_n \leq C b_n$ (resp.\ $a_n \geq C b_n$) for all $n$.
We write $a_n = \Theta(b_n)$ if there exist constants $0 < c \leq C < \infty$ such that
$c\, b_n \leq a_n \leq C\, b_n$ for all $n$.
Standard asymptotic notations $O(\cdot)$ and $\Omega(\cdot)$ are used when the inequalities
hold for sufficiently large $n$.
For vectors $x,y \in \mathbb{R}^d$, we denote by $\langle x,y\rangle$ and $\|x\|$ the
Euclidean inner product and norm, respectively.
For matrices $A,B \in \mathbb{R}^{d \times d}$, we use the Frobenius inner product
$\langle A,B\rangle_F \coloneqq \mathrm{Tr}(A^\top B)$, with associated Frobenius norm $\|A\|_F$.
The operator norm of a matrix $A$ is denoted by $\|A\|$.
The $d \times d$ identity matrix is denoted by $I_d$, and the unit sphere in $\mathbb{R}^d$
by $\mathbb{S}^{d-1}$.

\section{Statistical Framework}\label{sec:framework}

\paragraph{Data generation.}
We consider a phase retrieval model of the form
\[
y = (x^\top w_\star)^2 + \nu,
\]
where \(w_\star \in \mathbb{S}^{d-1}\) is the unknown target signal.
An input \(x \sim \mathcal{N}(0,Q)\) is drawn from a Gaussian distribution with the covariance matrix \(Q \in \mathbb{R}^{d \times d}\), and \(\nu \sim \mathcal{N}(0,\sigma^2)\) is an additive noise term independent of \(x\).

Throughout the theoretical analysis, we focus on a \emph{spiked covariance} model of the form
\[
Q = I_d + \lambda vv^\top, \qquad \lambda > 0,\; v \in \mathbb{S}^{d-1},
\]
where the spike direction $v$ is assumed to be orthogonal to the signal, i.e., \(v^\top w_\star = 0\). 
\begin{remark}
Our goal is to study the effect of an uninformative covariance direction on the learning dynamics.
Unlike prior work, e.g., \citet{ba2023learning}, that exploits alignment between the spike and the signal to accelerate learning, we consider a more challenging regime in which the spike is orthogonal to the signal.
\end{remark}

When the spike intensity $\lambda$ is large, most samples concentrate along an uninformative covariance direction orthogonal to the signal, while informative fluctuations along the signal direction become comparatively rare. As a result, gradient updates are dominated by variance-driven components, making signal recovery more challenging. We make the following assumption on $\lambda$.

\begin{assumption}\label{ass:lambda}
    We assume that $\log d \lesssim \lambda \lesssim d.$
\end{assumption}

\paragraph{Model.}
We consider a one-hidden-layer neural network with the squared activation function.
Given an input \(x \in \mathbb{R}^d\), the model output is
\[
f_W(x) = \sum_{j=1}^m (w_j^\top x)^2,
\]
where \(W = (w_1,\dots,w_m) \in \mathbb{R}^{d \times m}\) denotes the hidden-layer weights.

This model admits an equivalent matrix formulation.
By defining the positive semidefinite matrix $M \coloneqq W W^\top \in \mathbb{R}^{d \times d},$
the network output can be written as
\[
f_W(x) = x^\top M x.
\]
Thus, training a one-layer neural network with the squared activations is equivalent to learning a positive semidefinite matrix \(M\) of rank at most \(m\).

\paragraph{Loss.}
We consider the population squared loss
\[
\mathcal{L}(W)
= \mathbb{E}\bigl( y - f_W(x) \bigr)^2
= \mathbb{E}\bigl( x^\top M x - x^\top M_\star x \bigr)^2 + \sigma^2,
\]
where \(M = WW^\top\) and \(M_\star \coloneqq w_\star w_\star^\top\). Note that the loss depends on \(W\) only through the matrix \(M\).

\paragraph{Signal alignment.}
We quantify alignment with the signal by using the Frobenius cosine between \(M\) and \(M_\star\),
\begin{equation}
\label{eq:alignment_def}
\mathrm{Align}(M)
\coloneqq \frac{\langle M, M_\star\rangle}{\|M\|_F \|M_\star\|_F}
= \frac{w_\star^\top M w_\star}{\|M\|_F},
\end{equation}
which takes values in \([0,1]\).

\paragraph{Initialization.}
We assume $m= d$ together with the on-manifold initialization by sampling
$W(0)^\top \in \mathbb{R}^{m\times d}$ uniformly from the Stiefel manifold
$\mathrm{St}(m,d)$, so that $W(0)W(0)^\top = I_d$.
We then rescale the initialization by a factor $\theta>0$, which yields
$M(0)=\theta^2 I_d$.

\begin{assumption}\label{ass:theta}
We set the initialization scale as
\[
\theta^2 = \rho_0 d^{-1}\quad  \text{with~~} \rho_0=\log^{-1} d.
\]
\end{assumption}
This small, isotropic initialization places the dynamics in the feature-learning regime.
More generally, any choice $\rho_0=o(1)$ would lead to the same qualitative
behavior; the specific scaling is fixed for clarity.
\begin{remark}
\label{remark:initialization}
The Stiefel initialization is used for analytical convenience.
In practice, a qualitatively similar behavior is observed with a small Gaussian initialization
$w_j \stackrel{\text{i.i.d.}}{\sim} \mathcal{N}(0,\theta^2 I_d)$ with $\theta^2 = O(\rho_0(md)^{-1})$ in our experiments in Section~\ref{sec:xp}.
\end{remark}

\paragraph{Optimization.}
We study the training dynamics induced by gradient-based methods at the population level.

Spectral gradient descent (SpecGD) updates the gradient by its
polar factor, yielding the population update at time $k=0,1,2,\dots$ as
\begin{equation}
\label{eq:specgd_pop}
W_{k+1}
=
W_k-\eta\,\mathrm{polar} \bigl(\nabla_W\mathcal{L}(W_k)\bigr),
\end{equation}
where $\eta>0$ denotes a learning rate and $\mathrm{polar}(A)\coloneqq A\,(A^\top A)^{-1/2}$ the polar factor of $A \in \mathbb{R}^{d \times d}$, with $(\cdot)^{-1/2}$ understood in the Moore--Penrose sense.
By construction, SpecGD preserves the principal directions of the gradient while discarding its singular values, suppressing anisotropic scaling effects.
\begin{remark}
SpecGD serves as an idealized proxy for Muon, in which momentum is removed and the Newton--Schulz approximation is replaced by the exact polar decomposition.
This type of idealization is common in theoretical studies of normalization-based optimizers, as it isolates the essential geometric bias while enabling precise analysis. The computational cost of computing the polar decomposition is $O(m^2d)$ (if $m\leq d$).
\end{remark}
We compare the above result with the gradient descent (GD), which updates the parameter matrix $\tilde{W}_k$ for $k=0,1,2,\dots$ as
\begin{equation}
\label{eq:gd_pop}
\tilde{W}_{k+1}=\tilde{W}_k-\tilde{\eta}\nabla_W\mathcal{L}(\tilde{W}_k),
\end{equation}
with a learning rate $\tilde{\eta} > 0$.

\section{Reduced ODE Dynamics on Manifold}
\label{sec:ODE}

We show that the dynamics reduce to a three-dimensional manifold (Section~\ref{subsec:dyn}). 
We then introduce a common stage decomposition (Section~\ref{subsec:phase_structure}).
Proofs are deferred to Section~\ref{app:ideal}.

\subsection{Three-dimensional Manifold}\label{subsec:dyn}

We can reduce the dynamics to a \textit{three-dimensional invariant manifold}, by exploiting the symmetries of the spiked covariance model.
In particular, 
the ambient space is naturally decomposed into three components:
(i) the signal direction $w_\star$, (ii) the spike direction $v$, and (iii) the subspace orthogonal to both. %
To this end, we define the manifold
\[
\mathcal M
\coloneqq\bigl\{\, a\,w_\star w_\star^\top + b\,vv^\top + c\,P_\perp \;:\; a,b,c\in\mathbb R^+ \,\bigr\},
\] 
where $P_\perp\coloneqq I_d-w_\star w_\star^\top-vv^\top$ denotes the projection onto $\spn\{w_\star,v\}^\perp$.
We refer to the noise directions $\{P_\perp w_i : i=1,\dots,m\}$ as the \emph{bulk}.

Owing to the covariance symmetry, the dynamics can be restricted to the invariant manifold $\mathcal M$: if the initial alignment with all bulk directions is identical, this property is preserved by the gradient dynamics. With $M_k  \coloneqq  W_k W_k^\top$ and $\tilde M_k  \coloneqq  \tilde W_k \tilde W_k^\top$, the following lemma makes this precise.

\begin{restatable}{lemma}{invmanifold}
\label{lem:manifold_invariance}
If $M_0\in\mathcal M$, then $\mathcal M$ is invariant under SpecGD and GD:
$M_k,\tilde M_k\in\mathcal M$ for all $k\ge0$.
\end{restatable}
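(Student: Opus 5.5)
We argue by induction on $k$, tracking $M_k=W_kW_k^\top$ and using that all the relevant matrices lie in the commutative algebra spanned by the orthogonal projectors $P_\star\coloneqq w_\star w_\star^\top$, $P_v\coloneqq vv^\top$ and $P_\perp$. First we compute the gradient. For Gaussian $x\sim\mathcal N(0,Q)$ and symmetric $A$, Isserlis' theorem gives $\mathbb E[(x^\top A x)\,xx^\top]=\Tr(AQ)\,Q+2QAQ$. Writing $\Delta\coloneqq M-M_\star$ and $\mathcal L=\mathbb E(x^\top\Delta x)^2+\sigma^2$, we get
\[
\nabla_W\mathcal L(W)=4\,\mathbb E\bigl[(x^\top\Delta x)\,xx^\top\bigr]W=4\,G(M)\,W,\qquad G(M)\coloneqq \Tr(\Delta Q)\,Q+2Q\Delta Q .
\]
If $M\in\mathcal M$, then $\Delta=(a-1)P_\star+bP_v+cP_\perp$. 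Since $Q=P_\star+(1+\lambda)P_v+P_\perp$ is in the same algebra, $G(M)=\alpha P_\star+\beta P_v+\gamma P_\perp$ for explicit real scalars $\alpha,\beta,\gamma$. These need not be nonnegative.

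\emph{GD step.} We have $\tilde W_{k+1}=(I-4\tilde\eta G)\tilde W_k$, so
\[
\tilde M_{k+1}=(I-4\tilde\eta G)\,\tilde M_k\,(I-4\tilde\eta G).
\]
This is a product of commuting elements of the algebra. Hence $\tilde M_{k+1}=a(1-4\tilde\eta\alpha)^2P_\star+b(1-4\tilde\eta\beta)^2P_v+c(1-4\tilde\eta\gamma)^2P_\perp$, whose coefficients are automatically nonnegative.

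\emph{SpecGD step.} Set $A\coloneqq GW$. Then $A^\top A=W^\top G^2W$, and we need $\polar(A)=A(A^\top A)^{-1/2}$. The key identity is
\[
\polar(A)\,W^\top=GW\,(W^\top G^2W)^{+1/2}\,W^\top .
\]
The cleanest route is the SVD. Let $W=U\Sigma V^\top$. Because $M=U\Sigma^2U^\top\in\mathcal M$, we may choose $U$ adapted to the decomposition $\{w_\star\}\oplus\{v\}\oplus\spn\{w_\star,v\}^\perp$. Since $G$ is diagonal in that basis with eigenvalues $\alpha,\beta,\gamma$, we get $A=U(D\Sigma)V^\top$ with $D$ diagonal. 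Therefore
\[
\polar(A)=U\,\sgn(D\Sigma)\,V^\top ,
\]
using the Moore--Penrose convention, so that zero entries map to $0$.

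Then
\[
W_{k+1}=U\bigl(\Sigma-\eta\,\sgn(D\Sigma)\bigr)V^\top ,
\qquad
M_{k+1}=U\bigl(\Sigma-\eta\,\sgn(D\Sigma)\bigr)^2U^\top .
\]
On each block the diagonal entries of $\Sigma$ are the same, namely $\sqrt a$, $\sqrt b$ or $\sqrt c$, and so are those of $D$. Consequently $M_{k+1}=(\sqrt a-\eta s_\alpha)^2P_\star+(\sqrt b-\eta s_\beta)^2P_v+(\sqrt c-\eta s_\gamma)^2P_\perp$, where $s_\alpha\coloneqq\sgn(\alpha)\,\mathbf 1_{a>0}$ and similarly for $s_\beta$, $s_\gamma$. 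All three coefficients are nonnegative, so $M_{k+1}\in\mathcal M$. This exhibits the announced sign-based update on $(\sqrt a,\sqrt b,\sqrt c)$.

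The main obstacle is that $\mathcal M$ constrains only $M=WW^\top$ and not $W$, and that $m=d$ allows rank-deficient $W$. The SVD argument handles both points: the polar factor depends on $W$ only through $U$, $\Sigma$, $V$, and the resulting $M_{k+1}$ depends only on $U$, $\Sigma$. One must check that the Moore--Penrose convention is consistent when some block of $D\Sigma$ vanishes, that is, when $a=0$ or $\alpha=0$. In that case the corresponding sign is $0$ and the coefficient stays unchanged, which still lies in $\mathcal M$. Finally, the base case is $M_0=\theta^2I_d=\theta^2(P_\star+P_v+P_\perp)\in\mathcal M$.
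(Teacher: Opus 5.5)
Your proposal is correct and follows essentially the same route as the paper's proof: closure of the commutative algebra spanned by $P_\star,P_v,P_\perp$ gives the GD step via $(I-\tilde\eta G)\tilde M_k(I-\tilde\eta G)$, and an SVD of $W_k$ whose left factor diagonalizes both $M_k$ and $G(M_k)$ reduces the polar factor to blockwise signs for SpecGD. The only differences are cosmetic. You derive the gradient via Isserlis, so your $4G$ equals the paper's $G$, which is harmless because the polar factor is invariant under positive scaling. You also spell out the Moore--Penrose convention for vanishing blocks, which the paper encodes as $\Sign(g\sqrt{a})$.
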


This property follows from the invariance of the population loss under orthogonal transformations that fix the signal direction $w_\star$ and the spike direction $v$.

For any trajectory $\{M_k\}_{k \in \mathbb{N}}$ by SpecGD evolving in $\mathcal M$, there exist unique coefficient scalars
$a_k, b_k$, and $c_k$ such that
\begin{align} \label{def:coefficients}
M_k=a_k\,w_\star w_\star^\top + b_k\,vv^\top + c_k\,P_\perp .     
\end{align}
Similarly, for $\{\tilde{M}_k\}_{k \in \mathbb{N}}$ yielded by GD, we define coefficient scalars $\tilde{a}_k, \tilde{b}_k$ and $\tilde{c}_k$ as satisfying
\begin{align*}
\tilde{M}_k=\tilde{a}_k\,w_\star w_\star^\top + \tilde{b}_k\,vv^\top + \tilde{c}_k\,P_\perp.
\end{align*}

\subsection{Two Stages of Training Dynamics}\label{subsec:phase_structure}
Motivated by the reduced dynamics shown in Figure~\ref{fig:pop_abcr_side_by_side_rev} (see also Figure~\ref{fig:pop_abcr_side_by_side} for numerical simulations), we introduce two training stages and their associated transition times.
\paragraph{Stage I (growth):}
For SpecGD, we define $T_1$ as the first time at which the spike coefficient stops growing,
\[
T_1 \coloneqq \inf\{k \ge 0 : b_{k+1} \le b_k\}.
\]
This time marks the end of the initial growth regime in which all coefficients
increase, after which the spike saturates while the signal coefficient continues
to grow.
For the GD case, we similarly define $\tilde{T}_1 \coloneqq \inf\{k \ge 0 : \tilde{b}_{k+1} \le \tilde{b}_k\}$.

\paragraph{Stage II (spike saturation and alignment):}
For SpecGD, we further define $T_2$ as the time at which the signal coefficient reaches a fixed
constant level $\delta>0$ small enough,
\[
T_2 \coloneqq \inf\{k \ge T_1 : a_k \ge \delta\}.
\]
Beyond $T_2$, the signal dominates the nuisance coefficients $(b_k,c_k)$, leading to the 
alignment close to one.
We similarly define $\tilde{T}_2 \coloneqq \inf\{k \ge \tilde{T}_1 : \tilde{a}_k \ge \delta \}$ for GD.

\section{Main Results}\label{sec:main_results}

We now present our main theoretical results. We first characterize the population dynamics induced by SpecGD (Section~\ref{subsec:specgd_results}), and then compare them with the corresponding dynamics of GD (Section~\ref{subsec:gd_comparison}), highlighting the qualitative and quantitative advantages of SpecGD.

\subsection{Dynamics of SpecGD}
\label{subsec:specgd_results}

We now summarize the discrete-time population dynamics of SpecGD on the invariant manifold $\mathcal M$.
The resulting behavior follows the stage structure previously introduced and is illustrated by Figure~\ref{fig:pop_abcr_side_by_side_rev}.

\begin{theorem}[Stage~I: uniform growth] \label{thm:spec_phases_summary}
    Consider the discrete-time population SpecGD dynamics on $\mathcal M$ with the on-manifold initialization described in Section~\ref{sec:framework}.
    Choose $\eta=\kappa/\sqrt{d+\lambda}$ for a sufficiently small absolute constant $\kappa>0$.
    Then, for $d$ large enough, the following holds:
    \begin{itemize}
  \setlength{\parskip}{0cm}
  \setlength{\itemsep}{0cm}
        \item[(i)] the stage time length satisfies $T_1 = O(1)$,
        \item[(ii)] the coefficients scale as $a_k = (\sqrt{a_0}+k\eta)^2,
b_k = (\sqrt{b_0}+k\eta)^2,$ and $
c_k = (\sqrt{c_0}+k\eta)^2$ %
for all $k\le T_1$.
    \end{itemize}
\end{theorem}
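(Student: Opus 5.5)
The plan is to compute the SpecGD update exactly on $\mathcal M$. On $\mathcal M$ it reduces to a sign update on the square roots of the three coefficients. Then $T_1$ is the first time the sign of the spike component of the gradient flips, and I will bound that time.

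\textbf{Step 1 (gradient on $\mathcal M$).} For $x\sim\mathcal N(0,Q)$ and symmetric $A$, Isserlis' formula gives $\expec(x^\top A x)^2=(\Tr(QA))^2+2\Tr(QAQA)$. With $\Delta\coloneqq M-M_\star$, this yields $\nabla_M\mathcal L = G\coloneqq 2\Tr(Q\Delta)\,Q+4Q\Delta Q$ and $\nabla_W\mathcal L=2GW$. If $M=a\,w_\star w_\star^\top+b\,vv^\top+c\,P_\perp$, then $Q$, $\Delta$ and $G$ are all diagonal in the same orthogonal decomposition. Using the eigenvalues $1$, $1+\lambda$, $1$ of $Q$, and writing $T\coloneqq \Tr(Q\Delta)=a-1+(1+\lambda)b+(d-2)c$, the eigenvalues of $G$ are
\[
g_a=2T+4(a-1),\qquad g_b=2(1+\lambda)\bigl(T+2(1+\lambda)b\bigr),\qquad g_c=2T+4c .
\]

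\textbf{Step 2 (SpecGD as a sign update).} Since $m=d$ and $W_kW_k^\top=M_k$, I write $W_k=M_k^{1/2}O_k$ with $O_k$ orthogonal (initially $W_0=\theta O_0$). Because $G$ commutes with $M_k^{1/2}$ and $M_k\succ0$, we get
\[
\polar(2GM_k^{1/2}O_k)=\polar(GM_k^{1/2})\,O_k=\Sign(G)\,O_k,
\]
where $\Sign$ acts on eigenvalues, with value $0$ on the kernel in the Moore--Penrose sense. Hence $W_{k+1}=(M_k^{1/2}-\eta\Sign(G))O_k$. The same form propagates with $O_{k+1}=O_k$, and
\[
M_{k+1}=\bigl(M_k^{1/2}-\eta\Sign(G_k)\bigr)^2 .
\]
This also reproves the invariance of Lemma~\ref{lem:manifold_invariance} for SpecGD. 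In coordinates, $\sqrt{a_{k+1}}=\sqrt{a_k}-\eta\,\sgn(g_a)$, and similarly for $b$ and $c$. So statement (ii) holds for all $k\le T_1$ as soon as $g_a,g_b,g_c<0$ at every step before $T_1$. Moreover $T_1$ is exactly the first $k$ with $g_b(k)\ge0$, i.e. with $T+2(1+\lambda)b\ge0$.

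\textbf{Step 3 (signs before $T_1$, and $T_1=O(1)$).} Since $a_0=b_0=c_0=\rho_0/d$, all three coefficients stay equal during growth: $a_k=b_k=c_k=s_k^2$ with $s_k=\sqrt{\rho_0/d}+k\eta$. As long as $g_b<0$, i.e. $T+2(1+\lambda)b<0$, the other two are negative as well:
\begin{itemize}[leftmargin=*]
\item $T+2c\le T+2(1+\lambda)b<0$, so $g_c<0$;
\item $T+2(a-1)<T+2c$, which holds whenever $a<1$ (implied since $a=c$ and $T\ge a-1$ forces $a$ small while the sum is negative), so $g_a<0$.
\end{itemize}
At $k=0$ we have $T+2(1+\lambda)b=-1+O(\rho_0)+O(\lambda\rho_0/d)=-1+o(1)<0$ under Assumptions~\ref{ass:lambda}--\ref{ass:theta}, so growth starts. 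Along the trajectory, with $\eta^2=\kappa^2/(d+\lambda)$,
\[
T+2(1+\lambda)b=-1+s_k^2\bigl(d+3\lambda+O(1)\bigr)\approx -1+\kappa^2k^2\,\frac{d+3\lambda}{d+\lambda}+o(1).
\]
The ratio $\frac{d+3\lambda}{d+\lambda}$ lies in $[1,3]$. So the sign flips at some $k=\Theta(1/\kappa)$, which gives $T_1=O(1)$ for fixed $\kappa$ and large $d$. This proves (i).

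\textbf{Main obstacle.} I expect the delicate part to be the exactness of Step 2: justifying the commutation and the polar identity at every step. This requires $M_k\succ0$ throughout, which holds since the square roots increase. It also requires $G$ to have no zero eigenvalue before $T_1$; a zero spike eigenvalue would give $b_{k+1}=b_k$ and is already counted as the end of Stage~I by the definition of $T_1$. A secondary technical point is bookkeeping the $o(1)$ terms from $\rho_0$ and $\lambda/d$ so that the sign conditions hold strictly up to $T_1$; taking $\kappa$ small makes each step's change in $T$ only $O(\kappa)$, so no step overshoots badly.
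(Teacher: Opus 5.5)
Your proposal is correct and follows essentially the same route as the paper: an exact sign recursion on the square-root coordinates, then an induction showing that all three gains stay negative until $r_k+2(1+\lambda)b_k\ge 1$, which happens after $O(1/\kappa)$ steps. You obtain the recursion from the polar factorization $W_k=M_k^{1/2}O_k$, whereas the paper uses an SVD of $W_k$ in the basis $[w_\star,v,U_\perp]$; the two are equivalent.

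One point worth making explicit is that the first index with $g_b\ge 0$ really is $T_1$. At that index $\sqrt{b_k}\ge (d+3\lambda+2)^{-1/2}\ge \eta/2$ once $\kappa\le 1$, so the step $\sqrt{b_{k+1}}=|\sqrt{b_k}-\eta|$ does not increase $b$. This is also why the general recursion needs the absolute value, which your Step~2 omits; it is harmless inside Stage~I.
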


From this, we see that the initial growth stage ends in constant time independent of dimension $d$.
Furthermore, all coefficients grow quadratically with the number of updates at identical rates, in contrast with GD. This behavior stems from the sign-based update induced by the polar decomposition; see Section~\ref{sec:outline} and Proposition~\ref{prop:spec_reduction}.

We next summarize Stage~II describing the alignment phase.

\begin{theorem}[Stage~II: alignment]
    Assume the same set of the assumptions as in Theorem~\ref{thm:spec_phases_summary}.
    Then, for $d$ large enough, the following holds:
    \begin{itemize}
  \setlength{\parskip}{0cm}
  \setlength{\itemsep}{0cm}
        \item[(i)] the stage time length satisfies $T_2 = O(\eta^{-1})$,
        \item[(ii)] the coefficients scale as $a_k = (\sqrt{a_0}+k\eta)^2, b_k = O((1+\lambda)^{-1})$, and $ c_k = O(d^{-1})$ for all $k \geq T_1$.
    \end{itemize}
    Furthermore, for all $k\ge T_2$, we have
    \begin{align}
        a_k = \Theta(1) \mbox{~~and~~}\Align(k)=1-o(1).
    \end{align}
\end{theorem}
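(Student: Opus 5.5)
The plan is to reduce SpecGD on $\calM$ to an explicit sign update on the square roots of the coefficients, and then prove by induction that Stage~II is a quasi-stationary regime. In that regime the nuisance coordinates stay pinned near thresholds, while $\sqrt{a_k}$ keeps increasing by exactly $\eta$ per step.

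\textbf{Step 1 (reduced dynamics).} I first compute the population gradient. For $x\sim\mathcal N(0,Q)$ and $A=M-M_\star$, Gaussian moment identities give $\mathbb E(x^\top A x)^2=2\Tr(AQAQ)+(\Tr AQ)^2$. Hence
\[
\nabla_W\mathcal L=2GW, \qquad G=4QAQ+2s\,Q,\qquad s\coloneqq\Tr(AQ)=a-1+(1+\lambda)b+(d-2)c .
\]
For $M\in\calM$, both $Q$ and $A$ are diagonal in the decomposition $(w_\star,v,P_\perp)$. Therefore $G=g_a w_\star w_\star^\top+g_b vv^\top+g_c P_\perp$ with
\[
g_a=4(a-1)+2s,\qquad g_b=4(1+\lambda)^2b+2(1+\lambda)s,\qquad g_c=4c+2s .
\]
Since $m=d$ and $W_0=\theta U_0$ with $U_0$ orthogonal, I write $W_k=M_k^{1/2}U_0$, where $M_k^{1/2}$ commutes with $G$. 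Then
\[
\polar(GW_k)=\polar\bigl(GM_k^{1/2}\bigr)U_0=\Sign(G)\,U_0
\]
whenever the coefficients are positive. This is the reduction referred to as Proposition~\ref{prop:spec_reduction}, and it yields
\[
\sqrt{a_{k+1}}=\bigl|\sqrt{a_k}-\eta\,\sgn(g_a)\bigr|,
\]
with the analogous updates for $b$ and $c$. From Theorem~\ref{thm:spec_phases_summary} I take as given that all three square roots grow by $\eta$ up to $T_1=O(1)$. At $T_1$ the sign of $g_b$ has just flipped, that is, $(1+\lambda)b_{T_1}\approx -s_{T_1}/2$.

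\textbf{Step 2 (invariant band for $s$).} The key quantity is $s$. I claim that for all $k\ge T_1$ it stays in a band $-C\kappa\le s_k\le C\kappa$, and that $s_k<0$ whenever $c$ is increasing.

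The reason is that one step changes $(d-2)c$ by about $2(d-2)\sqrt{c}\,\eta$. As long as $c=O(1/d)$, this is $O(\sqrt d\,\eta)=O(\kappa)$. Likewise, $(1+\lambda)b$ changes by $O(\sqrt{1+\lambda}\,\eta)=O(\kappa)$ when $b=O(1/(1+\lambda))$, and $a$ changes by $O(\eta)$.

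The sign rules then act as restoring forces. The bulk coefficient $c$ decreases whenever $s>-2c$ and increases otherwise, so $s$ is pushed back toward $[-2c,0]\approx 0$. Similarly, $b$ decreases once $(1+\lambda)b>-s/2$. I will prove by a joint induction on $k\ge T_1$ the three statements
\[
|s_k|\le C\kappa,\qquad b_k\le C/(1+\lambda),\qquad c_k\le C/d .
\]
The bounds on $b$ and $c$ follow from the band for $s$ together with the $\pm\eta$ oscillation around the thresholds: $\sqrt{b}$ oscillates around roughly $\sqrt{|s|/(2(1+\lambda))}$, and $(d-2)c$ is at most $1-a+O(\kappa)$. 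Conversely, the bounds on $b$ and $c$ give the small step increments that keep $s$ in its band. I expect this coupled induction to be the main obstacle. In particular, I must rule out $s$ overshooting when $b$ and $c$ reverse simultaneously, and I must handle the possibility that a square root hits $0$; the absolute value in the update makes that harmless. I will first try a Lyapunov-type argument on $|s_k|$, using that each nuisance coordinate moves $s$ in the restoring direction whenever $|s|$ exceeds $C\kappa$.

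\textbf{Step 3 (signal growth, $T_2$, alignment).} Given the band, $g_a=4(a-1)+2s<0$ as long as $a<1-C'\kappa$. So $\sqrt{a_{k+1}}=\sqrt{a_k}+\eta$, which gives $a_k=(\sqrt{a_0}+k\eta)^2$ on this range. This is statement (ii) for $a$, understood up to the saturation level $1-O(\kappa)$. Since $a_0=\rho_0/d$, the level $\delta$ is reached after $k\approx\sqrt\delta/\eta$ steps, so $T_2=O(\eta^{-1})$.

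After $T_2$, $\sqrt{a_k}$ increases until $g_a$ changes sign near $a\approx 1-s/2$ and then oscillates with amplitude $\eta$, so $a_k=\Theta(1)$. Finally, I write
\[
\Align=\frac{a}{\sqrt{a^2+b^2+(d-2)c^2}} .
\]
Using $b^2=O((1+\lambda)^{-2})=o(1)$ (by Assumption~\ref{ass:lambda}) and $(d-2)c^2=O(1/d)$, we get $\Align(k)=1-o(1)$ for all $k\ge T_2$.
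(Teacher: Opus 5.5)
\textbf{The gap is Step~2, exactly where you expected the difficulty.} The band $|s_k|\le C\kappa$ for all $k\ge T_1$ is false at its base case for part of the allowed range of $\lambda$. On the isotropic segment $a=b=c=\alpha^2$, and $T_1$ is the first index with $(d+3\lambda+2)\alpha^2\ge 1$. Hence $r_{T_1}\approx (d+\lambda)/(d+3\lambda+2)$ and $s_{T_1}\approx -2(1+\lambda)/(d+3\lambda+2)$, which matches your own relation $(1+\lambda)b_{T_1}\approx -s_{T_1}/2$. For $\lambda\asymp d$, which Assumption~\ref{ass:lambda} allows, this is about $-1/2$, not $O(\kappa)$. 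In general $s$ enters an $O(\kappa)$ window only after a further transient, during which $c$ inflates until $a+(d-2)c\approx 1$. Everything in your Step~3 is derived ``from the band'': the bounds on $b$ and $c$, the sign of $g_a$ up to $1-C'\kappa$, and the oscillation of $a$ near $1-s/2$. The inductive step is also only announced (no overshoot when $b$ and $c$ reverse together, Lyapunov control of $|s_k|$). So as written the argument does not establish (ii) or the Stage~II conclusions.

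\textbf{The repair is that you need no two-sided control of $s$.} One-sided barriers coming from nonnegativity suffice, and this is how the paper argues.
\begin{itemize}
\item[] \emph{Spike.} If $g_b<0$, then $a+3(1+\lambda)b+(d-2)c<1$, so $\sqrt{b_k}<1/\sqrt{3(1+\lambda)}$ and $\sqrt{b_{k+1}}\le 1/\sqrt{3(1+\lambda)}+\eta$. If $g_b\ge 0$, then $\sqrt{b_{k+1}}\le\max\{\sqrt{b_k},\eta\}$. Hence $(1+\lambda)b_k\le C_b\coloneqq(1/\sqrt3+\kappa)^2$ for all $k$.
\item[] \emph{Bulk.} In the same way, $g_c<0$ forces $d\,c<1$, which gives $(d-2)c_k\le C_c\coloneqq(1+\kappa)^2$.
\item[] \emph{Signal growth.} It follows that $r_k+2a_k\le 3a_k+C_b+C_c$, so $g_a<0$ whenever $a_k<A_0\coloneqq 1-(C_b+C_c)/3\approx 5/9$. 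Thus $\sqrt{a_k}$ grows by exactly $\eta$ per step up to that level, which gives $T_2=O(\eta^{-1})$ for any $\delta<A_0$.
\item[] \emph{Lower barrier on $a$.} The same sign fact gives the discrete barrier $a_k\ge\min\{\delta,(\sqrt{A_0}-\eta)^2\}$ for all $k\ge T_2$.
\item[] \emph{Upper bound on $a$.} For the upper half of $a_k=\Theta(1)$, note that $g_a>0$ whenever $a>1$, so $\sqrt{a_k}\le 1+\eta$ always.
\end{itemize}
Your final alignment computation then goes through unchanged with these bounds.
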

This result indicates that in Stage~II, only $a_k$ continues to increase until it reaches constant order at time $T_2$, while the increase in the nuisance coefficients $(b_k,c_k)$ are uniformly bounded at their natural scales. 
As a result, after Stage~II, the alignment converges to one.

\subsection{Comparison with GD}
\label{subsec:gd_comparison}

We characterize the population GD dynamics on $\mathcal M$ through two successive training stages.
\begin{theorem}\label{thm:spec_phases_summary_gd}
    Consider the GD dynamics on $\mathcal M$.
    Choose a learning rate ${\tilde\eta} \le 1/(16(1+\lambda))$.
    Then, for $d$ large enough,  the following holds:
\begin{itemize}
  \setlength{\topsep}{0pt}
  \setlength{\parskip}{0cm}
  \setlength{\itemsep}{0cm}
    \item Stage~I (growth)
    \begin{itemize}
        \item[(i)] the time length satisfies $\tilde{T}_1=O(({\tilde\eta}\lambda)^{-1}\log d)$,
        \item[(ii)] the spike coefficient $\tilde{b}_k$ evolves as $\tilde{b}_{k+1} \gtrsim (1+\Omega(\tilde \eta))^k \tilde{b}_0$ 
        and 
        \begin{align*}
            \max\left\{\frac{\tilde{a}_k}{\tilde{b}_k}, \frac{\tilde{c}_k}{\tilde{b}_k} \right\} = O((1-\delta)^k), 
        \end{align*}
        with some $\delta \in (0,1)$.
    \end{itemize}
\item {Stage~II (noise saturation and alignment).}
\begin{itemize}
        \item[(i)] the time length satisfies $\tilde{T}_2=O({\tilde\eta}^{-1}\log d)$,
        \item[(ii)] the signal coefficient $\tilde{a}_k$ evolves as $\tilde{a}_{k+1} \gtrsim (1+\Omega(\tilde\eta))^k \tilde{a}_0$, while the other coefficients satisfy $\max\{\tilde{b}_k,\tilde{c}_k\} = O(1)$.
    \end{itemize}
\end{itemize}
    Furthermore, for all $k\ge \tilde{T}_2$, we have $\tilde{a}_k = \Theta(1)$ and $\max\{\tilde{b}_k,\tilde{c}_k\} = o(1)$.
\end{theorem}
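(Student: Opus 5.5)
The plan is to write the GD recursion in closed form on $\mathcal M$, then analyze it as three multiplicative scalar recursions coupled only through one scalar $s_k$.

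\textbf{Reduced recursion.} For $x\sim\mathcal N(0,Q)$ and $D\coloneqq M-M_\star$, Gaussian moment identities give $\mathcal L = 2\Tr(DQDQ)+(\Tr DQ)^2+\sigma^2$. Hence $\nabla_M\mathcal L = 4QDQ+2\Tr(DQ)\,Q$ and $\nabla_W\mathcal L = 2\nabla_M\mathcal L\,W$. The GD step is therefore $\tilde W_{k+1}=(I-\tilde\eta G_k)\tilde W_k$ with $G_k=8QD_kQ+4s_kQ$, so $\tilde M_{k+1}=(I-\tilde\eta G_k)\tilde M_k(I-\tilde\eta G_k)$. On $\mathcal M$, the matrices $Q$, $D_k$ and $G_k$ are simultaneously diagonal in the decomposition $(w_\star, v, P_\perp)$, with $Q$-eigenvalues $(1,1+\lambda,1)$. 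This also recovers Lemma~\ref{lem:manifold_invariance} for GD. Writing
\[
s_k\coloneqq \tilde a_k-1+(1+\lambda)\tilde b_k+(d-2)\tilde c_k,
\]
the dynamics become
\begin{align*}
\tilde a_{k+1}&=\tilde a_k\bigl(1-\tilde\eta(8(\tilde a_k-1)+4s_k)\bigr)^2,\\
\tilde b_{k+1}&=\tilde b_k\bigl(1-\tilde\eta(8(1+\lambda)^2\tilde b_k+4(1+\lambda)s_k)\bigr)^2,\\
\tilde c_{k+1}&=\tilde c_k\bigl(1-\tilde\eta(8\tilde c_k+4s_k)\bigr)^2.
\end{align*}
The condition $\tilde\eta\le 1/(16(1+\lambda))$ keeps every bracket positive and rules out overshoot.

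\textbf{Stage I.} Initially $\tilde a_0=\tilde b_0=\tilde c_0=\rho_0/d$, so $s_0=-1+O(\rho_0)$. The per-step multipliers are then approximately $(1+12\tilde\eta)^2$ for $\tilde a$, $(1+4(1+\lambda)\tilde\eta)^2$ for $\tilde b$, and $(1+4\tilde\eta)^2$ for $\tilde c$. Since $\lambda\gtrsim\log d$ is large, the ratios $\tilde a_k/\tilde b_k$ and $\tilde c_k/\tilde b_k$ contract geometrically, giving the $(1-\delta)^k$ bound. The spike stops growing once $2(1+\lambda)\tilde b_k\ge -s_k$, i.e.\ once $\tilde b_k\asymp 1/\lambda$. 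Starting from $\rho_0/d$ at rate $e^{\Theta(\lambda\tilde\eta)}$ per step, this takes $O((\tilde\eta\lambda)^{-1}\log d)$ steps.

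To justify freezing $s_k\approx -1$ during this stage, I will run a bootstrap on the window $k\le \tilde T_1$. Over the window, $\tilde a$ and $\tilde c$ grow by at most $(d/\lambda)^{O(1/\lambda)}=O(1)$, because $\lambda\gtrsim\log d$. Consequently $d\tilde c_k=O(\rho_0)$, $\tilde a_k=O(1/d)$, and the only non-negligible contribution to $s_k+1$ is $(1+\lambda)\tilde b_k$, which stays at most a constant fraction of $1$ up to $\tilde T_1$.

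\textbf{Stage II.} I will first show that $\tilde b$ is trapped near the moving equilibrium $(1+\lambda)\tilde b=-s/2$. Near it, the $\tilde b$-bracket has slope of order $\tilde\eta\lambda^2\tilde b\asymp\tilde\eta\lambda\le 1/16$, so $\tilde b$ tracks it stably, and $\tilde b_k=O(1/\lambda)=O(1)$. Substituting the equilibrium gives $s\approx\tfrac23(\tilde a-1+(d-2)\tilde c)\in[-\tfrac23,0]$. Then the multiplier of $\tilde a$ is at least $(1+\Omega(\tilde\eta))^2$ as long as $\tilde a\le\delta$, so $\tilde a$ reaches $\delta$ after $O(\tilde\eta^{-1}\log d)$ further steps.

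The step I expect to be the main obstacle is controlling the aggregated bulk mass $(d-2)\tilde c_k$. Its multiplier $(1-4\tilde\eta s)^2$ exceeds $1$ while $s<0$. Over $\tilde\eta^{-1}\log d$ steps it can inflate $d\tilde c$ by a power $d^{\Theta(1)}$, which would push $s$ toward $0$ and slow $\tilde a$. I would handle this with a monotonicity/comparison argument rather than exact tracking. First, $s\le 0$ is preserved: once $s$ approaches $0$, the brackets of both $\tilde b$ and $\tilde c$ turn nonnegative, so they stop growing, giving $\tilde c_k\le O(1/d)$. Second, the growth rate of $\tilde a$ always dominates that of $\tilde c$ by the additive term $8\tilde\eta(1-\tilde a)$, which is independent of $s$. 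Because of this gap, $\tilde a$ still gains a factor $(1+\Omega(\tilde\eta))$ per step even when $s$ is close to $0$, so $\tilde T_2=O(\tilde\eta^{-1}\log d)$ survives.

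\textbf{Final phase.} For $k\ge\tilde T_2$, $\tilde a$ is of constant order and increases toward the fixed point where $s\approx0$ and $\tilde a\approx 1$. At that point the $\tilde b$-bracket equals $1-8\tilde\eta(1+\lambda)^2\tilde b$ and the $\tilde c$-bracket equals $1-8\tilde\eta\tilde c$, both below $1$, so $\tilde b,\tilde c$ can no longer grow and $\tilde b_k=O(1/\lambda)$, $\tilde c_k=O(1/d)$. Since $\lambda\gtrsim\log d$, this gives $\max\{\tilde b_k,\tilde c_k\}=o(1)$ while $\tilde a_k=\Theta(1)$.
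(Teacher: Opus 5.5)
\paragraph{Gap 1: the spike turning time.} Your argument never makes the spike turn, so it does not bound $\tilde T_1$. Write $B_k=(1+\lambda)\tilde b_k$, $C_k=(d-2)\tilde c_k$ and $K_k=(1-\tilde a_k-C_k)/3$. The $\tilde b$-multiplier is $\bigl(1+12\tilde\eta(1+\lambda)(K_k-B_k)\bigr)^2$, so the rate $e^{\Theta(\tilde\eta\lambda)}$ you invoke degenerates as $B_k\uparrow K_k$.

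Your bootstrap freezes $\tilde a$ and $C$ as negligible. Then $K_k\equiv K\le\frac13$, and $f(B)=B(1+\gamma(K-B))^2$ with $\gamma=12\tilde\eta(1+\lambda)\le\frac34$ is increasing on $[0,K]$ with $f(B)>B$ there. Hence $B_k$ increases strictly to $K$ without reaching it, $\tilde b_{k+1}\le\tilde b_k$ never occurs, and $\tilde T_1=\infty$.

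The crossing is caused precisely by the slow terms you discard. Because the bulk keeps growing, $K_k-K_{k+1}\ge\frac{2\tilde\eta}{3}C_k\gtrsim\tilde\eta\rho_0$. For $E_k=K_k-B_k$ the exact identity $E_{k+1}=(1-24\tilde\eta(1+\lambda)B_k)E_k-(K_k-K_{k+1})-144\tilde\eta^2(1+\lambda)^2B_kE_k^2$ then gives $E_{k+1}\le(1-\alpha)E_k-\beta$ with $\alpha\asymp\tilde\eta\lambda$ and $\beta\asymp\tilde\eta\rho_0$. This forces $E_k\le 0$ after $\Theta\bigl((\tilde\eta\lambda)^{-1}\log(\lambda/\rho_0)\bigr)$ steps, which is the paper's Phase~Ib (Section~\ref{subsubsec:gd_discrete_stageII}).

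This is not cosmetic. For $\lambda\asymp d$ your exponential-growth count is only $O((\tilde\eta\lambda)^{-1}\log\log d)$, and the $\log d$ in $\tilde T_1$ comes entirely from this drift-driven crossing. For the same reason your bootstrap is circular as stated: it uses the window length it is meant to produce. It has to be closed as in the paper, by checking that $r_k\le\frac12$ persists for as many steps as the affine contraction needs to drive $E_k$ below zero.

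\paragraph{Gap 2: the claim that $s_k\le0$ is preserved.} This is a second, smaller error. You account for $\tilde b$ and $\tilde c$ stopping, but $\tilde a$ also enters $s_k$ and keeps growing. Suppose $s_k\le0$ held for all $k$. The $\tilde a$-multiplier would be at least $(1+8\tilde\eta(1-\tilde a_k))^2$, so $1-\tilde a_k$ would decay geometrically. The bulk multiplier would be at least $(1-8\tilde\eta\tilde c_k)^2$, so $C_k$ could decay at most harmonically. Eventually $r_k\ge\tilde a_k+C_k>1$, a contradiction.

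What you need instead is the forward-invariant box $0\le\tilde a_k\le1$, $0\le B_k\le\frac13$, $0\le C_k\le1$ (Lemma~\ref{lem:gd_discrete_uniform_barriers}, proved via monotonicity of $x\mapsto x(1+t(1-x))^2$ on $[0,1]$ for $t\le\frac12$). The box is also what justifies your opening claim that every bracket stays positive; that does not follow from $\tilde\eta\le1/(16(1+\lambda))$ alone. From the box you get:
\begin{itemize}
\item $\tilde b_k\le\frac{1}{3(1+\lambda)}$ and $\tilde c_k\le\frac1{d-2}$ for every $k$;
\item $3-2\tilde a_k-r_k=3-3\tilde a_k-B_k-C_k\ge\frac53-3\delta$ whenever $\tilde a_k\le\delta$, whatever the sign of $s_k$;
\item $\tilde a_k$ can decrease only when $\tilde a_k>\frac59$, which together with a one-step bound on the drop gives $\tilde a_k=\Theta(1)$ after $\tilde T_2$.
\end{itemize}

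\paragraph{Remark on Stage~II.} With these repairs your Stage~II is actually leaner than the paper's. The single multiplier bound above gives $\tilde T_2=O(\tilde\eta^{-1}\log d)$ directly. Neither your tracking of $\tilde b$ near the moving equilibrium nor the paper's split into bulk inflation (logistic comparison for $C_k$) and signal amplification is needed for the statement as written. The paper's finer decomposition buys the additional exit information it reports.
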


Based on the estimates for $\tilde T_1$ and $\tilde T_2$, GD requires more time than SpecGD to progress through both Stage~I and Stage~II, with this gap becoming more pronounced as the dimension $d$ increases. This slowdown stems from the stage-wise dynamics of $(\tilde a_k,\tilde b_k,\tilde c_k)$. During Stage~I, GD is dominated by spike-driven growth of $\tilde b_k$, which increases geometrically faster than the other coefficients (see Proposition~\ref{thm:gd_phases_summary}). As a result, the signal alignment can deteriorate in this phase, as quantified in Corollary~\ref{cor:gd_align_T1_compact} in the appendix. In Stage~II, the signal component $\tilde a_k$ grows only after a longer delay, leading to a delayed improvement in alignment. Overall, both stages last longer under GD than under SpecGD.

\begin{figure*}[t]
  \centering
  \includegraphics[width=\textwidth]{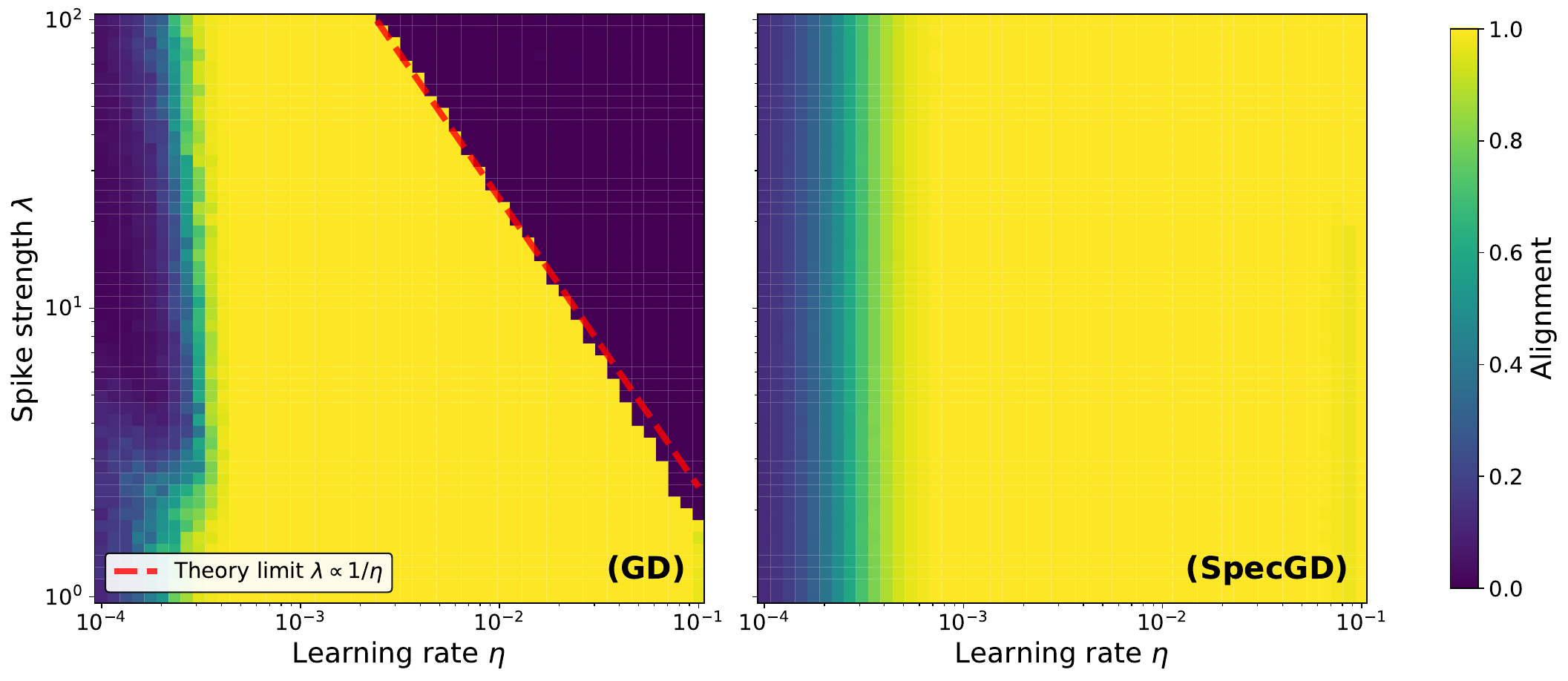}
\caption{Final Frobenius alignment for GD (\textbf{left}) and SpecGD (\textbf{right}). Parameters: $d=300$, $m=100$, $T=1000$, and $\rho_0=10^{-2}$.}

  \label{fig:heatmap_align_eta_lambda}
\end{figure*}

\section{Proof Outline}\label{sec:outline}

In this section, we outline the proofs of the results presented in Section~\ref{sec:main_results}. 
For clarity, we focus on the \emph{continuous-time} population dynamics, where the underlying mechanisms are more transparent.
We begin by introducing the corresponding continuous-time gradient flows (GFs):
\begin{align}
\dot W(t) &= - \mathrm{polar}\bigl(\nabla_W \mathcal{L}(W(t))\bigr), \tag{SpecGF}\\
\dot{\tilde W}(t) &= - \nabla_W \mathcal{L}(\tilde W(t)). \tag{GF} 
\end{align}
When $M(t)\in \calM$ we write
\[
M(t) \coloneqq W(t)W(t)^\top
= a(t)\, w_\star w_\star^\top
+ b(t)\, v v^\top
+ c(t)\, P_\perp .
\]
The same decomposition applies to GF, with $\tilde M(t)\coloneqq \tilde W(t)\tilde W(t)^\top$
and coefficients $\tilde a(t)$, $\tilde b(t)$, and $\tilde c(t)$.

\subsection{Reduced ODE on $\calM$}

 Lemma~\ref{lem:manifold_invariance} is also valid in continuous time (see the proof in Section~\ref{app:ideal}), hence the population dynamics generated by either GF or SpecGF remain confined to $\mathcal M$ for all $t\ge 0$ whenever $M(0)\in\mathcal M$.
Consequently, the evolution of $M(t)$ is fully characterized by the three scalar coefficients $a(t)$, $b(t)$, and $c(t)$.
We now make this reduction explicit for GF by deriving closed systems of ordinary differential equations governing the dynamics of $(\tilde a(t),\tilde b(t),\tilde c(t))$. The detailed proofs are deferred to Section~\ref{app:ideal}.

\begin{restatable}{proposition}{gdreduction}
\label{prop:gd_reduction}
The dynamics of GF is described by the following ODEs:
\begin{align}
\label{eq:gd_abc_ode}
\dot{\tilde a}(t) &= -2 g_{w_\star}(t)\tilde a(t), \\
\dot{\tilde b}(t) &= -2 g_v(t)\tilde b(t), \\
\dot{\tilde c}(t) &= -2 g_\perp(t)\tilde c(t),
\end{align}
where
\begin{equation}
\label{eq:g_eigs}
\begin{aligned}
g_{w_\star}(t) &\coloneqq 8(\tilde a(t)-1)+4(\tilde r(t)-1),\\
g_v(t) &\coloneqq 8(1+\lambda)^2 \tilde b(t)+4(1+\lambda)(\tilde r(t)-1),\\
g_\perp(t) &\coloneqq 8\tilde c(t)+4(\tilde r(t)-1),
\end{aligned}
\end{equation}
and $\tilde r(t)\coloneqq\tilde a(t)+(1+\lambda)\tilde b(t)+(d-2)\tilde c(t)$.
\end{restatable}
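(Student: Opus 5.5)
The plan is to compute the Euclidean gradient of the population loss in closed form, show that on $\mathcal M$ it is simultaneously diagonal with $M$ in the basis $\{w_\star, v, P_\perp\}$, and then read off the coefficient ODEs from $\dot M = \dot W W^\top + W\dot W^\top$.

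\emph{Step 1 (loss as a function of $M$).} Write $\Delta \coloneqq M - M_\star$, which is symmetric. For $x\sim\mathcal N(0,Q)$ I will use the Gaussian identity (Isserlis/Wick)
\[
\mathbb E\bigl(x^\top \Delta x\bigr)^2 = 2\Tr(\Delta Q\Delta Q) + \bigl(\Tr(\Delta Q)\bigr)^2 .
\]
This gives $\mathcal L = 2\Tr(\Delta Q\Delta Q) + \Tr(\Delta Q)^2 + \sigma^2$. Differentiating in the symmetric matrix $M$ gives
\[
\nabla_M\mathcal L = G \coloneqq 4Q\Delta Q + 2\Tr(\Delta Q)\,Q .
\]
By the chain rule through $M = WW^\top$, $\nabla_W \mathcal L = 2GW$.

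\emph{Step 2 (flow of $M$).} Under GF we have $\dot{\tilde W} = -2G\tilde W$, hence $\dot{\tilde M} = -2(G\tilde M + \tilde M G)$. By Lemma~\ref{lem:manifold_invariance}, which holds in continuous time as noted, $\tilde M(t)\in\mathcal M$. Moreover $Q = w_\star w_\star^\top + (1+\lambda)vv^\top + P_\perp$ and $M_\star = w_\star w_\star^\top$ lie in the same commutative algebra spanned by the three orthogonal projectors $w_\star w_\star^\top$, $vv^\top$, $P_\perp$. This uses $v\perp w_\star$. Therefore $G$, $\Delta$, $Q$ and $\tilde M$ all commute, and $\dot{\tilde M} = -4G\tilde M$. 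Consequently each coefficient obeys $\dot{\tilde a} = -4G_{w_\star}\tilde a$, and similarly for $\tilde b$ and $\tilde c$, where $G_\bullet$ denotes the eigenvalue of $G$ on the corresponding projector.

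\emph{Step 3 (eigenvalues of $G$).} On this algebra $\Delta$ has eigenvalues $\tilde a-1$, $\tilde b$, $\tilde c$, and $Q$ has eigenvalues $1$, $1+\lambda$, $1$. Since $\operatorname{rank}P_\perp = d-2$,
\[
\Tr(\Delta Q) = (\tilde a-1) + (1+\lambda)\tilde b + (d-2)\tilde c = \tilde r - 1 .
\]
Hence
\[
G_{w_\star} = 4(\tilde a-1) + 2(\tilde r-1), \qquad G_v = 4(1+\lambda)^2\tilde b + 2(1+\lambda)(\tilde r-1), \qquad G_\perp = 4\tilde c + 2(\tilde r-1).
\]
Setting $g_\bullet = 2G_\bullet$ turns $-4G_\bullet$ into $-2g_\bullet$ and reproduces \eqref{eq:g_eigs} together with \eqref{eq:gd_abc_ode}.

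The only delicate points are bookkeeping ones. The first is the factor conventions in the gradient: the symmetric-matrix derivative and the factor $2$ from $WW^\top$. The second is to justify the commutation step, namely that $GM + MG = 2GM$ on $\mathcal M$, which is exactly where the invariance lemma and the orthogonality $v\perp w_\star$ are used. I expect the normalization of constants to be the main place where errors could creep in, so I would double-check it against the stated form of $g_{w_\star}$.
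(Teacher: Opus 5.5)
Your proposal is correct and follows essentially the same route as the paper: the Wick formula for the loss, the chain rule through $M=WW^\top$, commutativity on $\mathcal M$ (via the invariance lemma and $v\perp w_\star$) to reduce $\dot M$ to a multiple of $GM$, and then reading off the eigenvalues of $G$ on the three projectors. The only difference is cosmetic. The paper absorbs the chain-rule factor $2$ into $G(M)=8QCQ+4\Tr(CQ)\,Q$, so that $\nabla_W\mathcal L=G(M)W$ and $\dot M=-2G(M)M$; this $G(M)$ is exactly your $2G$, and your constants check out.
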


\begin{remark}
    The quantity $\tilde r$ admits the form $\Tr(\tilde MQ)$, which represents the ``mass'' of the network and directly controls the coupling between the three coefficients.
    In the small random initialization regime, $\tilde r(0)=o(1)$ by choice of the scaling $\theta$. This ensures that the ODE system is decoupled during early training.   
\end{remark}

We next derive the corresponding reduction for SpecGF. In contrast to GF, the resulting dynamics are scale-invariant and depend only on the signs of the pre-gradients $g_{w_\star}(t)$, $g_v(t)$, and $g_\perp(t)$.

\begin{restatable}{proposition}{specreduction}
\label{prop:spec_reduction}
The dynamics of SpecGF is described by the following ODEs:
\begin{equation}
\label{eq:spec_abc}
\begin{aligned}
\dot a(t) &= -2\,\sgn(g_{w_\star}(t))\,\sqrt{a(t)},\\
\dot b(t) &= -2\,\sgn(g_v(t))\,\sqrt{b(t)},\\
\dot c(t) &= -2\,\sgn(g_\perp(t))\,\sqrt{c(t)},
\end{aligned}
\end{equation}
with the convention $\sgn(0)\coloneqq0$.
Equivalently, in square-root variables $\alpha(t)\coloneqq\sqrt {a(t)}$, $\beta(t)\coloneqq\sqrt{b(t)}$, and $\gamma(t)\coloneqq\sqrt{c(t)}$,
\begin{align}\label{eq:spec_sqrt}
\dot\alpha(t)&=-\sgn(g_{w_\star}(t)),\\
\dot\beta(t)&=-\sgn(g_v(t)),\\
\dot\gamma(t)&=-\sgn(g_\perp(t)).
\end{align}
\end{restatable}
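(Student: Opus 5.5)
The plan is to compute $\nabla_W\mathcal L$ explicitly on $\mathcal M$, observe that it has the form $HW$ with $H$ diagonal in the basis $\{w_\star, v, P_\perp\}$, and then evaluate the polar factor in closed form using the fact that $H$ and $M=WW^\top$ commute. By Lemma~\ref{lem:manifold_invariance} (in its continuous-time form), $M(t)\in\mathcal M$ for all $t$, so it suffices to derive the ODE for $M$ at a point of $\mathcal M$.

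\textbf{Step 1: the gradient on $\mathcal M$.} Since $\mathcal L$ depends on $W$ only through $M$, we have $\nabla_W\mathcal L = 2\,G(M)\,W$ with $G(M)\coloneqq\nabla_M\mathcal L$ symmetric. Isserlis' theorem for $x\sim\mathcal N(0,Q)$ gives
\[
G(M)=2\,\mathbb E\bigl[x^\top(M-M_\star)x\; xx^\top\bigr]=4\,Q(M-M_\star)Q+2\,\Tr\bigl((M-M_\star)Q\bigr)\,Q .
\]
For $M\in\mathcal M$, both $Q$ and $M-M_\star$ are diagonal in the orthogonal decomposition $w_\star w_\star^\top$, $vv^\top$, $P_\perp$, with $Q$-eigenvalues $1$, $1+\lambda$, $1$. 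Moreover, $\Tr((M-M_\star)Q)=\tilde r-1$ with $r=a+(1+\lambda)b+(d-2)c$. Hence
\[
\nabla_W\mathcal L = HW,\qquad H=\kappa_0\bigl(g_{w_\star}w_\star w_\star^\top+g_v vv^\top+g_\perp P_\perp\bigr)
\]
for a fixed positive constant $\kappa_0$, with $g_{w_\star},g_v,g_\perp$ as in \eqref{eq:g_eigs}. This is exactly the computation underlying Proposition~\ref{prop:gd_reduction}, and matching the constant there gives $\kappa_0=1/2$. Only the signs of the $g$'s will matter, so the exact positive constant is irrelevant for SpecGF.

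\textbf{Step 2: closed form of the polar factor.} From the SVD $A=U\Sigma V^\top$, restricted to nonzero singular values, we have
\[
\polar(A)=A(A^\top A)^{+1/2}=(AA^\top)^{+1/2}A .
\]
With $A=HW$ this gives $AA^\top=HMH=H^2M$, because $H$ and $M$ are simultaneously diagonal. Therefore $(AA^\top)^{+1/2}=|H|^{+}M^{+1/2}$, and
\[
\polar(\nabla_W\mathcal L)=|H|^{+}M^{+1/2}HW=\Sign(H)\,M^{+1/2}W ,
\]
where $\Sign(H)$ is the spectral sign, with $\Sign(0)=0$, again using commutation.

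\textbf{Step 3: the ODE for $M$ and its square-root form.} Write $S\coloneqq\Sign(H)$. Then
\[
\dot M=\dot WW^\top+W\dot W^\top=-\bigl(SM^{+1/2}M+MM^{+1/2}S\bigr)=-2\,S\,M^{1/2},
\]
using $M^{+1/2}M=M^{1/2}$ and commutation once more. Reading off coefficients in the basis gives $\dot a=-2\sgn(g_{w_\star})\sqrt a$, together with the analogous equations for $b$ and $c$ (note $\sqrt{P_\perp}=P_\perp$). Whenever $a>0$, the square-root form \eqref{eq:spec_sqrt} follows from $\dot\alpha=\dot a/(2\alpha)$, and similarly for $\beta$ and $\gamma$.

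\textbf{Main obstacle.} The delicate step is Step~2: justifying the Moore--Penrose manipulations when some coefficient or some $g$ vanishes. I would handle this by working blockwise on the three eigenspaces. On each block, $HW$ is a scalar multiple of the corresponding block of $W$, so its polar factor is either $\pm$ the polar factor of that block of $W$, or $0$ when the scalar $g$ vanishes. This reproduces the convention $\sgn(0)=0$ and the identity $M^{+1/2}M=M^{1/2}$ block by block.
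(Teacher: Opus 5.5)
Your proof is correct and is essentially the paper's argument: both use that on $\mathcal M$ the gradient is $G(M)W$ with $G(M)$ simultaneously diagonal with $M$ in the basis $[w_\star,v,U_\perp]$, so the polar factor acts as the spectral sign on each block. The paper makes this explicit with an SVD $W=U[D\;0]V^\top$ and orthogonal equivariance of $\polar$, whereas your identity $\polar(HW)=\Sign(H)M^{+1/2}W$ is the coordinate-free form of the same computation. One harmless slip: since $\nabla_W\mathcal L=2\nabla_M\mathcal L\,W=(8QCQ+4\Tr(CQ)Q)W$, matching Proposition~\ref{prop:gd_reduction} gives $\kappa_0=1$, not $1/2$, which, as you note, does not affect the signs.
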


\begin{remark}
The reduced dynamics of SpecGF differ qualitatively from those of GF. While GF induces multiplicative, scale-dependent updates driven by data anisotropy, SpecGF depends only on the sign of the reduced coefficients, yielding scale-invariant dynamics on $\mathcal M$. This mechanism should not be confused with signed gradient descent (often used as a proxy for Adam), where the sign operator is applied in the canonical Euclidean basis.
In contrast, SpecGF applies the sign operation in a basis that adapts to the data geometry and the current iterate.
\end{remark}

We now describe the main proof ideas. The detailed proofs for GF (resp.\ SpecGF) are deferred to
Section~\ref{app:continuous_gd} (resp.\ Section~\ref{app:continuous_specgd}).
The discrete-time cases are treated in Sections~\ref{app:discrete_gd} and~\ref{app:discrete_specgd}, respectively.

\subsection{Stage~I: escape from the small random initialization regime}

For GF, our initialization $\tilde W(0)/\theta \in \mathrm{St}(d,d)$ yields
\[ \tilde a(0)= \tilde b(0) = \tilde c(0)= \theta^2 \text{ and } \tilde r(0)=(d+\lambda)\theta^2=O(\rho_0).\]
Consequently, as long as these quantities remain small, the pre-gradient in \eqref{eq:g_eigs}
are approximately constant
\[
g_{w_\star}(t)\approx -12,\,
g_v (t)\approx -4(1+\lambda),
\, \text{~and~} \,
g_\perp (t) \approx -4.
\]
Plugging these %
into the reduced GF system~\eqref{eq:gd_abc_ode} and the SpecGF system~\eqref{eq:spec_abc}
yields the exponential growth in Stage~I described in
Theorem~\ref{thm:spec_phases_summary_gd} and the quadratic growth described in
Theorem~\ref{thm:spec_phases_summary}, respectively. This is illustrated by Figure~\ref{fig:pop_abcr_side_by_side} in the appendix, and see also Figure~\ref{fig:pop_abcr_side_by_side_rev}.
Since SpecGF only depends on the signs of the (rescaled) spike coefficient $g_v$,
the influence of $\lambda$ is mild,
whereas GF is directly affected by $\lambda$ to amplify the spike coefficient $\tilde b(t)$ exponentially.

For GF, however, this approximation, obtained by freezing the coefficients $g_{w_\star}(t)$, $g_v(t)$, and $g_\perp(t)$ at their initial values, is not sufficient to describe the entire escape stage.
Due to the much faster growth of the spike coefficient, the coupling through $\tilde r(t)$ becomes relevant before time $T_1$. As a consequence, the spike no longer evolves independently of the bulk, and its growth eventually saturates at a turning point.
Capturing this behavior requires a finer decomposition of Stage~I into two
sub-stages, analyzed in
Sections~\ref{subsubsec:gd_stageI_spike} and~\ref{subsubsec:gd_stageII_spike}
of the appendix.

\subsection{Stage~II: noise saturation and signal growth}
We focus on the analysis of GF, and the analysis of SpecGF following from similar arguments. 
Recall that $g_v(t)=4(1+\lambda)\bigl(\tilde r(t)+2(1+\lambda)\tilde b(t)-1\bigr)$, so the sign of $g_v$ is determined by the network mass $\tilde r(t)+2(1+\lambda)\tilde b(t)-1$. By definition of $\tilde{T}_1$ we have $g_v<0$ for all $t\leq \tilde{T}_1$. Define $\tilde C(t)\coloneqq(d-2)\tilde c(t)$ and $\tilde B(t)\coloneqq(1+\lambda)\tilde b(t)$. By using a barrier argument that prevents the spike from growing indefinitely, we can show that these quantities remain bounded.
\begin{restatable}{lemma}{gduniformbarriers}
\label{lem:gd_uniform_barriers}
For all $t\ge 0$, one has
\[
0\le \tilde a(t)\le 1,\,
0\le \tilde B(t)\le \frac13,\,
0\le \tilde C(t)\le 1.
\]
\end{restatable}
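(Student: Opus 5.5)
The plan is a standard invariance (barrier) argument on the reduced GF system of Proposition~\ref{prop:gd_reduction}, written in the variables $\tilde a$, $\tilde B=(1+\lambda)\tilde b$ and $\tilde C=(d-2)\tilde c$. In these variables $\tilde r=\tilde a+\tilde B+\tilde C$, and each coordinate still obeys a multiplicative equation: $\dot{\tilde B}=-2g_v\tilde B$ and $\dot{\tilde C}=-2g_\perp\tilde C$.

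\textbf{Step 1 (nonnegativity).} Since every equation has the form $\dot x=-2g(t)x$ with $g$ continuous along the solution, we get
\[
x(t)=x(0)\exp\Bigl(-2\int_0^t g\Bigr).
\]
Hence $\tilde a,\tilde B,\tilde C$ stay strictly positive on the whole existence interval, because $x(0)=\theta^2>0$ up to the constant factors $(1+\lambda)$ and $(d-2)$. This gives the lower bounds.

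\textbf{Step 2 (initial conditions lie strictly inside the box).} By Assumption~\ref{ass:theta}, $\tilde a(0)=\theta^2=\rho_0/d<1$ and $\tilde C(0)=(d-2)\rho_0/d<\rho_0<1$. Using Assumption~\ref{ass:lambda} ($\lambda\lesssim d$), we get $\tilde B(0)=(1+\lambda)\rho_0/d=O(\rho_0)=O(\log^{-1}d)<1/3$ for $d$ large enough.

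\textbf{Step 3 (the vector field points inward on each face).} I will check the three upper faces of the box in turn, using that $\tilde r-1$ is bounded below by the remaining coordinates.
\begin{itemize}
\item On $\{\tilde a=1\}$: $g_{w_\star}=4(\tilde r-1)=4(\tilde B+\tilde C)>0$ by Step 1, so $\dot{\tilde a}<0$.
\item On $\{\tilde B=1/3\}$: $g_v=4(1+\lambda)(\tilde r+2\tilde B-1)=4(1+\lambda)(\tilde a+\tilde C+3\tilde B-1)=4(1+\lambda)(\tilde a+\tilde C)>0$, so $\dot{\tilde B}<0$.
\item On $\{\tilde C=1\}$: $\tilde r-1\ge \tilde a+\tilde B>0$, so $g_\perp=8\tilde c+4(\tilde r-1)>0$ and $\dot{\tilde C}<0$.
\end{itemize}
Strict positivity from Step 1 is what makes every sign strict. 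The useful observation is that the threshold $1/3$ for $\tilde B$ is exactly the level at which the factor $2(1+\lambda)\tilde b$ in $g_v$ combines with $\tilde B$ inside $\tilde r$ to give $3\tilde B$.

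\textbf{Step 4 (first exit time).} The right-hand side is polynomial, hence locally Lipschitz, so a unique local solution exists. Let $\tau$ be the first time at which one of the upper bounds is attained, and suppose $\tau<\infty$. At $\tau$ the corresponding coordinate equals its bound and its derivative is strictly negative by Step 3. Therefore it exceeded the bound just before $\tau$, which contradicts the minimality of $\tau$; the case of several coordinates hitting their bounds simultaneously is handled in the same way. So the solution stays in the compact box for all time, which also rules out blow-up and gives global existence. Since everything else is routine, the only delicate point is ensuring strictness at the faces, and Step 1 settles it.
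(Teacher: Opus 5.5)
Your proof is correct and uses the same barrier argument as the paper: check the sign of the vector field on the faces $\tilde a=1$, $\tilde B=1/3$, $\tilde C=1$ and conclude by a first-exit argument. Your version is slightly tighter than the paper's. You use strict positivity of the coordinates to make the face inequalities strict, and you verify that the initialization lies inside the box; the paper records only $\dot{\tilde a}\le 0$ and $\dot{\tilde B}\le 0$ at the boundary and does not check the initial data.
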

We have $\tilde B(\tilde{T}_1)\approx 1/3 \approx \tilde r(\tilde{T}_1)$ since $\tilde B$ dominates $\tilde a$ and $\tilde C$ during Stage~I and $g_v(\tilde{T}_1)=0$ by definition.
Then the spike can no longer grow further. On the other hand, the signal coefficient satisfies 
\[ 
\dot{\tilde a}(t) = \tilde a (t)\bigl(24-16\tilde a (t)-8\tilde r (t)\bigr) \geq \kappa \tilde a (t),
\]
for some positive constant $\kappa>0$ since $\tilde r(t)\leq \frac{7}{3}$ by Lemma~\ref{lem:gd_uniform_barriers} and $\tilde a (t)\leq \delta$ during Stage~II for $\delta>0$ small enough.
As a consequence, the signal coefficient continues to grow exponentially, but at a slower rate than Stage~I. 
Similarly, the bulk coefficient satisfies
\[
\dot{\tilde c}(t) =8(1-\tilde r(t))\tilde c(t)-16\tilde c(t)^2.
\]
Fix any small $\varepsilon>0$. As long as $\tilde r(t)\le 1-\varepsilon$, we have $1-\tilde r(t)\ge\varepsilon$ and hence
\[
\dot{\tilde c}(t)\ge 8\varepsilon\,\tilde c(t)-16\tilde c(t)^2.
\]
Moreover, while $\tilde r(t)\le 1-\varepsilon$ we also have $\tilde c(t)\le \tilde r(t)/(d-2)\le 1/(d-2)$, so for $d$ large enough the quadratic term is negligible and
\[
\dot{\tilde c}(t) \ge 4\varepsilon\,\tilde c(t).
\]
Thus the bulk mass $\tilde C(t)$ grows exponentially until $\tilde r(t)$ enters the band of $\tilde r (t)\simeq 1$.
Once $\tilde r(t)$ becomes close to $1$, the prefactor $(1-\tilde r(t))$ vanishes and suppresses further bulk growth,
leading to saturation. Since the growth of $\tilde r (t)$ is mainly driven by $\tilde C (t)$ during most of Stage~II, we have $\tilde r (t)<1-\varepsilon $ during this stage. After that, $\tilde a(t)$ will drive the convergence of $\tilde r(t)$ toward $1$.

\subsection{Discretization}

The discrete-time analysis is more delicate than in continuous time and cannot be obtained by a direct Euler discretization of the gradient flow. The main technical difficulties and their resolution are addressed in the appendix; see Section~\ref{app:discrete_gd} for GD and Section~\ref{app:discrete_specgd} for SpecGD.

\paragraph{Quadratic learning rate effects.}
In discrete time, the population update induces the exact multiplicative recursion
\[
\tilde M_{k+1}=(I-\tilde \eta G_k)\,\tilde M_k\,(I-\tilde\eta G_k),
\]
where $G_k$ denotes the population gradient matrix evaluated at $\tilde M_k$.
The formal definition and derivation are given in Appendix~\ref{app:discrete_gd}. Even when restricted to the invariant manifold $\mathcal M$,
this produces squared factors in the scalar dynamics, leading to a genuine $O(\tilde \eta^2)$ contribution.
While negligible away from critical regimes, this term becomes relevant near turning points where
the leading drift vanishes, and may affect monotonicity if left uncontrolled. The discrete analysis
therefore relies on explicit one-step bounds and a sufficiently small learning rate to ensure that the
qualitative phase structure of the continuous dynamics is preserved.

\paragraph{Discrete barriers and non-smooth updates.}
Continuous-time barrier arguments do not directly extend to discrete time, since a single update may overshoot a threshold. We therefore rely on discrete trapping arguments, showing that once an iterate enters a contracting region it cannot escape, and that any overshoot is uniformly controlled at scale $O(\tilde\eta)$.
In SpecGD, this issue is compounded by the non-smooth sign-based update, which may cross sign-change thresholds and induce small oscillations. We show that such oscillations remain confined to an $O(\eta)$ neighborhood (Sections~\ref{app:discrete_gd} and~\ref{app:discrete_specgd}).

\section{Numerical Experiments}\label{sec:xp}

We present numerical experiments illustrating and extending our theoretical
findings on learning-rate effects, alignment dynamics, and robustness to random
initialization and power-law covariance matrices.

\paragraph{Influence of the learning rate.}
We set $d=300$, $m=100$, $\rho_0=0.01$ and use batches of size $5000$.
We compare the alignment obtained after $T=1000$ iterations for GD and SpecGD
across various choices of the learning rate $\eta$ and spike strength $\lambda$. We use small Gaussian random initialization with scaling $\theta^2= \rho_0(md)^{-1}$.
Figure~\ref{fig:heatmap_align_eta_lambda} shows that when $\lambda$ is large,
GD fails to align for moderately large learning rates $\eta$, whereas SpecGD
remains robust. A possible explanation for this failure is that a large learning rate breaks the
barrier mechanism that prevents the spike component from growing excessively.
This behavior is further illustrated in Figures~\ref{fig:mse_gd_vs_muon},
\ref{fig:abc_gd_vs_muon}, and~\ref{fig:align_gd_vs_muon} in the appendix. Figure~\ref{fig:heatmap_align_eta_lambda} also suggests that SpecGD could tolerate a larger learning rate than $O((d+\lambda)^{-1/2})$.

\paragraph{Population dynamics.}
Figure~\ref{fig:pop_loss_alignment} in the appendix complements
Figure~\ref{fig:pop_abcr_side_by_side} and shows that under GD the alignment
decreases during Phase~I and improves only late in Phase~II.
Moreover, even after the alignment has converged to one, there remains a
significant difference in the corresponding loss scale, suggesting that GD and
SpecGD may induce different scaling laws.

\begin{figure*}[t]
  \centering
    \includegraphics[width=\linewidth]{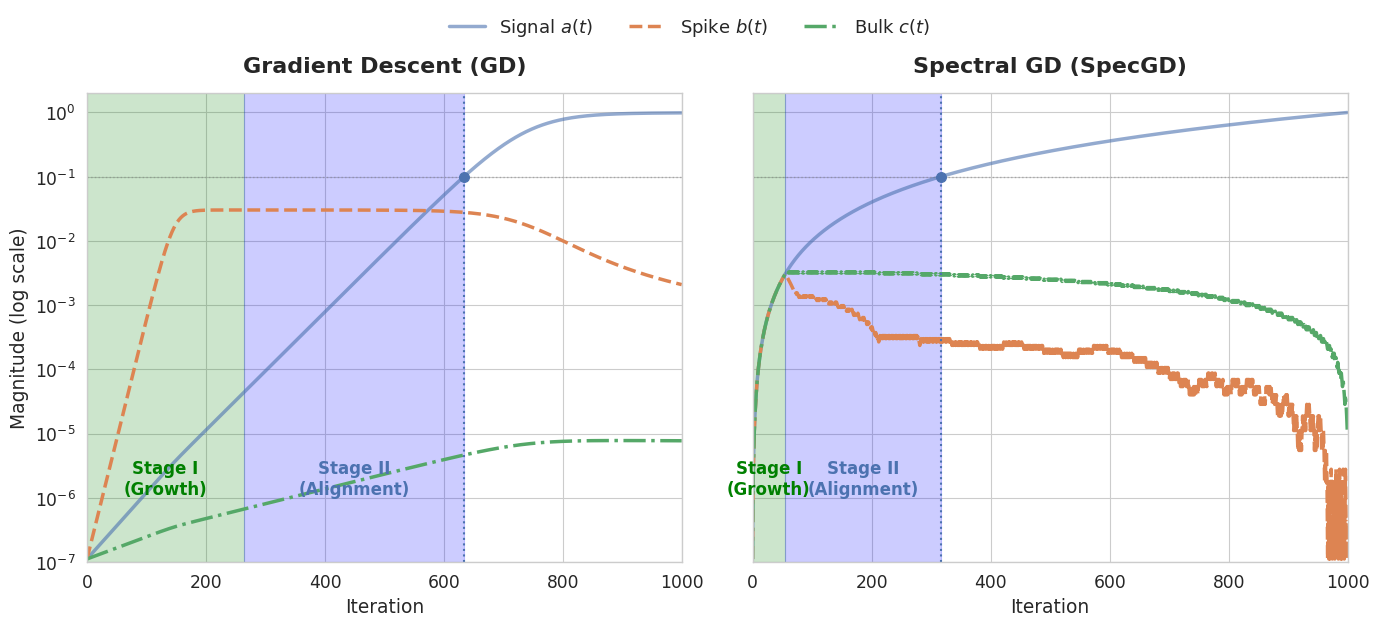}
  \caption{Log-scale plots of the dynamics ($d=m=300$, $\eta=10^{-3}$, $\rho_0=10^{-2}$, $\lambda=10$).
  \textbf{Left:} GD.
  \textbf{Right:} SpecGD. }
  \label{fig:pop_abcr_side_by_side}
\end{figure*}

\paragraph{Minibatch training with random initialization and power-law covariance.}
Using minibatch training with a small random Gaussian initialization, we consider
a more general covariance matrix $Q$ with a power-law spectrum, where the signal
direction $w_\star$ is aligned with the tail of $Q$.

\begin{figure*}[htbp]
  \centering
  \begin{minipage}{0.49\textwidth}
    \centering
    \includegraphics[width=\linewidth]{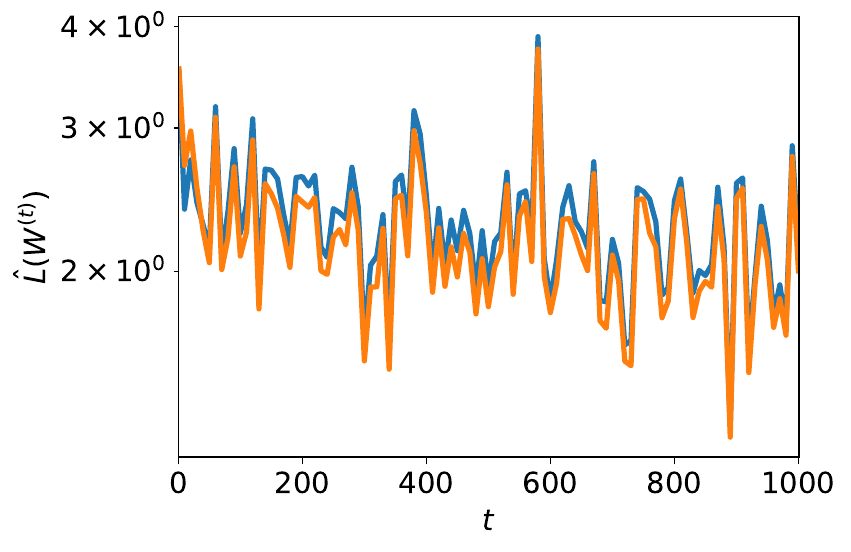}
  \end{minipage}\hfill
  \begin{minipage}{0.49\textwidth}
    \centering
    \includegraphics[width=\linewidth]{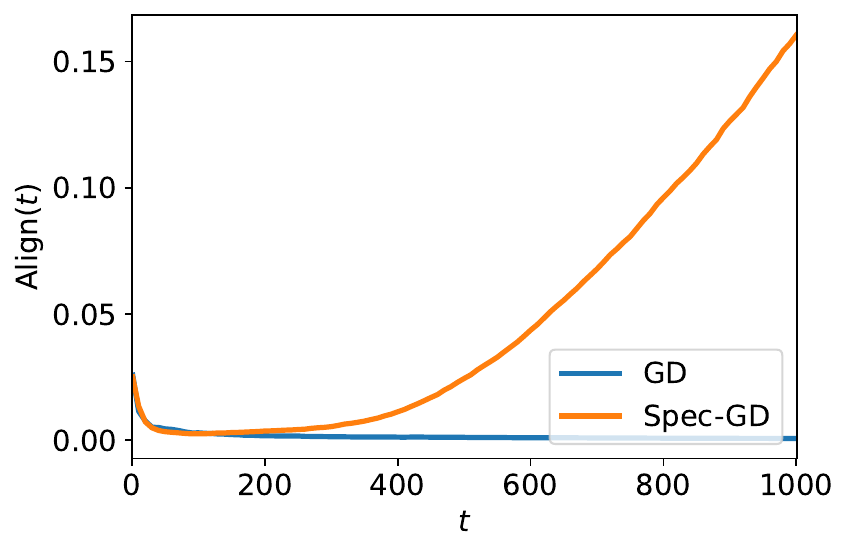}
  \end{minipage}
  \caption{Training dynamics under a power-law covariance.
  \textbf{Left:} Empirical loss as a function of the iteration.
  \textbf{Right:} Frobenius alignment with the signal direction.
  While the loss remains highly irregular for both algorithms,
  SpecGD ultimately aligns with the signal, whereas GD remains nearly orthogonal.}
  \label{fig:plaw_loss_alignment}
\end{figure*}

Figure~\ref{fig:plaw_loss_alignment} reports the evolution of the empirical loss and the Frobenius alignment with the signal.
In this strongly anisotropic regime, the loss remains highly erratic for both GD and SpecGD does not exhibit a clear monotone decrease over the time horizon considered.
Nevertheless, the behavior of the representations is markedly different.
Under GD, the alignment stays close to zero throughout training, indicating that the
learned representation remains essentially orthogonal to the signal direction.
In contrast, SpecGD displays a characteristic two-stage behavior: after an initial
decrease of alignment during the early iterations, the algorithm progressively aligns with the signal, achieving a significantly higher alignment than GD.
These results highlight that
SpecGD is still able to extract the relevant low-variance structure of the
data, whereas standard GD fails to do so in this setting.

\paragraph{MNIST dataset.}
In Section~\ref{app:xp}, we study the behavior of GD and SpecGD in a more realistic neural network setting. We train a two-layer ReLU network on a binary MNIST task (digits 4 vs.~9), where the inputs are perturbed by a rank-one Gaussian spike introducing a label-independent anisotropic direction in the data. As shown in Figure~\ref{fig:mnist_spike}, SpecGD remains more stable than GD under this perturbation.

\section{Discussion and Future Work}

We have shown that, under anisotropic data, gradient descent can be misled toward an uninformative direction, resulting in delayed signal alignment and a need for small learning rates in strongly anisotropic regimes. In contrast, spectral gradient descent is largely insensitive to the magnitude of this spurious spike: its sign-based updates limit spike-driven dynamics and enable more direct signal alignment. Our analysis rigorously explains this separation through a detailed phase-wise decomposition of the training dynamics.

A natural first extension is to analyze the dynamics in the online noisy setting. Beyond this, extending the analysis to more general covariance structures, such as power-law spectra, and deriving the associated scaling laws is an interesting direction for future work. Finally, studying richer nonlinear models, including multi-index settings, is another promising direction.

\section*{Acknowledgment}
We would like to thank Bruno Loureiro for insightful discussions.
HB is supported by JST PRESTO (JPMJPR24K6). WH is supported by JSPS
KAKENHI (24K20848) and JST BOOST (JPMJBY24G6).
MI is supported by JSPS KAKENHI (24K02904), JST CREST (JPMJCR21D2),  JST FOREST (JPMJFR216I), and JST BOOST (JPMJBY24A9).

\section*{Impact Statement}

This paper presents work whose goal is to advance the theory of machine learning. There are many potential societal consequences of our work, none of which we feel must be specifically highlighted here.

\bibliography{references}

@inproceedings{
vasudeva2025muonsspectraldesignbenefits,
title={How Muon{\textquoteright}s Spectral Design Benefits Generalization: A Study on Imbalanced Data},
author={Bhavya Vasudeva and Puneesh Deora and Yize Zhao and Vatsal Sharan and Christos Thrampoulidis},
booktitle={The Fourteenth International Conference on Learning Representations},
year={2026}
}

@inproceedings{
fan2025implicitbiasspectraldescent,
title={Implicit Bias of Spectral Descent and Muon on Multiclass Separable Data},
author={Chen Fan and Mark Schmidt and Christos Thrampoulidis},
booktitle={The Thirty-ninth Annual Conference on Neural Information Processing Systems},
year={2025}
}

@misc{su2025isotropiccurvaturemodelunderstanding,
      title={Isotropic Curvature Model for Understanding Deep Learning Optimization: Is Gradient Orthogonalization Optimal?}, 
      author={Weijie Su},
      year={2025},
      eprint={2511.00674},
      archivePrefix={arXiv},
      primaryClass={math.OC},
      url={https://arxiv.org/abs/2511.00674}, 
}

@misc{chang2025convergencemuon,
      title={On the Convergence of Muon and Beyond}, 
      author={Da Chang and Yongxiang Liu and Ganzhao Yuan},
      year={2025},
      eprint={2509.15816},
      archivePrefix={arXiv},
      primaryClass={cs.LG},
      url={https://arxiv.org/abs/2509.15816}, 
}

@misc{sato2025convergenceboundcriticalbatch,
      title={Convergence Bound and Critical Batch Size of Muon Optimizer}, 
      author={Naoki Sato and Hiroki Naganuma and Hideaki Iiduka},
      year={2025},
      eprint={2507.01598},
      archivePrefix={arXiv},
      primaryClass={cs.LG},
      url={https://arxiv.org/abs/2507.01598}, 
}

@misc{sfyraki2025lionsmuonsoptimizationstochastic,
      title={Lions and Muons: Optimization via Stochastic Frank-Wolfe}, 
      author={Maria-Eleni Sfyraki and Jun-Kun Wang},
      year={2025},
      eprint={2506.04192},
      archivePrefix={arXiv},
      primaryClass={math.OC},
      url={https://arxiv.org/abs/2506.04192}, 
}

@misc{shen2025convergenceanalysismuon,
      title={On the Convergence Analysis of Muon}, 
      author={Wei Shen and Ruichuan Huang and Minhui Huang and Cong Shen and Jiawei Zhang},
      year={2025},
      eprint={2505.23737},
      archivePrefix={arXiv},
      primaryClass={stat.ML},
      url={https://arxiv.org/abs/2505.23737}, 
}

@misc{davis2025spectralgradientupdateshelp,
      title={When do spectral gradient updates help in deep learning?}, 
      author={Damek Davis and Dmitriy Drusvyatskiy},
      year={2025},
      eprint={2512.04299},
      archivePrefix={arXiv},
      primaryClass={cs.LG},
      url={https://arxiv.org/abs/2512.04299}, 
}

@misc{jordan2024muon,
  author       = {Keller Jordan and Yuchen Jin and Vlado Boza and You Jiacheng and
                  Franz Cesista and Laker Newhouse and Jeremy Bernstein},
  title        = {Muon: An optimizer for hidden layers in neural networks},
  year         = {2024},
  url          = {https://kellerjordan.github.io/posts/muon/}
}

@inproceedings{KingmaB14,
  author       = {Diederik P. Kingma and
                  Jimmy Ba},
  editor       = {Yoshua Bengio and
                  Yann LeCun},
  title        = {Adam: {A} Method for Stochastic Optimization},
  booktitle    = {3rd International Conference on Learning Representations, {ICLR} 2015,
                  San Diego, CA, USA, May 7-9, 2015, Conference Track Proceedings},
  year         = {2015}
}

@misc{liu2025muonscalablellmtraining,
      title={Muon is Scalable for LLM Training}, 
      author={Jingyuan Liu and Jianlin Su and Xingcheng Yao and Zhejun Jiang and Guokun Lai and Yulun Du and Yidao Qin and Weixin Xu and Enzhe Lu and Junjie Yan and Yanru Chen and Huabin Zheng and Yibo Liu and Shaowei Liu and Bohong Yin and Weiran He and Han Zhu and Yuzhi Wang and Jianzhou Wang and Mengnan Dong and Zheng Zhang and Yongsheng Kang and Hao Zhang and Xinran Xu and Yutao Zhang and Yuxin Wu and Xinyu Zhou and Zhilin Yang},
      year={2025},
      eprint={2502.16982},
      archivePrefix={arXiv},
      primaryClass={cs.LG},
      url={https://arxiv.org/abs/2502.16982}, 
}

@article{adagrad,
  author  = {John Duchi and Elad Hazan and Yoram Singer},
  title   = {Adaptive Subgradient Methods for Online Learning and Stochastic Optimization},
  journal = {Journal of Machine Learning Research},
  year    = {2011},
  volume  = {12},
  number  = {61},
  pages   = {2121--2159}
}

@misc{tieleman2012rmsprop,
  author       = {Tieleman, Tijmen and Hinton, Geoffrey},
  title        = {Lecture 6.5---{RMSProp}: Divide the Gradient by a Running Average of Its Recent Magnitude},
  howpublished = {Coursera: Neural Networks for Machine Learning},
  year         = {2012},
  note         = {Lecture slides}
}

@InProceedings{kfac,
  title = 	 {Optimizing Neural Networks with Kronecker-factored Approximate Curvature},
  author = 	 {Martens, James and Grosse, Roger},
  booktitle = 	 {Proceedings of the 32nd International Conference on Machine Learning},
  pages = 	 {2408--2417},
  year = 	 {2015},
  editor = 	 {Bach, Francis and Blei, David},
  volume = 	 {37},
  series = 	 {Proceedings of Machine Learning Research},
  address = 	 {Lille, France},
  month = 	 {07--09 Jul},
  publisher =    {PMLR},
}

@InProceedings{shampoo,
  title = 	 {Shampoo: Preconditioned Stochastic Tensor Optimization},
  author =       {Gupta, Vineet and Koren, Tomer and Singer, Yoram},
  booktitle = 	 {Proceedings of the 35th International Conference on Machine Learning},
  pages = 	 {1842--1850},
  year = 	 {2018},
  editor = 	 {Dy, Jennifer and Krause, Andreas},
  volume = 	 {80},
  series = 	 {Proceedings of Machine Learning Research},
  month = 	 {10--15 Jul},
  publisher =    {PMLR},
}

@article{Robbins1951ASA,
  author = {Robbins, Herbert E. and Monro, Sutton},
  title = {A Stochastic Approximation Method},
  journal = {Annals of Mathematical Statistics},
  year = {1951},
  volume = {22},
  number = {3},
  pages = {400-407}
}

@inproceedings{martin2024impact,
  title={On the impact of overparameterization on the training of a shallow neural network in high dimensions},
  author={Martin, Simon and Bach, Francis and Biroli, Giulio},
  booktitle={International Conference on Artificial Intelligence and Statistics},
  pages={3655--3663},
  year={2024},
  organization={PMLR}
}

@inproceedings{
braun2025learning,
title={Learning a Single Index Model from Anisotropic Data with Vanilla Stochastic Gradient Descent},
author={Guillaume Braun and Minh Ha Quang and Masaaki Imaizumi},
booktitle={The 28th International Conference on Artificial Intelligence and Statistics},
year={2025}
}

@inproceedings{
braun2025fastescapeslowconvergence,
title={Fast Escape, Slow Convergence: Learning Dynamics of Phase Retrieval under Power-Law Data},
author={Guillaume Braun and Bruno Loureiro and Minh Ha Quang and Masaaki Imaizumi},
booktitle={The Fourteenth International Conference on Learning Representations},
year={2026}
}

@misc{wortsman2025kernel,
      title={Kernel ridge regression under power-law data: spectrum and generalization}, 
      author={Arie Wortsman and Bruno Loureiro},
      year={2025},
      eprint={2510.04780},
      archivePrefix={arXiv},
      primaryClass={stat.ML},
      url={https://arxiv.org/abs/2510.04780}, 
}

@inproceedings{
paquette2024,
title={4+3 Phases of Compute-Optimal Neural Scaling Laws},
author={Elliot Paquette and Courtney Paquette and Lechao Xiao and Jeffrey Pennington},
booktitle={The Thirty-eighth Annual Conference on Neural Information Processing Systems},
year={2024}
}

@ARTICLE{dongPR23,
  author={Dong, Jonathan and Valzania, Lorenzo and Maillard, Antoine and Pham, Thanh-an and Gigan, Sylvain and Unser, Michael},
  journal={IEEE Signal Processing Magazine}, 
  title={Phase Retrieval: From Computational Imaging to Machine Learning: A tutorial}, 
  year={2023},
  volume={40},
  number={1},
  pages={45-57}
}

@ARTICLE{candes15PR,
  author={Candès, Emmanuel J. and Li, Xiaodong and Soltanolkotabi, Mahdi},
  journal={IEEE Transactions on Information Theory}, 
  title={Phase Retrieval via Wirtinger Flow: Theory and Algorithms}, 
  year={2015},
  volume={61},
  number={4},
  pages={1985-2007}}

@ARTICLE{maPR,
  author={Ma, Junjie and Dudeja, Rishabh and Xu, Ji and Maleki, Arian and Wang, Xiaodong},
  journal={IEEE Transactions on Information Theory}, 
  title={Spectral Method for Phase Retrieval: An Expectation Propagation Perspective}, 
  year={2021},
  volume={67},
  number={2},
  pages={1332-1355}}

@inproceedings{
arous2025learningquadraticneuralnetworks,
title={Learning quadratic neural networks in high dimensions: {SGD} dynamics and scaling laws},
author={Gerard Ben Arous and Murat A Erdogdu and Nuri Mert Vural and Denny Wu},
booktitle={The Thirty-ninth Annual Conference on Neural Information Processing Systems},
year={2026}
}

@article{tan2019phase,
  title={Phase retrieval via randomized Kaczmarz: theoretical guarantees},
  author={Tan, Yan Shuo and Vershynin, Roman},
  journal={Information and Inference: A Journal of the IMA},
  volume={8},
  number={1},
  pages={97--123},
  year={2019},
  publisher={Oxford University Press}
}

@article{tan2023online,
  title={Online stochastic gradient descent with arbitrary initialization solves non-smooth, non-convex phase retrieval},
  author={Tan, Yan Shuo and Vershynin, Roman},
  journal={Journal of Machine Learning Research},
  volume={24},
  number={58},
  pages={1--47},
  year={2023}
}

@article{davis2020nonsmooth,
  title={The nonsmooth landscape of phase retrieval},
  author={Davis, Damek and Drusvyatskiy, Dmitriy and Paquette, Courtney},
  journal={IMA Journal of Numerical Analysis},
  volume={40},
  number={4},
  pages={2652--2695},
  year={2020},
  publisher={Oxford University Press}
}

@article{sarao2020optimization,
  title={Optimization and generalization of shallow neural networks with quadratic activation functions},
  author={Sarao Mannelli, Stefano and Vanden-Eijnden, Eric and Zdeborov{\'a}, Lenka},
  journal={Advances in Neural Information Processing Systems},
  volume={33},
  pages={13445--13455},
  year={2020}
}

@misc{erba2025nuclear,
      title={The Nuclear Route: Sharp Asymptotics of ERM in Overparameterized Quadratic Networks}, 
      author={Vittorio Erba and Emanuele Troiani and Lenka Zdeborová and Florent Krzakala},
      year={2026},
      eprint={2505.17958},
      archivePrefix={arXiv},
      primaryClass={stat.ML},
      url={https://arxiv.org/abs/2505.17958}, 
}

@inproceedings{
defilippis2025scaling,
title={Scaling Laws and Spectra of Shallow Neural Networks in the Feature Learning Regime},
author={Leonardo Defilippis and Yizhou Xu and Julius Girardin and Vittorio Erba and Emanuele Troiani and Lenka Zdeborov{\'a} and Bruno Loureiro and Florent Krzakala},
booktitle={The Fourteenth International Conference on Learning Representations},
year={2026}
}

@inproceedings{
arnaboldi2023escaping,
title={Escaping mediocrity: how two-layer networks learn hard generalized linear models},
author={Luca Arnaboldi and Florent Krzakala and Bruno Loureiro and Ludovic Stephan},
booktitle={OPT 2023: Optimization for Machine Learning},
year={2023}
}

@article{saad1995line,
  title={On-line learning in soft committee machines},
  author={Saad, David and Solla, Sara A},
  journal={Physical Review E},
  volume={52},
  number={4},
  pages={4225},
  year={1995},
  publisher={APS}
}

@article{saad1995dynamics,
  title={Dynamics of on-line gradient descent learning for multilayer neural networks},
  author={Saad, David and Solla, Sara},
  journal={Advances in neural information processing systems},
  volume={8},
  year={1995}
}

@article{goldt2019dynamics,
  title={Dynamics of stochastic gradient descent for two-layer neural networks in the teacher-student setup},
  author={Goldt, Sebastian and Advani, Madhu and Saxe, Andrew M and Krzakala, Florent and Zdeborov{\'a}, Lenka},
  journal={Advances in neural information processing systems},
  volume={32},
  year={2019}
}

@article{veiga2022phase,
  title={Phase diagram of stochastic gradient descent in high-dimensional two-layer neural networks},
  author={Veiga, Rodrigo and Stephan, Ludovic and Loureiro, Bruno and Krzakala, Florent and Zdeborov{\'a}, Lenka},
  journal={Advances in Neural Information Processing Systems},
  volume={35},
  pages={23244--23255},
  year={2022}
}

@inproceedings{arnaboldi2023high,
  title={From high-dimensional \& mean-field dynamics to dimensionless odes: A unifying approach to sgd in two-layers networks},
  author={Arnaboldi, Luca and Stephan, Ludovic and Krzakala, Florent and Loureiro, Bruno},
  booktitle={The Thirty Sixth Annual Conference on Learning Theory},
  pages={1199--1227},
  year={2023},
  organization={PMLR}
}

@article{repetita24,
  title={Repetita iuvant: Data repetition allows sgd to learn high-dimensional multi-index functions},
  author={Arnaboldi, Luca and Dandi, Yatin and Krzakala, Florent and Pesce, Luca and Stephan, Ludovic},
  journal={arXiv preprint arXiv:2405.15459},
  year={2024}
}

@article{collins2024hitting,
  title={Hitting the high-dimensional notes: An ode for sgd learning dynamics on glms and multi-index models},
  author={Collins-Woodfin, Elizabeth and Paquette, Courtney and Paquette, Elliot and Seroussi, Inbar},
  journal={Information and Inference: A Journal of the IMA},
  volume={13},
  number={4},
  pages={iaae028},
  year={2024},
  publisher={Oxford University Press}
}

@article{ben2022high,
  title={High-dimensional limit theorems for sgd: Effective dynamics and critical scaling},
  author={Ben Arous, Gerard and Gheissari, Reza and Jagannath, Aukosh},
  journal={Advances in neural information processing systems},
  volume={35},
  pages={25349--25362},
  year={2022}
}

@inproceedings{abbe2022merged,
  title={The merged-staircase property: a necessary and nearly sufficient condition for sgd learning of sparse functions on two-layer neural networks},
  author={Abbe, Emmanuel and Adsera, Enric Boix and Misiakiewicz, Theodor},
  booktitle={Conference on Learning Theory},
  pages={4782--4887},
  year={2022},
  organization={PMLR}
}

@inproceedings{AbbeAM23,
  title={Sgd learning on neural networks: leap complexity and saddle-to-saddle dynamics},
  author={Abbe, Emmanuel and Adsera, Enric Boix and Misiakiewicz, Theodor},
  booktitle={The Thirty Sixth Annual Conference on Learning Theory},
  pages={2552--2623},
  year={2023},
  organization={PMLR}
}

@article{bietti2023learning,
  title={On learning Gaussian multi-index models with gradient flow part I: General properties and two-timescale learning},
  author={Bietti, Alberto and Bruna, Joan and Pillaud-Vivien, Loucas},
  journal={Communications on Pure and Applied Mathematics},
  year={2025},
  publisher={Wiley Online Library}
}

@InProceedings{dandi24Benefits,
  title = 	 {The Benefits of Reusing Batches for Gradient Descent in Two-Layer Networks: Breaking the Curse of Information and Leap Exponents},
  author =       {Dandi, Yatin and Troiani, Emanuele and Arnaboldi, Luca and Pesce, Luca and Zdeborova, Lenka and Krzakala, Florent},
  booktitle = 	 {Proceedings of the 41st International Conference on Machine Learning},
  year = 	 {2024}
}

@article{bruna2025surveyalgorithmsmultiindexmodels,
author = {Joan Bruna and Daniel Hsu},
title = {{Survey on Algorithms for Multi-Index Models}},
volume = {40},
journal = {Statistical Science},
number = {3},
publisher = {Institute of Mathematical Statistics},
pages = {378 -- 391},
year = {2025}
}

@misc{maloney2022solvablemodelneuralscaling,
      title={A Solvable Model of Neural Scaling Laws}, 
      author={Alexander Maloney and Daniel A. Roberts and James Sully},
      year={2022},
      eprint={2210.16859},
      archivePrefix={arXiv},
      primaryClass={cs.LG},
      url={https://arxiv.org/abs/2210.16859}, 
}

@article{bahri2024explaining,
  title={Explaining neural scaling laws},
  author={Bahri, Yasaman and Dyer, Ethan and Kaplan, Jared and Lee, Jaehoon and Sharma, Utkarsh},
  journal={Proceedings of the National Academy of Sciences},
  volume={121},
  number={27},
  pages={e2311878121},
  year={2024},
  publisher={National Academy of Sciences}
}

@article{atanasov2024scaling,
year = {2026},
month = {may},
publisher = {IOP Publishing},
volume = {2026},
number = {4},
pages = {043404},
author = {Atanasov, Alexander and Zavatone-Veth, Jacob A and Pehlevan, Cengiz},
title = {Scaling and renormalization in high-dimensional regression},
journal = {Journal of Statistical Mechanics: Theory and Experiment}
}

@inproceedings{bordelon24,
  author={Bordelon, Blake and Atanasov, Alexander and Pehlevan, Cengiz},
  title={A dynamical model of neural scaling laws},
  year={2024},
  booktitle={Proceedings of the 41st International Conference on Machine Learning},
  articleno={174},
  numpages={38}
}

@inproceedings{
worschech2024analyzingneuralscalinglaws,
title={Analyzing Neural Scaling Laws in Two-Layer Networks with Power-Law Data Spectra},
author={Roman Worschech and Bernd Rosenow},
booktitle={The Thirteenth International Conference on Learning Representations},
year={2025}
}

@inproceedings{lin2024scaling,
  title={Scaling Laws in Linear Regression: Compute, Parameters, and Data},
  author={Licong Lin and Jingfeng Wu and Sham M. Kakade and Peter Bartlett and Jason D. Lee},
  booktitle={The Thirty-eighth Annual Conference on Neural Information Processing Systems},
  year={2024}
}

@inproceedings{ba2023learning,
  title={Learning in the Presence of Low-dimensional Structure: A Spiked Random Matrix Perspective},
  author={Ba, Jimmy and Erdogdu, Murat A and Suzuki, Taiji and Wang, Zhichao and Wu, Denny},
  booktitle={Thirty-seventh Conference on Neural Information Processing Systems},
  year={2023}
}

@inproceedings{
wang2025highdimensional,
title={High-dimensional isotropic scaling dynamics of Muon and {SGD}},
author={Guangyuan Wang and Elliot Paquette and Atish Agarwala},
booktitle={OPT 2025: Optimization for Machine Learning},
year={2025}
}

@article{
chen2025muon,
title={Muon Optimizes Under Spectral Norm Constraints},
author={Lizhang Chen and Jonathan Li and qiang liu},
journal={Transactions on Machine Learning Research},
issn={2835-8856},
year={2026}
}

@misc{wang2025muon,
  title={Muon Outperforms Adam in Tail-End Associative Memory Learning},
  author={Wang, Shuche and Zhang, Fengzhuo and Li, Jiaxiang and Du, Cunxiao and Du, Chao and Pang, Tianyu and Yang, Zhuoran and Hong, Mingyi and Tan, Vincent YF},
  eprint={2509.26030},
  url={https://arxiv.org/abs/2509.26030},
  archivePrefix={arXiv},
  primaryClass={cs.LG},
  year={2025}
}

@misc{ma2026preconditioning,
      title={Preconditioning Benefits of Spectral Orthogonalization in Muon}, 
      author={Jianhao Ma and Yu Huang and Yuejie Chi and Yuxin Chen},
      year={2026},
      eprint={2601.13474},
      archivePrefix={arXiv},
      primaryClass={cs.LG},
      url={https://arxiv.org/abs/2601.13474}, 
}

@misc{lau2026polargradclassmatrixgradientoptimizers,
      title={PolarGrad: A Class of Matrix-Gradient Optimizers from a Unifying Preconditioning Perspective}, 
      author={Tim Tsz-Kit Lau and Qi Long and Weijie Su},
      year={2026},
      eprint={2505.21799},
      archivePrefix={arXiv},
      primaryClass={math.OC},
      url={https://arxiv.org/abs/2505.21799}, 
}
\bibliographystyle{icml/icml2026}

\newpage
\appendix
\onecolumn

The appendix is organized as follows.
In Section~\ref{app:ideal}, we provide the detailed derivations and proofs underlying the reduced dynamical system presented in Section~\ref{sec:ODE}.
Sections~\ref{app:continuous_gd} and~\ref{app:continuous_specgd} analyze the continuous-time population dynamics of GD and SpecGD, respectively.
Sections~\ref{app:discrete_gd} and~\ref{app:discrete_specgd} are devoted to the corresponding discrete-time population dynamics.
Finally, Section~\ref{app:xp} contains additional experimental results. Throughout this appendix, we simplify notation by denoting the weight matrix by $W$ and the learning rate by $\eta$ for all algorithms, as the specific algorithm under consideration will be clear from context.

\section{Derivation of the Reduced Dynamical System on the Invariant Manifold}
\label{app:ideal}

In this section, we provide the detailed derivation of the closed-form expression of the gradient and explain how it leads to the system of ODEs presented in Section~\ref{sec:ODE}.

\subsection{Closed-form expression of the loss and the gradient}

We start by deriving a closed-form expression for the loss.
\begin{restatable}{lemma}{poploss}
\label{lem:pop_loss}
Let us write $C = WW^\top - w_\star w_\star^\top$. The population loss can be rewritten as
\[
\mathcal L(W)
= 2\,\Tr\bigl((QC)^2\bigr) + \bigl(\Tr(CQ)\bigr)^2.
\]
\end{restatable}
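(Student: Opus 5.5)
The plan is to reduce the loss to the second moment of a Gaussian quadratic form and then evaluate that moment in closed form. Throughout, the additive constant $\sigma^2$ from the noise is dropped, since it does not affect gradients or dynamics. The identity is therefore to be read up to this constant, i.e.\ as $\mathcal L(W)-\sigma^2$.

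\textbf{Step 1 (reduce to a quadratic form).} Since $y - f_W(x) = x^\top M_\star x + \nu - x^\top M x = -x^\top C x + \nu$, and $\nu$ is independent of $x$ with mean zero, the cross term vanishes. This gives
\[
\mathcal L(W) = \mathbb E\bigl[(x^\top C x)^2\bigr] + \sigma^2 .
\]

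\textbf{Step 2 (whiten the input).} Write $x = Q^{1/2} z$ with $z\sim\mathcal N(0,I_d)$, and set $A \coloneqq Q^{1/2} C Q^{1/2}$. The matrix $A$ is symmetric because $C$ is symmetric, and $x^\top C x = z^\top A z$.

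\textbf{Step 3 (Gaussian second moment).} Diagonalize $A = U D U^\top$ with $U$ orthogonal and $D=\diag(d_1,\dots,d_d)$. By rotation invariance, $z' \coloneqq U^\top z \sim \mathcal N(0,I_d)$, so $z^\top A z = \sum_i d_i z_i'^2$. Using $\mathbb E z_i'^2 = 1$, $\mathbb E z_i'^4 = 3$ and independence of the coordinates,
\[
\mathbb E\Bigl(\sum_i d_i z_i'^2\Bigr)^2 = \sum_i 3 d_i^2 + \sum_{i\neq j} d_i d_j = \Bigl(\sum_i d_i\Bigr)^2 + 2\sum_i d_i^2 = (\Tr A)^2 + 2\Tr(A^2).
\]
Isserlis' theorem would give the same result directly.

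\textbf{Step 4 (return to $Q$ and $C$).} By cyclicity of the trace, $\Tr A = \Tr(Q^{1/2}CQ^{1/2}) = \Tr(CQ)$. Likewise $\Tr(A^2) = \Tr(Q^{1/2}CQCQ^{1/2}) = \Tr(QCQC) = \Tr\bigl((QC)^2\bigr)$. Substituting into Step 3 yields the claimed formula.

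No step is a real obstacle. The only points needing care are the bookkeeping of the noise constant $\sigma^2$ and the fourth-moment computation in Step 3, which is standard.
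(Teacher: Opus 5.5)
Your proof is correct and follows the paper's argument: whiten via $x=Q^{1/2}z$, show $\mathbb E(z^\top A z)^2=(\Tr A)^2+2\Tr(A^2)$, and return to $Q$ and $C$ by cyclicity of the trace. The only difference is cosmetic: you get the Gaussian fourth moment by diagonalizing $A$ and using rotation invariance, while the paper expands in coordinates and applies Wick's formula directly; you also make explicit that the identity holds up to the constant $\sigma^2$, which the paper's statement and proof leave implicit.
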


\begin{proof}
Let $z \sim \mathcal N(0,I_d)$ and write $x = Q^{1/2} z$. Then
\[
x^\top C x
= z^\top \bigl(Q^{1/2} C Q^{1/2}\bigr) z
= z^\top B z,
\]
where $B  \coloneqq  Q^{1/2} C Q^{1/2}$.

\paragraph{Step 1: Expansion.}
We expand
\[
z^\top B z = \sum_{i,j} B_{ij} z_i z_j,
\]
and therefore obtain
\[
(z^\top B z)^2
= \sum_{i,j,k,\ell} B_{ij} B_{k\ell} \, z_i z_j z_k z_\ell.
\]
Taking expectations yields
\[
\mathbb E (z^\top B z)^2
= \sum_{i,j,k,\ell} B_{ij} B_{k\ell} \,
\mathbb E[z_i z_j z_k z_\ell].
\]

\paragraph{Step 2: Wick's formula.}
Since $z \sim \mathcal N(0,I_d)$, Wick’s formula yields
\[
\mathbb E[z_i z_j z_k z_\ell]
= \delta_{ij}\delta_{k\ell}
+ \delta_{ik}\delta_{j\ell}
+ \delta_{i\ell}\delta_{jk},
\]
where $\delta_{ab}$ denotes the Kronecker delta.

Substituting this identity into the sum gives
\begin{align*}
\sum_{i,j,k,\ell} B_{ij} B_{k\ell} \, \delta_{ij}\delta_{k\ell}
&= \sum_i B_{ii} \sum_k B_{kk}
= (\Tr B)^2, \\[0.5em]
\sum_{i,j,k,\ell} B_{ij} B_{k\ell} \, \delta_{ik}\delta_{j\ell}
&= \sum_{i,j} B_{ij}^2
= \|B\|_F^2
= \Tr(B^2), \\[0.5em]
\sum_{i,j,k,\ell} B_{ij} B_{k\ell} \, \delta_{i\ell}\delta_{jk}
&= \sum_{i,j} B_{ij} B_{ji}
= \Tr(B^2).
\end{align*}

Since $B$ is symmetric, $\Tr(B^\top B) = \Tr(B^2)$, so the last two terms coincide. Collecting all terms, we obtain
\[
\mathbb E (z^\top B z)^2
= (\Tr B)^2 + 2\,\Tr(B^2).
\]

\paragraph{Step 3: Rewriting in terms of $Q$ and $C$.}
By cyclicity of the trace,
\[
\Tr(B)
= \Tr(Q^{1/2} C Q^{1/2})
= \Tr(C Q),
\]
and
\[
\Tr(B^2)
= \Tr(Q^{1/2} C Q C Q^{1/2})
= \Tr(C Q C Q)
= \Tr\bigl((C Q)^2\bigr).
\]
Therefore,
\[
\mathbb E (x^\top C x)^2
= 2\,\Tr\bigl((C Q)^2\bigr)
+ \bigl(\Tr(C Q)\bigr)^2.
\]
\end{proof}

As a consequence of Lemma~\ref{lem:pop_loss} and the chain rule, we obtain an explicit characterization of the population gradient of \(W\) and \(M = WW^\top\).

We now compute the population gradient for GF. 
\begin{restatable}{lemma}{popgrad}
\label{lem:pop_grad}
Let $M\coloneqq WW^\top$, $C\coloneqq M-w_\star w_\star^\top$, and
\[
G(M)\coloneqq8\,QCQ+4\,\Tr(CQ)\,Q .
\]
Then, $\nabla_W\mathcal L(W(t))=G(M(t))W(t)$, and under GF
\[
\dot M(t)=-G(M(t))M(t)-M(t)G(M(t)).
\]
\end{restatable}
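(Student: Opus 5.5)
We start from the closed form of Lemma~\ref{lem:pop_loss}, $\mathcal L(W)=2\Tr((QC)^2)+(\Tr(CQ))^2$ (plus the constant $\sigma^2$, which does not affect gradients), and first compute the gradient with respect to the symmetric matrix $M$. Since $C=M-w_\star w_\star^\top$ depends affinely on $M$ with $dC=dM$, we take a first-order perturbation $M\mapsto M+\Delta$. For the first term, $\Tr((Q(C+\Delta))^2)-\Tr((QC)^2)=2\Tr(QCQ\Delta)+O(\|\Delta\|^2)$ by cyclicity of the trace, so its contribution is $2\cdot 2\,QCQ=4QCQ$. For the second term, $(\Tr((C+\Delta)Q))^2-(\Tr(CQ))^2=2\Tr(CQ)\Tr(Q\Delta)+O(\|\Delta\|^2)$, contributing $2\Tr(CQ)\,Q$. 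Hence $\nabla_M\mathcal L=4QCQ+2\Tr(CQ)Q=\tfrac12 G(M)$, which is symmetric because $Q$ and $C$ are.

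Next we apply the chain rule to $M=WW^\top$. For a perturbation $W\mapsto W+E$ we have $dM=EW^\top+WE^\top$, so
\[
d\mathcal L=\langle \nabla_M\mathcal L,\,EW^\top+WE^\top\rangle_F=2\langle(\nabla_M\mathcal L)W,\,E\rangle_F,
\]
using the symmetry of $\nabla_M\mathcal L$. Therefore $\nabla_W\mathcal L=2(\nabla_M\mathcal L)W=G(M)W$, which is the first claim.

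For the flow, we differentiate $M(t)=W(t)W(t)^\top$ along GF $\dot W=-G(M)W$:
\[
\dot M=\dot W W^\top+W\dot W^\top=-G(M)WW^\top-WW^\top G(M)^\top=-G(M)M-MG(M),
\]
where the last step again uses $G(M)^\top=G(M)$.

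None of these steps is a serious obstacle. The only point requiring care is the bookkeeping of factors of $2$. One factor comes from the quadratic trace term; a second comes from the symmetrization $EW^\top+WE^\top$, which is valid only because $\nabla_M\mathcal L$ is symmetric. Together these two factors yield exactly the constants $8$ and $4$ in $G$.
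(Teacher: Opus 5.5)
Your proof is correct and follows the paper's argument: differentiate the closed-form loss of Lemma~\ref{lem:pop_loss} in $C$ (equivalently $M$) to get $4QCQ+2\Tr(CQ)Q$, then use its symmetry to turn $EW^\top+WE^\top$ into the extra factor $2$, giving $\nabla_W\mathcal L=G(M)W$. You also write out the one-line derivation of $\dot M=-G(M)M-MG(M)$ from $\dot W=-G(M)W$, which the paper's proof leaves implicit.
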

\begin{proof}
By Lemma~\ref{lem:pop_loss}, the population loss can be written as
\[
\mathcal L(W)
= 2\,\Tr(CQCQ) + \bigl(\Tr(CQ)\bigr)^2 .
\]

We first compute the gradient with respect to $C$. We have
\[
\mathrm d\,\Tr(CQCQ)
= \Tr(\mathrm dC\,QCQ) + \Tr(CQ\,\mathrm dC\,Q)
= 2\,\Tr(\mathrm dC\,QCQ),
\]
where we used the cyclicity of the trace and the symmetry of $Q$ and $C$. Moreover,
\[
\mathrm d\,\bigl(\Tr(CQ)\bigr)^2
= 2\,\Tr(CQ)\,\mathrm d\,\Tr(CQ)
= 2\,\Tr(CQ)\,\Tr(\mathrm dC\,Q).
\]
Combining these identities yields
\[
\mathrm d\mathcal L
= 4\,\Tr(\mathrm dC\,QCQ) + 2\,\Tr(CQ)\,\Tr(\mathrm dC\,Q)
= \big\langle 4\,QCQ + 2\,\Tr(CQ)\,Q,\; \mathrm dC \big\rangle.
\]
 This shows that
\[
\nabla_C \mathcal L
= 4\,QCQ + 2\,\Tr(CQ)\,Q .
\]

Next, since $C = WW^\top - w_\star w_\star^\top$, its differential satisfies
\[
\mathrm dC = \mathrm dW\,W^\top + W\,\mathrm dW^\top .
\]
Applying the chain rule, we obtain
\[
\mathrm d\mathcal L
= \langle \nabla_C \mathcal L,\, \mathrm dC \rangle
= \langle \nabla_C \mathcal L,\, \mathrm dW\,W^\top + W\,\mathrm dW^\top \rangle.
\]
Using the symmetry of $\nabla_C \mathcal L$, this simplifies to
\[
\mathrm d\mathcal L
= 2\,\langle \nabla_C \mathcal L\,W,\, \mathrm dW \rangle,
\]
which implies
\[
\nabla_W \mathcal L(W)
= 2\,\nabla_C \mathcal L\,W.
\]
Substituting the expression of $\nabla_C \mathcal L$ yields the claimed formula for the population gradient.
\end{proof}

\subsection{Three-dimensional reduction of the matrix-valued dynamics}

The spiked covariance model induces a natural decomposition of the ambient space into three components:
the signal direction $w_\star$, the spike direction $v$, and the subspace orthogonal to both, which we
refer to as the \emph{bulk}. At the population level, the loss is invariant under rotations that fix $w_\star$ and $v$. Therefore, if at initialization the alignment with all bulk directions are equal, this property will be kept through gradient updates (see Lemma~\ref{lem:manifold_invariance} for a formal statement). This motivates the definition of the following manifold $\calM$.

Let $P_\perp \coloneqq I_d-w_\star w_\star^\top-vv^\top$ be the projection onto $\{w_\star,v\}^\perp$. We introduce the three-dimensional manifold
\begin{equation}
\label{eq:manifold_def}
\mathcal M
 \coloneqq \Big\{\,M\in\mathbb R^{d\times d}_{\mathrm{sym}}:
M=a\,w_\star w_\star^\top+b\,vv^\top+c\,P_\perp\ \text{ for some }(a,b,c)\in\mathbb R^3\,\Big\}.
\end{equation}
On $\mathcal M$, we use the shorthand
\begin{equation}
\label{eq:r_def}
r(t) \coloneqq \expec (x^\top M(t) x)=\Tr(M(t)Q)=a(t)+(1+\lambda)b(t)+(d-2)c(t).
\end{equation}
The quantity $r(t)$ represents the unnormalized alignment of the network $M$ at time $t$ with the covariance $Q$. In the feature learning regime, it is common to choose a small initialization scaling $\theta$ such that $r(0)= o(1)$ in order to neglect the network scaling effect on the dynamics and approximate the square loss by the correlation loss. Note that ideally $r(t)$ should converge to $\Tr(w_\star w_\star^\top Q)=1$.
To simplify the notation, we will often drop the dependence in $t$ of the functions $r(t), a(t), b(t)$ and $c(t)$ when it is clear from the context.

\invmanifold*

\begin{proof}
    We prove invariance for the discrete-time iterates.

\medskip
\noindent\textbf{Step 1: algebraic structure of $\mathcal M$.}
Let
\[
P_\star  \coloneqq  w_\star w_\star^\top, 
\qquad 
P_v  \coloneqq  vv^\top,
\qquad 
P_\perp  \coloneqq  I - P_\star - P_v .
\]
Since $w_\star\perp v$ and $\|w_\star\|=\|v\|=1$, these are orthogonal projectors satisfying
$P_iP_j=\mathbf 1_{i=j}P_i$ and $P_\star+P_v+P_\perp=I$.
By definition,
\[
\mathcal M
=\bigl\{ aP_\star + bP_v + cP_\perp : a,b,c\in\mathbb R \bigr\},
\]
which is a three-dimensional commutative subalgebra of $\mathbb R^{d\times d}$.
In particular, $\mathcal M$ is closed under addition and matrix multiplication, and all elements of $\mathcal M$ commute with each other.

\medskip
\noindent\textbf{Step 2: structure of $G(M)$.}
Recall that
\[
G(M)=8QCQ+4\Tr(CQ)\,Q,
\qquad
C=M-P_\star,
\qquad
Q=I+\lambda P_v .
\]
If $M\in\mathcal M$, then $C\in\mathcal M$ and $Q\in\mathcal M$.
Since $\mathcal M$ is a commutative algebra, it follows that $QCQ\in\mathcal M$.
Moreover, $\Tr(CQ)$ is a scalar, hence $\Tr(CQ)Q\in\mathcal M$.
Therefore
\[
G(M)\in\mathcal M,
\qquad
\text{and } \quad G(M)M=MG(M).
\]

\noindent\textbf{Step 3: invariance under GD.}
Recall that the GD iterate is
\[
W_{k+1}=W_k-\eta\,G(M_k)W_k,
\qquad M_k \coloneqq W_kW_k^\top .
\]
Assume that $M_k\in\mathcal M$. By Step~2, $G(M_k)\in\mathcal M$, hence $I-\eta G(M_k)\in\mathcal M$.
Using the update and $M_k=W_kW_k^\top$, we obtain
\begin{align*}
M_{k+1}
&=W_{k+1}W_{k+1}^\top
=\bigl(W_k-\eta\,G(M_k)W_k\bigr)\bigl(W_k-\eta\,G(M_k)W_k\bigr)^\top\\
&=\bigl(I-\eta G(M_k)\bigr)\,M_k\,\bigl(I-\eta G(M_k)\bigr)^\top.
\end{align*}
Since $G(M_k)$ is symmetric, $\bigl(I-\eta G(M_k)\bigr)^\top=\bigl(I-\eta G(M_k)\bigr)$.
Because $\mathcal M$ is a commutative subalgebra (Step~1), it is closed under multiplication and addition; therefore
\[
\bigl(I-\eta G(M_k)\bigr)\in\mathcal M,\quad M_k\in\mathcal M
\quad\Longrightarrow\quad
M_{k+1}\in\mathcal M.
\]
By induction, $M_k\in\mathcal M$ for all $k\ge0$ whenever $M_0\in\mathcal M$.

\medskip
\noindent\textbf{Step 4: invariance under SpecGD.}
The SpecGD iterate is
\[
W_{k+1}=W_k-\eta\,\polar\!\bigl(G(M_k)W_k\bigr),
\qquad M_k \coloneqq W_kW_k^\top .
\]
Assume that $M_k\in\mathcal M$. By Step~2, $G(M_k)\in\mathcal M$, and in particular $M_k$ and $G(M_k)$ commute.
Hence there exists an orthogonal matrix $U$ whose first two columns are $w_\star$ and $v$ such that
\[
M_k=U\,\diag(a_k,b_k,c_k I_{d-2})\,U^\top,
\qquad
G(M_k)=U\,\diag(g_{\star,k},g_{v,k},g_{\perp,k}I_{d-2})\,U^\top
\]
for some scalars $a_k,b_k,c_k\ge0$ and $g_{\star,k},g_{v,k},g_{\perp,k}\in\mathbb R$.
Since $M_k=W_kW_k^\top\succeq0$, there exists an orthogonal matrix $V$ such that
\[
W_k=U\,\diag(\sqrt{a_k},\sqrt{b_k},\sqrt{c_k}I_{d-2})\,V^\top .
\]
Consequently,
\[
G(M_k)W_k
=
U\,\diag(g_{\star,k}\sqrt{a_k},\,g_{v,k}\sqrt{b_k},\,g_{\perp,k}\sqrt{c_k}I_{d-2})\,V^\top.
\]
Thus $G(M_k)W_k$ has the same left/right singular spaces $U,V$ and the same $(1,1)$, $(2,2)$ and bulk block structure.
Taking the polar factor, therefore, preserves this block structure:
\[
\polar\!\bigl(G(M_k)W_k\bigr)
=
U\,\diag(s_{\star,k},\,s_{v,k},\,s_{\perp,k}I_{d-2})\,V^\top,
\]
where $s_{\star,k}=\Sign(g_{\star,k}\sqrt{a_k})$, $s_{v,k}=\Sign(g_{v,k}\sqrt{b_k})$,
$s_{\perp,k}=\Sign(g_{\perp,k}\sqrt{c_k})$.
Plugging this into the SpecGD update yields
\[
W_{k+1}
=
U\,\diag\!\bigl(\sqrt{a_k}-\eta s_{\star,k},\ \sqrt{b_k}-\eta s_{v,k},\ (\sqrt{c_k}-\eta s_{\perp,k})I_{d-2}\bigr)\,V^\top,
\]
and hence
\[
M_{k+1}=W_{k+1}W_{k+1}^\top
=
U\,\diag\!\Bigl((\sqrt{a_k}-\eta s_{\star,k})^2,\ (\sqrt{b_k}-\eta s_{v,k})^2,\ (\sqrt{c_k}-\eta s_{\perp,k})^2 I_{d-2}\Bigr)\,U^\top
\in\mathcal M.
\]
By induction, $M_k\in\mathcal M$ for all $k\ge0$ whenever $M_0\in\mathcal M$.

\end{proof}

\begin{remark}[Extension to continuous time]
The invariance argument above is stated and proved in discrete time, which is the main setting of this work.
For GD, the extension to continuous time is straightforward: the gradient flow
$\dot W=-G(M)W$ (and the induced flow $\dot M=-(G(M)M+MG(M))$) has a locally Lipschitz right-hand side, hence admits a unique solution, and tangency of the vector field to $\mathcal M$ implies invariance.

For SpecGF, it is less immediate, since the polar operator is non-smooth and the corresponding ODE is not globally Lipschitz. A clean way to define SpecGF is therefore as a vanishing-step-size limit of SpecGD: one considers the piecewise-linear interpolation of the SpecGD iterates and extracts convergent subsequences as the step size tends to zero. Since SpecGD preserves $\mathcal M$ at every step and $\mathcal M$ is closed, any such limit trajectory remains in $\mathcal M$. This yields invariance for SpecGF without invoking uniqueness or smoothness of the continuous-time dynamics.
\end{remark}

Lemma~\ref{lem:manifold_invariance} shows that if $M(0)\in\mathcal M$, then the matrix flow induced by either GD or SpecGD remains confined to $\mathcal M$ for all $t\ge0$. As a consequence, the evolution of $M(t)$ is fully characterized by the three scalar coefficients $a(t)$, $b(t)$, and $c(t)$. The following propositions make this reduction explicit by deriving the corresponding ODEs governing the dynamics of these quantities.

\gdreduction*
\begin{proof}
A direct computation then yields
\[
C=(a-1)\,w_\star w_\star^\top+b\,vv^\top+c\,P_\perp.
\]

Since $w_\star\perp v$ and $P_\perp$ projects onto their orthogonal complement,
the projectors $w_\star w_\star^\top$, $vv^\top$, and $P_\perp$ are mutually orthogonal.
Using this decomposition and the fact that $Q$ acts as multiplication by
$1+\lambda$ on $\mathrm{span}\{v\}$ and as the identity on its orthogonal complement,
we obtain
\[
QCQ
=(a-1)\,w_\star w_\star^\top+(1+\lambda)^2 b\,vv^\top+c\,P_\perp.
\]
Moreover, taking the trace gives
\[
\Tr(CQ)
=(a-1)+(1+\lambda)b+(d-2)c
=r-1,
\]
where $r=a+(1+\lambda)b+(d-2)c$.
Substituting these expressions into the definition of $G(M)$ yields the eigenvalues stated in \eqref{eq:g_eigs}.

Next, observe that both $M$ and $G(M)$ are diagonal in the same orthogonal
decomposition
\[
\mathbb R^d
=
\mathrm{span}\{w_\star\}
\;\oplus\;
\mathrm{span}\{v\}
\;\oplus\;
\mathrm{span}\{w_\star,v\}^\perp.
\]
In particular, they commute.
As a consequence, the GD flow,
\[
\dot M=-\bigl(G(M)M+MG(M)\bigr),
\]
simplifies on $\mathcal M$ to
\[
\dot M=-2G(M)M.
\]

Finally, since $G(M)$ acts by scalar multiplication on each of the three invariant
subspaces, the evolution of $M$ preserves this decomposition.
Projecting the above equation onto $\mathrm{span}\{w_\star\}$, $\mathrm{span}\{v\}$,
and $\mathrm{span}\{w_\star,v\}^\perp$ respectively yields the scalar ODEs
\eqref{eq:gd_abc_ode} for $a(t)$, $b(t)$, and $c(t)$.
\end{proof}

\specreduction*

\begin{proof}
Fix a time $t$ such that $M(t)\in\mathcal M$, and write
\[
M= a\,w_\star w_\star^\top + b\,vv^\top + c\,P_\perp.
\]
Since $w_\star\perp v$, there exists an orthogonal matrix
\[
U=\bigl[w_\star,\; v,\; U_\perp\bigr]\in\R^{d\times d}
\]
such that
\[
M = U\,\diag(a,b,c,\dots,c)\,U^\top.
\]
On $\mathcal M$, the matrix $G(M)$ is diagonal in the same decomposition, hence
\[
G(M)=U\,\diag(g_{w_\star},g_v,g_\perp,\dots,g_\perp)\,U^\top .
\]

Let $W$ be any factor of $M$, i.e.\ $M=WW^\top$.
Choose an SVD of $W$ whose left singular vectors are $U$:
\[
W = U
\begin{bmatrix}
D & 0
\end{bmatrix}
V^\top,
\qquad
D=\diag(\sqrt a,\sqrt b,\sqrt c,\dots,\sqrt c),
\]
where $V\in\R^{m\times m}$ is orthogonal and the block $[D\;0]$ is interpreted in the
natural way when $m\neq d$ (in particular, if some of $a,b,c$ vanish then the
corresponding diagonal entries of $D$ are $0$).

Set
\[
\Gamma \coloneqq \diag(g_{w_\star},g_v,g_\perp,\dots,g_\perp).
\]
Then
\[
G(M)W
=
U\Gamma U^\top\,U
\begin{bmatrix}
D & 0
\end{bmatrix}
V^\top
=
U
\begin{bmatrix}
\Gamma D & 0
\end{bmatrix}
V^\top .
\]

We now compute the polar factor. We use the orthogonal equivariance of the polar map:
for orthogonal $U,V$, $\polar(UAV^\top)=U\,\polar(A)\,V^\top$.
Hence
\[
\polar\!\bigl(G(M)W\bigr)
=
U\,
\polar\!\Bigl(
\begin{bmatrix}
\Gamma D & 0
\end{bmatrix}
\Bigr)\,
V^\top .
\]
Since $\Gamma D$ is diagonal, the rows of $[\Gamma D\;0]$ are mutually orthogonal.
A direct calculation from $\polar(A)=A(A^\top A)^{-1/2}$ then gives
\[
\polar\!\Bigl(
\begin{bmatrix}
\Gamma D & 0
\end{bmatrix}
\Bigr)
=
\begin{bmatrix}
S & 0
\end{bmatrix},
\qquad
S=\diag\!\bigl(\sgn(g_{w_\star}),\sgn(g_v),\sgn(g_\perp),\dots,\sgn(g_\perp)\bigr),
\]
with the convention $\sgn(0)=0$ (note that if a diagonal entry of $D$ equals $0$, the
corresponding row of $[\Gamma D\;0]$ is zero and remains zero after applying $\polar$).

Therefore the SpecGD flow $\dot W=-\polar(G(M)W)$ satisfies
\[
\dot W
=
-U
\begin{bmatrix}
S & 0
\end{bmatrix}
V^\top .
\]
Using $\dot M=\dot W W^\top + W\dot W^\top$ and the above decompositions, we obtain
\[
\dot M
=
-U\Bigl(SD + DS\Bigr)U^\top,
\qquad
D=\diag(\sqrt a,\sqrt b,\sqrt c,\dots,\sqrt c).
\]
Since $S$ and $D$ are diagonal, $SD+DS=2SD$, and thus
\[
\dot M
=
-2\,U\,\diag\!\bigl(\sgn(g_{w_\star})\sqrt a,\;\sgn(g_v)\sqrt b,\;
\sgn(g_\perp)\sqrt c,\dots,\sgn(g_\perp)\sqrt c\bigr)\,U^\top.
\]
Projecting onto $\mathrm{span}\{w_\star\}$, $\mathrm{span}\{v\}$, and
$\mathrm{span}\{w_\star,v\}^\perp$ yields
\[
\dot a=-2\,\sgn(g_{w_\star})\,\sqrt a,\qquad
\dot b=-2\,\sgn(g_v)\,\sqrt b,\qquad
\dot c=-2\,\sgn(g_\perp)\,\sqrt c,
\]
which is \eqref{eq:spec_abc}. Finally, writing $\alpha=\sqrt a$, $\beta=\sqrt b$,
$\gamma=\sqrt c$ and using $\dot\alpha=\dot a/(2\sqrt a)$ (with the convention
$\dot\alpha=0$ when $a=0$) gives \eqref{eq:spec_sqrt}.
\end{proof}

\subsection{Values at initialization}

We initialize
\[
W(0)=\theta\,U,
\qquad U\in\mathrm{St}(d,d)=\mathsf O(d).
\]
Since $UU^\top=I$, the induced matrix initialization is 
\[
M(0)=\theta^2 I
=\theta^2\,(P_\star+P_v+P_\perp)\in\mathcal M.
\]
Therefore,
\[
a_0=b_0=c_0=\theta^2.
\]
The corresponding weighted mass is
\[
r_0 \coloneqq a_0+(1+\lambda)b_0+(d-2)c_0
=(d+\lambda)\theta^2.
\]
We choose $\theta^2=\rho_0/(d+\lambda)$ with $\rho_0=o(1)$, so that
\[
r_0=\rho_0,
\qquad
a_0=b_0=c_0=\frac{\rho_0}{d+\lambda}=:\mu.
\]
In particular, the initial alignment satisfies
\[
\Align(0)
=\frac{a_0}{\sqrt{a_0^2+b_0^2+(d-2)c_0^2}}
=\frac{1}{\sqrt d}.
\]

\section{Dynamics analysis of continuous GD}
\label{app:continuous_gd}

Throughout this subsection we work on the invariant manifold $\mathcal M$ guaranteed by
Lemma~\ref{lem:manifold_invariance}.
Hence $M(t)=a(t)\,w_\star w_\star^\top+b(t)\,vv^\top+c(t)\,P_\perp$ and
\[
r(t)=a(t)+(1+\lambda)b(t)+(d-2)c(t).
\]
Recall that the reduced GD dynamics is given by
\begin{equation}
\label{eq:gd_abc_explicit}
\dot a = a\,(24-16a-8r),\qquad
\dot b = 8(1+\lambda)(1-r)\,b-16(1+\lambda)^2 b^2,\qquad
\dot c = 8(1-r)\,c-16c^2.
\end{equation}

We will also track the (Frobenius) alignment with the signal direction:
\[
\Align(t) \coloneqq \frac{\langle M(t),\,w_\star w_\star^\top\rangle_F}{\|M(t)\|_F\cdot\|w_\star w_\star^\top\|_F}
=\frac{a(t)}{\sqrt{a(t)^2+b(t)^2+(d-2)c(t)^2}}.
\]

Define the \emph{spike mass} and \emph{bulk mass}
\[
B(t) \coloneqq (1+\lambda)b(t),\qquad C(t) \coloneqq (d-2)c(t),
\qquad\text{so that}\qquad r(t)=a(t)+B(t)+C(t).
\]
Multiplying the $b$-equation in \eqref{eq:gd_abc_explicit} by $(1+\lambda)$ yields
\begin{equation}
\label{eq:B_logistic}
\dot B
=
8(1+\lambda)\,B\bigl((1-r)-2B\bigr)
=
8(1+\lambda)\,B\bigl(1-a-C-3B\bigr).
\end{equation}
In particular,
\begin{equation}
\label{eq:b_sign_condition}
\dot b\le 0
\quad\Longleftrightarrow\quad
\dot B\le 0
\quad\Longleftrightarrow\quad
B \;\ge\; \frac{1-a-C}{3}.
\end{equation}
We will use this to define the ``turning'' time when the spike starts contracting.

\begin{definition}[Key stopping times]
\label{def:gd_spike_times}
Let $\rho\in(0,1/12)$ and $\varepsilon\in(0,1/4]$ be fixed constants.
Define
\begin{equation}
\label{eq:T1_T2_T3_T4}
\begin{aligned}
T_{1a} & \coloneqq  \inf\{t \ge 0 : r(t) \ge \rho\}, \\
T_1 & \coloneqq  \inf\{t \ge T_{1a} : \dot b(t) \le 0\}, \\
T_{2a} & \coloneqq  \inf\{t \ge T_1 : r(t) \ge 1 - \varepsilon\}, \\
T_2 & \coloneqq  \inf\{t \ge T_{2a} : a(t) \ge \rho \}.
\end{aligned}
\end{equation}
\end{definition}

\paragraph{Four-phase picture.}
With these stopping times, the GD trajectory admits the following four phases:
\begin{itemize}
\item \textbf{Phase Ia (fast spike escape):} For $t\in[0,T_{1a}]$, $r$ grows from $\rho_0=o(1)$ to $\rho$ and the spike term drives the growth. Furthermore, during this phase $a(t), b(t)$ and $c(t)$ grow exponentially in a decoupled way.

\item \textbf{Phase Ib (spike saturation to the turning point):} For $t\in[T_{1a},T_1]$, $B$ rapidly approaches the point where $\dot b$ changes sign.

\item \textbf{Phase IIa (bulk inflation to the $r\simeq 1$ band):} For $t\in[T_1,T_{2a}]$,  $r$ grows from $\simeq 1/3$ to $\simeq 1$ thanks to the growth of the bulk mass $C(t)=(d-2)c(t)$, while the signal
$a(t)$ remains negligible, and the alignment is still small.

\item \textbf{Phase IIb (signal amplification and alignment):} $t\in[T_{2a},T_2]$, where the signal coefficient $a(t)$ grows from a tiny value to a constant ($a\ge \rho$), and consequently the alignment becomes close to $1$ (because $\lambda b(t)^2$ and $(d-2)c(t)^2$ are $o(1)$).
\end{itemize}

The following bounds will be used repeatedly.

\gduniformbarriers*

\begin{proof}
First, note that positivity is a consequence of the definition of the functions $a$, $b$ and $c$.

The reduced population dynamics define a smooth autonomous ODE in $(a,B,C)$. Hence $a(t),B(t),C(t)$ are $C^1$ functions of time.
As a result, if one of the variables were to exit a given interval, it would first have to reach the boundary of that interval, where we may evaluate the sign of the corresponding derivative.

\smallskip\noindent
\textbf{Barrier for $a$.}
If $a(t)=1$, then $r(t)\ge a(t)=1$, so
\[
\dot a(t)=a(t)\bigl(24-16a(t)-8r(t)\bigr)=24-16-8r(t)=8(1-r(t))\le 0.
\]
Thus $a$ cannot cross above $1$.

\smallskip\noindent
\textbf{Barrier for $B$.} Recall equation~\eqref{eq:B_logistic}.
If $B(t)= 1/3$, then since $a(t),C(t)\ge 0$,
\[
1-a(t)-C(t)-3B(t)\le 1-3B(t)\le 0,
\]
and \eqref{eq:B_logistic} gives $\dot B(t)\le 0$. Hence $B$ cannot cross above $1/3$.

\smallskip\noindent
\textbf{Barrier for $C$.}
Write $\dot C=(d-2)\dot c=8(1-r)C-16C^2/(d-2)$. If $C(t)=1$, then $r(t)\ge C(t)=1$, so
\[
\dot C(t)\le -\frac{16}{d-2}<0,
\]
and $C$ cannot cross above $1$.

Finally $r=a+B+C\le 1+1/3+1=7/3$. \qedhere
\end{proof}

\subsection{Phase Ia: fast spike-driven escape to $r=\rho$}
\label{subsubsec:gd_stageI_spike}

We quantify the spike-driven escape to the macroscopic level $r=\rho$ and show that at time $T_{1a}$
the spurious spike component $B(t)=(1+\lambda)b(t)$ already carries essentially all the mass,
while the signal component $a(t)$ and the bulk component $C(t)=(d-2)c(t)$ remain negligible.

\begin{proposition}
\label{lem:gd_stageI_expgrowth_correct}
Assume GD is initialized as in Section~\ref{sec:framework} with $\rho_0=\log^{-c_0}d$ and Assumption~\ref{ass:lambda} is satisfied.
Then the following holds.
\begin{enumerate}
\item \textbf{(Exponential growth while $r$ is small).}
For all $t\in[0,T_{1a}]$ one has $r(t)\le \rho$, and
\begin{align}
\label{eq:stageI_a_bounds_correct}
a(0)\,e^{24(1-\rho)t}\ \le\ & \; a(t)\ \le\ a(0)\,e^{24t},\\
\label{eq:stageI_b_bounds_correct}
b(0)\,e^{8(1+\lambda)(1-3\rho)t}\ \le\ &\; b(t)\ \le\ b(0)\,e^{8(1+\lambda)t},\\
\label{eq:stageI_c_bounds_correct}
c(0)\,e^{\bigl(8(1-\rho)-16\rho/(d-2)\bigr)t}\ \le\ &\; c(t)\ \le\ c(0)\,e^{8t}.
\end{align}

\item \textbf{(Escape time scale).}
The escape time satisfies
\begin{equation}
\label{eq:T1_scale_correct}
T_{1a} = O\!\Big(\frac{1}{\lambda}\log d\Big).
\end{equation}

\item \textbf{(Spike dominance at escape).}
Let $B(t)=(1+\lambda)b(t)$ and $C(t)=(d-2)c(t)$ so that $r(t)=a(t)+B(t)+C(t)$.
Then
\begin{equation}
\label{eq:stageI_small_ac_at_T1_correct}
a(T_{1a})=O(\frac{\rho_0}{d}),\qquad C(T_{1a})=O(\rho_0),
\end{equation}
and therefore, using $r(T_{1a})=\rho$,
\begin{equation}
\label{eq:stageI_B_at_T1_correct}
B(T_{1a})=\rho-a(T_{1a})-C(T_{1a})=\rho\,(1+o(1)).
\end{equation}
\end{enumerate}
\end{proposition}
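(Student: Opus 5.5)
The plan is to work directly with the explicit reduced system \eqref{eq:gd_abc_explicit} and sandwich each coefficient between two exponentials on $[0,T_{1a}]$, using only that $0\le r(t)\le\rho$ there. The bound $r\le\rho$ on $[0,T_{1a}]$ holds by definition of $T_{1a}$ and continuity of $r$, since $r(0)=\rho_0<\rho$. Positivity of $a,b,c$ follows from the multiplicative form of the ODEs.

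\textbf{Step 1 (item 1).} I write each equation as a log-derivative. For the signal,
\[
\tfrac{d}{dt}\log a=24-16a-8r .
\]
Since $0\le a\le r\le\rho$, this lies in $[24-24\rho,\,24]$, and integrating gives \eqref{eq:stageI_a_bounds_correct}.

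For the spike, $\tfrac{d}{dt}\log b=8(1+\lambda)\bigl(1-r-2B\bigr)$. Using $B\le r\le\rho$, this lies in $[8(1+\lambda)(1-3\rho),\,8(1+\lambda)]$, which gives \eqref{eq:stageI_b_bounds_correct}.

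For the bulk, $\tfrac{d}{dt}\log c=8(1-r)-16c$. Using $c=C/(d-2)\le\rho/(d-2)$, this lies in $[8(1-\rho)-16\rho/(d-2),\,8]$, which gives \eqref{eq:stageI_c_bounds_correct}.

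\textbf{Step 2 (item 2).} I lower bound $r$ by $B$ and use the spike lower bound:
\[
r(t)\ge B(t)\ge (1+\lambda)\mu\, e^{8(1+\lambda)(1-3\rho)t},\qquad \mu=\rho_0/(d+\lambda).
\]
Setting the right-hand side equal to $\rho$ gives
\[
T_{1a}\le \frac{\log\bigl(\rho(d+\lambda)/((1+\lambda)\rho_0)\bigr)}{8(1+\lambda)(1-3\rho)} .
\]
Since $\rho<1/12$, the denominator is $\gtrsim\lambda$. Since $\rho_0=\log^{-c_0}d$ and $\lambda\lesssim d$, the numerator is $O(\log d+\log\log d)=O(\log d)$. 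This yields \eqref{eq:T1_scale_correct}.

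\textbf{Step 3 (item 3).} I combine the upper bounds from Step 1 with the bound on $T_{1a}$:
\[
a(T_{1a})\le \mu\, e^{24T_{1a}},\qquad C(T_{1a})\le (d-2)\mu\, e^{8T_{1a}} .
\]
The key point is that $T_{1a}=O(\log d/\lambda)$ with a constant I must track. From Step 2 one has
\[
24\,T_{1a}\le \frac{3\log(\cdots)}{(1+\lambda)(1-3\rho)} .
\]
By Assumption~\ref{ass:lambda}, $\lambda\ge c\log d$, so $e^{24T_{1a}}=O(1)$ and likewise $e^{8T_{1a}}=O(1)$. Hence $a(T_{1a})=O(\mu)=O(\rho_0/d)$ and $C(T_{1a})=O(d\mu)=O(\rho_0)$. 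Finally, $r(T_{1a})=\rho$ by continuity, so $B(T_{1a})=\rho-O(\rho_0)=\rho(1+o(1))$ because $\rho_0=o(1)$ and $\rho$ is a fixed constant.

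\textbf{Main obstacle.} The delicate step is Step 3: showing that the factors $e^{24T_{1a}}$ and $e^{8T_{1a}}$ stay bounded. This needs the lower bound $\lambda\gtrsim\log d$ with the implicit constant absorbing $\log(d/\rho_0)$.

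If Assumption~\ref{ass:lambda} holds only with a small constant, the bounds degrade to $a(T_{1a})\le\mu\,d^{O(1/c)}$. In that case I would instead compare the ratios directly, via
\[
\frac{a}{b}\le e^{-(8(1+\lambda)(1-3\rho)-24)t},\qquad \frac{C}{B}=\frac{d-2}{1+\lambda}\cdot\frac{c}{b}.
\]
This route keeps the conclusion $B(T_{1a})=\rho(1+o(1))$, accepting possibly weaker but still $o(1)$ bounds on $a$ and $C$.
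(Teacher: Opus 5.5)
Your proof is correct and is essentially the paper's argument: log-derivative sandwiches using $a,B\le r\le\rho$ and $c\le\rho/(d-2)$, then the lower bound $(1+\lambda)\mu\,e^{8(1+\lambda)(1-3\rho)T_{1a}}\le B(T_{1a})\le\rho$ to bound $T_{1a}$, then $e^{O(\log d/\lambda)}=O(1)$ to control $a$ and $C$ at escape. The worry in your ``main obstacle'' paragraph is unfounded, so the ratio-comparison fallback is never needed: when $\lambda\ge c\log d$, your own bound gives $e^{24T_{1a}}\le(\rho d/\rho_0)^{3/((1+\lambda)(1-3\rho))}=e^{O(1/c)}$, which is a constant for any fixed $c>0$ rather than $d^{O(1/c)}$.
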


\begin{proof}

Fix $\rho\in(0,1/12)$ and recall that by definition of $T_{1a}$,
$r(t)\le \rho$ for all $t\in[0,T_{1a}]$.

\medskip
\noindent\textbf{Step 1: exponential growth bounds for $a,b,c$ on $[0,T_{1a}]$.}

\smallskip
\noindent\emph{Bounds for $a$.}
For $t\in[0,T_{1a}]$, since $a(t)\le r(t)\le\rho$, we have
\[
\frac{\dot a(t)}{a(t)}
=24-16a(t)-8r(t)
\le 24,
\qquad
\frac{\dot a(t)}{a(t)}
\ge 24-16\rho-8\rho
=24(1-\rho).
\]
Integrating yields \eqref{eq:stageI_a_bounds_correct}.

\smallskip
\noindent\emph{Bounds for $b$.}
Writing $B(t)=(1+\lambda)b(t)$, we have
\[
\frac{\dot b(t)}{b(t)}
=8(1+\lambda)\bigl(1-a(t)-(d-2)c(t)-3B(t)\bigr).
\]
Since $a(t)+(d-2)c(t)+B(t)=r(t)\le\rho$ and $B(t)\le r(t)\le\rho$, it follows that
\[
1-3\rho \le 1-a(t)-(d-2)c(t)-3B(t) \le 1.
\]
Integrating yields \eqref{eq:stageI_b_bounds_correct}.

\smallskip
\noindent\emph{Bounds for $c$.}
We have
\[
\frac{\dot c(t)}{c(t)} = 8(1-r(t)) -16c(t).
\]
Since $r(t)\le\rho$ and $c(t)\le r(t)/(d-2)\le \rho/(d-2)$ on $[0,T_{1a}]$,
\[
8(1-\rho)-\frac{16\rho}{d-2}
\le \frac{\dot c(t)}{c(t)}
\le 8.
\]
Integrating yields \eqref{eq:stageI_c_bounds_correct}.

\medskip
\noindent\textbf{Step 2: escape time scale.}
Since $r(t)\le\rho$ on $[0,T_{1a}]$, we have $B(t)\le\rho$.
Using the lower bound on $b(t)$,
\[
B(t)\ge (1+\lambda)b(0)\exp\!\bigl(8(1+\lambda)(1-3\rho)t\bigr).
\]
At time $T_{1a}$, this gives
\[
(1+\lambda)b(0)\exp\!\bigl(8(1+\lambda)(1-3\rho)T_{1a}\bigr)\le \rho,
\]
and hence
\[
T_{1a}
\le \frac{1}{8(1+\lambda)(1-3\rho)}
\log\!\Bigl(\frac{\rho}{(1+\lambda)b(0)}\Bigr).
\]
Since $b(0)\asymp \rho_0/(d+\lambda)$, we have
\[
\log\!\Bigl(\frac{\rho}{(1+\lambda)b(0)}\Bigr)
=\Theta(\log d),
\]
which proves \eqref{eq:T1_scale_correct}.

\medskip
\noindent\textbf{Step 3: spike dominance at escape.}
Using the upper bounds from Step~1 and $T_{1a}=O(\frac{\log d}{\lambda})$, we have
\[
a(T_{1a})\le a(0)e^{24T_{1a}},
\qquad
c(T_{1a})\le c(0)e^{8T_{1a}}.
\]
With $a(0)\asymp c(0)\asymp \rho_0/(d+\lambda)$ and $\lambda\gtrsim \log d$,
\[
e^{O(\log d/\lambda)}=O(1),
\]
so
\[
a(T_{1a})\lesssim \frac{\rho_0}{d+\lambda}=o(1),
\qquad
C(T_{1a})=(d-2)c(T_{1a})\lesssim \rho_0=o(1).
\]
Since $r(T_{1a})=\rho$, this implies
\[
B(T_{1a})=\rho-a(T_{1a})-C(T_{1a})=\rho(1+o(1)),
\]
which completes the proof.
\end{proof}

\subsection{Phase Ib: rapid saturation of the spike to its turning point ($r\simeq 1/3$)}
\label{subsubsec:gd_stageII_spike}

After Stage~I, the spurious spike mass $B(t)=(1+\lambda)b(t)$ is already of constant order,
while the signal and bulk components remain negligible.
Define 
\[
K(t) \coloneqq \frac{1-a(t)-C(t)}{3}.
\]
Phase Ib corresponds to the short time interval during which $B(t)$ rapidly increases
from level $\rho$ to its turning point, where $\dot b$ changes sign, i.e. where
$B(t)=K(t)$.

\begin{proposition}
\label{prop:gd_stageII_fast}
Let
$T_{1a},T_1$ be as in Definition~\ref{def:gd_spike_times}. Then the following holds.
\begin{enumerate}
\item \textbf{(Fast time scale).} The duration of Phase Ib satisfies
\begin{equation}
\label{eq:T2_minus_T1_scale}
T_1-T_{1a}
=O\!\left(\frac{1}{\lambda}\log\frac{1}{\rho_0}\right)
=O\!\Big(\frac{\log\log d}{\lambda}\Big).
\end{equation}

\item \textbf{(Stability of slow components).} The signal and bulk components remain
essentially unchanged during Stage~II:
\begin{equation}
\label{eq:stageII_ac_stable}
a(T_1)=a(T_{1a})\,(1+o(1)),
\qquad
C(T_1)=C(T_{1a})\,(1+o(1)).
\end{equation}

\item \textbf{(Turning location).} At time $T_1$,
\begin{equation}
\label{eq:stageII_turning}
B(T_1)=\frac{1}{3}\,(1+o(1)),
\qquad
r(T_1)=\frac{1}{3}\,(1+o(1)).
\end{equation}
\end{enumerate}
\end{proposition}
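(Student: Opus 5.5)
We start from the state at $T_{1a}$ given by Proposition~\ref{lem:gd_stageI_expgrowth_correct}: $B(T_{1a})=\rho(1+o(1))$, $a(T_{1a})=O(\rho_0/d)$ and $C(T_{1a})=O(\rho_0)$. On Phase Ib we compare the logistic equation \eqref{eq:B_logistic},
\[
\dot B = 8(1+\lambda)B\bigl(3K(t)-3B\bigr),\qquad K=\tfrac{1-a-C}{3},
\]
with the slow equations for $a$ and $C$. The spike runs on the fast scale $(1+\lambda)^{-1}$, while $a$ and $C$ run on the $O(1)$ scale. The plan has three steps. First, freeze $a$ and $C$ approximately on a window of length $\tau=O(\lambda^{-1}\log(1/\rho_0))$. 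Second, show that $B$ reaches $K$ within that window. Third, read off the turning location.

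\textbf{Step 1 (slow components are frozen).} Lemma~\ref{lem:gd_uniform_barriers} gives $r\le 7/3$ for all $t$, so $|\dot a/a|\le 24$ and $|\dot C/C|\le 8(1+7/3)+16/(d-2)=O(1)$. Integrating over $[T_{1a},T_{1a}+\tau]$ bounds the ratios $a(t)/a(T_{1a})$ and $C(t)/C(T_{1a})$ between $e^{\mp O(\tau)}$. Since $\tau=O(\log\log d/\lambda)=o(1)$ under Assumption~\ref{ass:lambda}, these ratios are $1+o(1)$. Once Step 2 shows $T_1\le T_{1a}+\tau$, this yields \eqref{eq:stageII_ac_stable}. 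It also gives $K(t)=\tfrac13(1-O(\rho_0))$ uniformly on the window.

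\textbf{Step 2 (fast time scale).} On $[T_{1a},T_1)$ we have $\dot b>0$ by the definition of $T_1$, so $B<K$. Set $u\coloneqq K_{\min}-B$, where $K_{\min}$ is the minimum of $K$ on the window. The only delicate point is that $K$ drifts slightly. We can either compare $B$ with the solution of the constant-coefficient logistic $\dot{\underline B}=24(1+\lambda)\underline B(K_{\min}-\underline B)$, or note directly that $\dot K=O(\rho_0)$ is negligible against the fast rate. The comparison works as follows. The logistic starting from $\rho<1/12<K_{\min}$ reaches $K_{\min}(1-\epsilon)$ in time at most $\frac{C}{(1+\lambda)}\bigl(\log\frac{K_{\min}}{\rho}+\log\frac1\epsilon\bigr)$. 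Taking $\epsilon\asymp\rho_0$ gives time $O(\lambda^{-1}\log(1/\rho_0))$.

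We still have to show that $B$ actually meets $K$, that is, that $\dot b$ changes sign. Near $B\approx K$ the gap $K-B$ contracts at rate about $8(1+\lambda)$. Meanwhile $K$ itself decreases, because $C$ grows: $\dot K=-(\dot a+\dot C)/3<0$ as long as $r<1$. So once $K-B$ is of order $\rho_0/\lambda$, the decreasing $K$ is crossed by $B$ in an additional $O(1/\lambda)$ time. This is the main obstacle, and the plan is to handle it with a short intermediate value argument on $K-B$ using the sign of $\dot K$.

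\textbf{Step 3 (turning location).} At $T_1$ we have $B(T_1)=K(T_1)=\tfrac13(1-a-C)=\tfrac13(1+o(1))$ by Step 1. Then $r(T_1)=a+B+C=\tfrac13(1+o(1))$, since $a$ and $C$ are $O(\rho_0)$. Finally, $\log(1/\rho_0)=\log\log d$ under Assumption~\ref{ass:theta} gives the stated form of \eqref{eq:T2_minus_T1_scale}.
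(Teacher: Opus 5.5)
Your Steps 1 and 3 are sound. Step 2, however, does not give the rate claimed in item 1, and the shortfall is exactly at the point you call the main obstacle.

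With $\epsilon\asymp\rho_0$, the logistic comparison only gives $K-B=O(\rho_0)$ after time $O(\lambda^{-1}\log(1/\rho_0))$. From a gap of that size, the drift $|\dot K|\asymp\rho_0$ alone needs time $O(1)$ to close it. Contracting the gap to $O(\rho_0/\lambda)$ at rate $\approx 8(1+\lambda)$, which is where your ``additional $O(1/\lambda)$'' step applies, costs another $\frac{\log\lambda}{8(1+\lambda)}$.

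Done carefully, your mechanism runs as follows. On $[T_{1a},T_1)$ set $E=K-B>0$. You have $B\ge\rho/2$, $r\le 1/2$, and $C\ge C(T_{1a})\ge C(0)\asymp\rho_0$. You need this lower bound on $C$, not just $C(T_{1a})=O(\rho_0)$, to make $\dot K<0$ quantitative: it gives $\dot K\le -C\le -\beta$. Together these yield $\dot E\le-\alpha E-\beta$ with $\alpha=12\rho(1+\lambda)$ and $\beta\asymp\rho_0$, hence
\[
T_1-T_{1a}\le\frac1\alpha\log\Bigl(1+\frac{\alpha E(T_{1a})}{\beta}\Bigr)=O\Bigl(\frac1\lambda\log\frac{\lambda}{\rho_0}\Bigr).
\]
This also settles finiteness of $T_1$.

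The extra $\log\lambda$ is not an artifact of your method. Run the same comparison in the other direction, using $B\le 1/3$ and $|\dot K|\le 8(a+C)\le c_1\rho_0$ on an $O(1)$ window. This gives $\dot E\ge -c_1\rho_0-8(1+\lambda)E$, so $E$ stays positive until time $\frac{1}{8(1+\lambda)}\log\bigl(1+\frac{8(1+\lambda)E(T_{1a})}{c_1\rho_0}\bigr)$.

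Hence $T_1-T_{1a}=\Theta(\lambda^{-1}\log(\lambda/\rho_0))$. The bound $O(\lambda^{-1}\log(1/\rho_0))$ in item 1 therefore fails whenever $\log\lambda\gg\log\log d$, for example $\lambda\asymp d$, which Assumption~\ref{ass:lambda} allows.

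What you should prove is this corrected rate. It is still $o(1)$, because $\lambda\gtrsim\log d$ and $\log(1/\rho_0)=\log\log d$. That is all items 2 and 3 need: run your Step 1 with $\tau$ of this size.

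The paper's continuous-time proof does not close this gap either. It applies Duhamel to $E$ and posits $E(T_1)=0$, taking $T_1<\infty$ for granted. It then uses only an upper bound on $|\dot K|$. The resulting inequality $e^{-(\mu_\rho\lambda-24)\Delta}\lesssim\rho_0/\lambda$ bounds $\Delta$ from below, not from above.

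Your use of the strict negativity of $\dot K$ is the ingredient that actually forces the crossing. It is the continuous analogue of the paper's discrete affine contraction $E_{k+1}\le(1-\alpha)E_k-\beta$, whose bootstrap step indeed carries the factor $\log((1+\lambda)/\rho_0)$.
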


\begin{proof}
Recall that $B$ satisfies
\begin{equation}
\label{eq:B_logistic_stageII_corr}
\dot B(t)=8(1+\lambda)\,B(t)\bigl(1-a(t)-C(t)-3B(t)\bigr).
\end{equation}
Define $E(t) \coloneqq K(t)-B(t)$.
By definition of $T_1$ and \eqref{eq:b_sign_condition}, for $t\in[T_{1a},T_1)$ we have $\dot b(t)>0$,
hence $\dot B(t)>0$, which implies $B(t)<K(t)$. By continuity,
\begin{equation}
\label{eq:E_T2_zero_corr}
B(T_1)=K(T_1).
\end{equation}

\smallskip\noindent
\textbf{Step 1: monotonicity and uniform bounds for $B$ on $[T_{1a},T_1]$.}
Since $\dot B(t)>0$ on $[T_{1a},T_1)$, $B$ is increasing and therefore
\begin{equation}
\label{eq:B_lower_corr}
B(t)\ge B(T_{1a})=\rho(1+o(1))\ge \rho/2,
\qquad t\in[T_{1a},T_1],
\end{equation}
for $d$ large enough. Moreover, because $B(t)\le K(t)\le 1/3$ on $[T_{1a},T_1]$,
\begin{equation}
\label{eq:B_upper_corr}
0\le B(t)\le \frac13,
\qquad t\in[T_{1a},T_1].
\end{equation}

\smallskip\noindent
\textbf{Step 2: Duhamel formula.}
Differentiating $E(t)=K(t)-B(t)$ and using \eqref{eq:B_logistic_stageII_corr} yields
\begin{equation}
\label{eq:E_ode_corr}
\dot E(t)=\dot K(t)-24(1+\lambda)B(t)E(t),
\qquad
\dot K(t)= -\frac{\dot a(t)+\dot C(t)}{3}.
\end{equation}
Apply variation-of-constants on $[T_{1a},T_1]$:
\begin{equation}
\label{eq:Duhamel_corr}
E(T_1)
=
E(T_{1a})e^{-\int_{T_{1a}}^{T_1}24(1+\lambda)B(u)\,du}
+\int_{T_{1a}}^{T_1} e^{-\int_s^{T_1}24(1+\lambda)B(u)\,du}\,\dot K(s)\,ds.
\end{equation}

We now upper bound the forcing term $\dot K$ using only crude growth bounds.
Since $\dot a\le 24a$ and $\dot C\le 8C$, for all $t\ge T_{1a}$,
\begin{equation}
\label{eq:ac_crude_corr}
a(t)+C(t)\le (a(T_{1a})+C(T_{1a}))e^{24(t-T_{1a})}.
\end{equation}
Moreover, $|\dot K(t)|\le \frac{|\dot a(t)|+|\dot C(t)|}{3}\le 8(a(t)+C(t))$, hence
\begin{equation}
\label{eq:Kdot_crude_corr}
|\dot K(t)|\le 8(a(T_{1a})+C(T_{1a}))e^{24(t-T_{1a})}.
\end{equation}

Let $\Delta \coloneqq T_1-T_{1a}$. Using \eqref{eq:B_lower_corr} and $(1+\lambda)\ge \lambda$, we have
\[
\int_{T_{1a}}^{T_1}24(1+\lambda)B(u)\,du \ \ge\ 24\lambda\cdot\frac{\rho}{2}\,\Delta
=: \mu_\rho \lambda \Delta,
\qquad \mu_\rho \coloneqq 12\rho,
\]
and similarly $\int_s^{T_1}24(1+\lambda)B(u)\,du \ge \mu_\rho\lambda(T_1-s)$.
Taking absolute values in \eqref{eq:Duhamel_corr} and using \eqref{eq:E_T2_zero_corr} gives
\[
E(T_{1a})e^{-\mu_\rho\lambda\Delta}
\le
\int_{T_{1a}}^{T_1} e^{-\mu_\rho\lambda(T_1-s)}\,|\dot K(s)|\,ds.
\]
Plugging \eqref{eq:Kdot_crude_corr} and changing variables $u=T_1-s$ yields
\[
E(T_{1a})e^{-\mu_\rho\lambda\Delta}
\le
8(a(T_{1a})+C(T_{1a}))e^{24\Delta}\int_{0}^{\Delta}e^{-\mu_\rho\lambda u}\,du
\le
\frac{8}{\mu_\rho\lambda}\,(a(T_{1a})+C(T_{1a}))e^{24\Delta}.
\]
Since $E(T_{1a})=\frac13-\rho+o(1)=\Theta(1)$ and $a(T_{1a})+C(T_{1a})=O(\rho_0)$ (Phase Ia),
we obtain
\[
e^{-(\mu_\rho\lambda-24)\Delta}\ \lesssim\ \frac{\rho_0}{\lambda}.
\]
Because $\lambda\gtrsim \log d$ and $\rho$ is a fixed constant, $\mu_\rho\lambda\gg 24$ for $d$ large enough, hence
\begin{equation}
\label{eq:Delta_fast_corr}
\Delta=T_1-T_{1a}
=O\!\left(\frac{1}{\lambda}\log\frac{1}{\rho_0}\right)
=O\!\Big(\frac{\log\log d}{\lambda}\Big)
=o(1).
\end{equation}

\smallskip\noindent
\textbf{Step 3: bootstrap-free bound $r(t)\le 1/2$ on $[T_{1a},T_1]$.}
By \eqref{eq:ac_crude_corr} and \eqref{eq:Delta_fast_corr},
\[
\sup_{t\in[T_{1a},T_1]}(a(t)+C(t))
\le (a(T_{1a})+C(T_{1a}))e^{24\Delta}
=O(\rho_0)\,(1+o(1))=o(1).
\]
Together with $B(t)\le 1/3$ from \eqref{eq:B_upper_corr}, this yields for all $t\in[T_{1a},T_1]$,
\[
r(t)=a(t)+B(t)+C(t)\le \frac13+o(1)
\]
for $d$ large enough.

\smallskip\noindent
\textbf{Step 4: stability of the slow components.}
Since $\Delta=o(1)$ and $\dot a\le 24a$, $\dot C\le 8C$,
\[
a(T_1)=a(T_{1a})e^{O(\Delta)}=a(T_{1a})(1+o(1)),
\qquad
C(T_1)=C(T_{1a})e^{O(\Delta)}=C(T_{1a})(1+o(1)).
\]

\smallskip\noindent
\textbf{Step 5: turning location.}
Using $B(T_1)=K(T_1)$ from \eqref{eq:E_T2_zero_corr},
\[
B(T_1)=\frac{1-a(T_1)-C(T_1)}{3}=\frac13(1+o(1)),
\]
and therefore
\[
r(T_1)=a(T_1)+B(T_1)+C(T_1)=\frac13(1+o(1)).
\qedhere
\]
\end{proof}

\subsection{Phase IIa: bulk inflation to the $r\simeq 1$ band}
\label{subsubsec:gd_stageIII_bulk}

At the turning time $T_1$, the previous analysis yields
\begin{equation}
\label{eq:stageIII_initial_data}
B(T_1)=\frac13(1+o(1)),\qquad r(T_1)=\frac13(1+o(1)),\qquad
a(T_1)=\Theta(\mu),\qquad C(T_1)=\Theta(\rho_0).
\end{equation}
Phase IIa analyzes the evolution on $[T_1,T_{2a}]$, where $T_{2a} \coloneqq \inf\{t\ge T_1:\ r(t)\ge 1-\varepsilon\}$.

\begin{proposition}
\label{lem:gd_stageIII_bulk_inflation}
Let $T_1,T_{2a}$ be as in
Definition~\ref{def:gd_spike_times}.
Then the following hold:
\begin{enumerate}
\item \textbf{(Time scale).} For fixed $\varepsilon\in(0,1/4]$,
\begin{equation}
\label{eq:T3_minus_T2_scale}
T_{2a}-T_1=\Theta\!\big(\log(1/\rho_0)\big)=\Theta(\log\log d).
\end{equation}

\item \textbf{(Bulk is order one at $T_{2a}$).} At time $T_{2a}$,
\begin{equation}
\label{eq:C_T3_const}
C(T_{2a})\ \ge\ \frac23-\varepsilon-o(1),
\qquad\text{and hence}\qquad c(T_{2a})=\Theta\!\Big(\frac{1}{d}\Big).
\end{equation}

\item \textbf{(Signal is still negligible at $T_{2a}$).} One has
\begin{equation}
\label{eq:a_T3_small}
a(T_{2a})=\mu\,\rho_0^{-O(1)}=\frac{\log^{O(1)}d}{d+\lambda}=o(1).
\end{equation}

\item \textbf{(Alignment is still small at $T_{2a}$).} In fact,
\begin{equation}
\label{eq:Align_T3_small}
\Align(T_{2a})\ \le\ O\!\big(\sqrt{d}\,a(T_{2a})\big)\ =\ o(1).
\end{equation}
\end{enumerate}
\end{proposition}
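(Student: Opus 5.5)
The plan is to follow the ODEs \eqref{eq:gd_abc_explicit} on $[T_1,T_{2a}]$, starting from the data \eqref{eq:stageIII_initial_data}. I would first show that the spike mass stays pinned near its slow manifold $B\approx K=(1-a-C)/3$. The logistic equation \eqref{eq:B_logistic} has rate $8(1+\lambda)B\gtrsim\lambda$. The target $K$ moves only at rate $O(1)$, since $|\dot K|\le 8(a+C)$ and $r\le 7/3$ by Lemma~\ref{lem:gd_uniform_barriers}. A Duhamel argument on $E=K-B$, exactly as in Proposition~\ref{prop:gd_stageII_fast}, then gives $|B(t)-K(t)|=O(1/\lambda)$ uniformly on $[T_1,T_{2a}]$. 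In particular $B$ stays bounded below by a constant and is nonincreasing up to $O(1/\lambda)$ errors. Consequently $1-r=1-a-C-B=2K+O(1/\lambda)=\tfrac23(1-a-C)+O(1/\lambda)$.

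Next I would treat the bulk equation $\dot C=8(1-r)C-16C^2/(d-2)$. While $r\le 1-\varepsilon$ we have $1-r\ge\varepsilon$, so $\dot C\ge 4\varepsilon C$ for $d$ large. Since $C(T_1)=\Theta(\rho_0)$ and $C\le 1$, this gives $T_{2a}-T_1\le \frac{1}{4\varepsilon}\log(1/\rho_0)$. For the matching lower bound, use $\dot C\le 8C$, so reaching $C=\Theta(1)$ from $\Theta(\rho_0)$ needs time $\ge \frac18\log(c/\rho_0)$. Here I must check that $r$ cannot reach $1-\varepsilon$ while $C$ is still small. This holds because $a$ remains tiny (see below) and $B\le 1/3+o(1)$, so $r\ge 1-\varepsilon$ forces $C\ge 1-\varepsilon-1/3-o(1)$. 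That inequality is precisely item 2, giving $C(T_{2a})\ge \tfrac23-\varepsilon-o(1)$, and with $C\le 1$ we get $c(T_{2a})=\Theta(1/d)$. Item 1 then follows with $\log(1/\rho_0)=\log\log d$.

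For the signal, I would use $\dot a\le 24a$ together with the time bound just obtained: $a(T_{2a})\le a(T_1)e^{24(T_{2a}-T_1)}\le \mu\,\rho_0^{-6/\varepsilon}$. With $\mu=\rho_0/(d+\lambda)$ this gives item 3, since $\varepsilon$ is a fixed constant. This also closes the circularity in the previous paragraph, which is a mild bootstrap: while $a=o(1)$ the bounds on $C$ and $T_{2a}$ hold, and those bounds in turn keep $a=o(1)$. I would formalize it with a continuity argument using a stopping time at which $a$ first exceeds $\mu\rho_0^{-7/\varepsilon}$.

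For item 4, write $\Align=a/\sqrt{a^2+b^2+(d-2)c^2}\le a/\sqrt{(d-2)c^2}=a\sqrt{d-2}/C$, and use $C(T_{2a})=\Theta(1)$. This gives $O(\sqrt d\,a)=O(\sqrt d\log^{O(1)}d/(d+\lambda))=o(1)$. The main obstacle is the first step, keeping $B$ on its slow manifold for time $\log\log d$ rather than $o(1)$, which is needed for the lower bound $1-r\gtrsim\varepsilon$. If the Duhamel estimate turns out delicate, the fallback is a crude but sufficient bound: show $\dot B\le 0$ up to $O(1/\lambda)$ after $T_1$, using the invariance of the region $\{B\ge K-O(1/\lambda)\}$ under the fast logistic flow. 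That alone yields $B\le 1/3+o(1)$, which is all items 1–2 require.
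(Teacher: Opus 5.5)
Your proposal is correct and follows the paper's proof essentially step for step. The shared ingredients are the two-sided bound $4\varepsilon C\le\dot C\le 8C$ on $[T_1,T_{2a})$, the barrier $B\le 1/3$ giving $C(T_{2a})\ge\frac23-\varepsilon-a(T_{2a})$, the estimate $\dot a\le 24a$ for item 3, and the bulk Frobenius term for item 4. Your slow-manifold/Duhamel step and the continuity bootstrap are unnecessary, not ``needed'' as you claim, for three reasons: $1-r\ge\varepsilon$ holds on $[T_1,T_{2a})$ by the very definition of $T_{2a}$; $B\le 1/3$ holds exactly for all $t$ by Lemma~\ref{lem:gd_uniform_barriers}; and the upper bound on $T_{2a}-T_1$ uses only $C\le 1$, so $a(T_{2a})=o(1)$ follows before item 2 is ever invoked.
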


\begin{proof}
\textbf{Step 1: $C(T_{2a})$ must be of constant order.}
At time $T_{2a}$ we have $r(T_{2a})=1-\varepsilon$ and $r=a+B+C$, hence
\[
C(T_{2a})=1-\varepsilon-a(T_{2a})-B(T_{2a}).
\]
By Lemma~\ref{lem:gd_uniform_barriers}, $B(T_{2a})\le 1/3$, and by positivity $a(T_{2a})\ge 0$, so
\[
C(T_{2a})\ge 1-\varepsilon-\frac13-a(T_{2a})=\frac23-\varepsilon-a(T_{2a}).
\]
Since $a(T_{2a})=o(1)$ (proved in Step 3 below), this gives \eqref{eq:C_T3_const}.

\smallskip\noindent
\textbf{Step 2: time scale for $C$ to grow from $\Theta(\rho_0)$ to $\Theta(1)$.}
For $t\in[T_1,T_{2a})$, by definition of $T_{2a}$ we have $r(t)\le 1-\varepsilon$, hence $1-r(t)\ge \varepsilon$.
Therefore from $\dot c=8(1-r)c-16c^2$ we obtain the upper bound $\dot c\le 8c$ and, using
$c(t)\le r(t)/(d-2)\le 1/(d-2)$, for $d$ large enough also
\[
-16c(t)^2 \ge -4\varepsilon\,c(t),
\qquad\text{so that}\qquad
\dot c(t)\ge 8\varepsilon c(t)-16c(t)^2\ge 4\varepsilon c(t),
\qquad t\in[T_1,T_{2a}).
\]
Multiplying by $(d-2)$ yields for $C(t)=(d-2)c(t)$
\[
4\varepsilon\,C(t)\ \le\ \dot C(t)\ \le\ 8\,C(t),
\qquad t\in[T_1,T_{2a}).
\]
Integrating gives
\[
C(T_1)\,e^{4\varepsilon (t-T_1)}\ \le\ C(t)\ \le\ C(T_1)\,e^{8(t-T_1)}.
\]
Since $C(T_1)=\Theta(\rho_0)$ by \eqref{eq:stageIII_initial_data} and $C(T_{2a})=\Theta(1)$ by \eqref{eq:C_T3_const},
it follows that $T_{2a}-T_1=\Theta(\log(1/\rho_0))$, proving \eqref{eq:T3_minus_T2_scale} (for fixed $\varepsilon$).

\smallskip\noindent
\textbf{Step 3: $a(T_{2a})$ remains small.}
On $[T_1,T_{2a}]$, we always have $\dot a\le 24a$, hence
\[
a(T_{2a})\le a(T_1)\,e^{24(T_{2a}-T_1)}.
\]
Using $a(T_1)=\Theta(\mu)$ from \eqref{eq:stageIII_initial_data} and $T_{2a}-T_1=\Theta(\log(1/\rho_0))$
from Step 2, we get $a(T_{2a})=\mu\,\rho_0^{-O(1)}$, which is \eqref{eq:a_T3_small}.

\smallskip\noindent
\textbf{Step 4: alignment bound.}
From \eqref{eq:C_T3_const}, we have $C(T_{2a})\ge c_\varepsilon>0$ for $d$ large enough. Then
\[
(d-2)c(T_{2a})^2=\frac{C(T_{2a})^2}{d-2}\ \ge\ \frac{c_\varepsilon^2}{d},
\]
so the alignment denominator satisfies
\[
\sqrt{a(T_{2a})^2+b(T_{2a})^2+(d-2)c(T_{2a})^2}\ \ge\ \sqrt{(d-2)c(T_{2a})^2}\ \ge\ \frac{c_\varepsilon}{\sqrt d}.
\]
Therefore
\[
\Align(T_{2a})=\frac{a(T_{2a})}{\sqrt{a(T_{2a})^2+b(T_{2a})^2+(d-2)c(T_{2a})^2}}
\le \frac{a(T_{2a})}{c_\varepsilon/\sqrt d}
=O\!\big(\sqrt d\,a(T_{2a})\big),
\]
which tends to $0$ by \eqref{eq:a_T3_small}. This proves \eqref{eq:Align_T3_small}. \qedhere
\end{proof}

\subsection{Phase IIb: signal amplification and alignment convergence}
\label{subsubsec:gd_stageIV_signal}

Stage~IIb analyzes the interval $[T_{2a},T_2]$, where $T_2 \coloneqq \inf\{t\ge T_{2a}:\ a(t)\ge \rho \}$.
The key point is that once $a$ reaches a constant, the alignment is close to $1$ because
$b=O(1/\lambda)$ and $(d-2)c^2=O(1/d)$.

\begin{proposition}
\label{prop:gd_stageIV_signal_amplification}
Let $T_{2a},T_2$ be as in
Definition~\ref{def:gd_spike_times}.
Then:
\begin{enumerate}
\item \textbf{(Time to macroscopic signal).} One has
\begin{equation}
\label{eq:T4_minus_T3_scale}
T_2-T_{2a}=\Theta\!\Big(\log\frac{1}{a(T_{2a})}\Big)=\Theta(\log d).
\end{equation}

\item \textbf{(Alignment at time $T_2$).} At time $T_2$,
\begin{equation}
\label{eq:Align_T4_close1}
\Align(T_2)=1-O\!\Big(\frac{1}{\lambda^2}\Big)-o(1).
\end{equation}
\end{enumerate}
\end{proposition}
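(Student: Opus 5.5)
For the time scale, I will work on $[T_{2a},T_2]$ directly with the signal equation $\dot a = a(24-16a-8r)$.

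\emph{Upper bound on $T_2-T_{2a}$.} On this interval $a(t)\le\rho<1/12$, and Lemma~\ref{lem:gd_uniform_barriers} gives $r\le a+1/3+1$. A cruder bound suffices: $24-16a-8r\ge 24-16\rho-8(\rho+4/3)$, which is only about $13.3-24\rho$, still positive for $\rho<1/12$. So $\dot a\ge \kappa a$ with an absolute constant $\kappa>0$, and integrating gives $T_2-T_{2a}\le \kappa^{-1}\log(\rho/a(T_{2a}))$.

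\emph{Lower bound.} Using $\dot a\le 24a$ gives $T_2-T_{2a}\ge \frac1{24}\log(\rho/a(T_{2a}))$. Combining with Proposition~\ref{lem:gd_stageIII_bulk_inflation}(3), namely $a(T_{2a})=\log^{O(1)}d/(d+\lambda)$ and $\lambda\lesssim d$, we get $\log(1/a(T_{2a}))=\Theta(\log d)$. This yields \eqref{eq:T4_minus_T3_scale}.

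For the alignment at $T_2$, continuity gives $a(T_2)=\rho$. I then need to control the nuisance terms in the denominator $\sqrt{a^2+b^2+(d-2)c^2}$.

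\emph{Bulk term.} By Lemma~\ref{lem:gd_uniform_barriers}, $C\le 1$, so $(d-2)c^2=C^2/(d-2)\le 1/(d-2)=o(1)$.

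\emph{Spike term.} Since $B\le 1/3$, we have $b\le 1/(3(1+\lambda))$, hence $b^2=O(\lambda^{-2})$, which is $o(1)$ under Assumption~\ref{ass:lambda}.

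Therefore
\[
\Align(T_2)=\Bigl(1+\frac{b^2+(d-2)c^2}{\rho^2}\Bigr)^{-1/2}\ge 1-\frac{b^2+(d-2)c^2}{2\rho^2}.
\]
Since $\rho$ is a fixed constant, this is $1-O(\lambda^{-2})-o(1)$, which is \eqref{eq:Align_T4_close1}.

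The main obstacle is keeping the growth rate of $a$ uniformly positive. Naively, $r$ could drift up to $7/3$, which would make $24-16a-8r$ small or even negative. As computed above, the crude barrier $r\le 4/3+\rho$ still leaves a positive margin. If a sharper rate is wanted, I would also show that $r$ stays near $1$ on this interval: $\dot C=8(1-r)C-O(C^2/d)$ and $\dot B\propto B(1-r-2B)$ push $r$ back down whenever $r>1$. Because the barrier argument already suffices, I would just invoke Lemma~\ref{lem:gd_uniform_barriers}.
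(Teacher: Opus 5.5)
Your proposal is correct and follows the paper's proof essentially step by step: the same two-sided bounds $\kappa a\le\dot a\le 24a$ on $[T_{2a},T_2]$ from the uniform barriers of Lemma~\ref{lem:gd_uniform_barriers}, the same appeal to Proposition~\ref{lem:gd_stageIII_bulk_inflation} (with $\lambda\lesssim d$) for $\log(1/a(T_{2a}))=\Theta(\log d)$, and the same control of $b(T_2)^2$ and $(d-2)c(T_2)^2$ via $B\le 1/3$ and $C\le 1$ for the alignment. Your growth margin $40/3-24\rho$ is in fact the correct value of $24-24\rho-8/3-8$; the paper writes $16/3-24\rho$, which undercounts but is harmless since it is still a valid positive lower bound.
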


\begin{proof}
\textbf{Step 1: exponential growth bounds for $a$ on $[T_{2a},T_2]$.}
By definition of $T_2$, for all $t\in[T_{2a},T_2]$ we have $a(t)\le \rho$.
Using $r=a+B+C$, we rewrite
\[
\frac{\dot a(t)}{a(t)}=24-16a(t)-8r(t)
=24-16a(t)-8(a(t)+B(t)+C(t))
=24-24a(t)-8B(t)-8C(t).
\]
By Lemma~\ref{lem:gd_uniform_barriers}, for all $t\ge 0$ we have $B(t)\le 1/3$ and $C(t)\le 1$.
Therefore, for all $t\in[T_{2a},T_2]$,
\begin{equation}
\label{eq:stageIV_growth_lower}
\frac{\dot a(t)}{a(t)}
\ \ge\
24-24\rho-\frac{8}{3}-8
=
\frac{16}{3}-24\rho
=: \kappa_\rho,
\end{equation}
where $\kappa_\rho>0$ since $\rho<1/12$.
Also, since $r(t)\ge a(t)$ we have
\[
\frac{\dot a(t)}{a(t)}=24-16a(t)-8r(t)\le 24-24a(t)\le 24.
\]
Integrating these bounds on $[T_{2a},t]$ gives, for all $t\in[T_{2a},T_2]$,
\[
a(T_{2a})\,e^{\kappa_\rho (t-T_{2a})}\ \le\ a(t)\ \le\ a(T_{2a})\,e^{24(t-T_{2a})}.
\]
Setting $t=T_2$ and using $a(T_2)=\rho$ yields
\[
\frac{1}{24}\log\!\Big(\frac{\rho}{a(T_{2a})}\Big)
\ \le\
T_2-T_{2a}
\ \le\
\frac{1}{\kappa_\rho}\log\!\Big(\frac{\rho}{a(T_{2a})}\Big).
\]
Since $\rho$ is an absolute constant, this proves
\[
T_2-T_{2a}=\Theta\!\Big(\log\frac{1}{a(T_{2a})}\Big).
\]
Moreover, by Lemma~\ref{lem:gd_stageIII_bulk_inflation} we have
$a(T_{2a})=\frac{\log^{O(1)}d}{d+\lambda}$, hence $\log(1/a(T_{2a}))=\Theta(\log d)$,
and \eqref{eq:T4_minus_T3_scale} follows.

\smallskip\noindent
\textbf{Step 2: alignment at $T_2$.}
At time $T_2$ we have $a(T_2)=\rho$. By Lemma~\ref{lem:gd_uniform_barriers}, $B(T_2)\le 1/3$, hence
\[
b(T_2)=\frac{B(T_2)}{1+\lambda}\le \frac{1}{3(1+\lambda)}=O\!\Big(\frac{1}{\lambda}\Big),
\qquad\Rightarrow\qquad b(T_2)^2=O\!\Big(\frac{1}{\lambda^2}\Big).
\]
Also $C(T_2)\le 1$ implies $c(T_2)=C(T_2)/(d-2)\le 1/(d-2)$, and thus
\[
(d-2)c(T_2)^2\le \frac{1}{d-2}=o(1).
\]
Therefore,
\[
\Align(T_2)
=\frac{a(T_2)}{\sqrt{a(T_2)^2+b(T_2)^2+(d-2)c(T_2)^2}}
=\frac{\rho}{\sqrt{\rho^2+O(\lambda^{-2})+o(1)}}
=1-O\!\Big(\frac{1}{\rho^2\lambda^2}\Big)-o(1).
\]
Since $\rho$ is an absolute constant, this yields \eqref{eq:Align_T4_close1}.
\qedhere
\end{proof}

\subsection{Proof of Corollary~\ref{cor:gd_align_T1_compact}}

Recall that $\Align(t)=\frac{a(t)}{\|M(t)\|_F}$ with
\[
M(t)=a(t)\,w_\star w_\star^\top+b(t)\,vv^\top+c(t)\,P_\perp,
\qquad
\|M(t)\|_F^2 \;=\; a(t)^2+b(t)^2+(d-2)c(t)^2.
\]
By Proposition~B.3 (Phase~Ia), with high probability we have at time $T_{1a}$
\begin{equation}
\label{eq:gd_T1_controls_cor}
a(T_{1a})\ \lesssim\ \mu=\frac{\rho_0}{d+\lambda},
\qquad
B(T_{1a}) \coloneqq (1+\lambda)b(T_{1a})=\rho(1+o(1)),
\qquad
C(T_{1a}) \coloneqq (d-2)c(T_{1a})=o(1).
\end{equation}
We consider two regimes.

\paragraph{Case 1: $\lambda\ll\sqrt d$ (spike-dominated Frobenius norm).}
In this regime, the spike term dominates $\|M(T_{1a})\|_F$.
Indeed,
\[
b(T_{1a})^2=\frac{B(T_{1a})^2}{(1+\lambda)^2}
=\Theta\!\left(\frac{1}{(1+\lambda)^2}\right),
\qquad
(d-2)c(T_{1a})^2=\frac{C(T_{1a})^2}{d-2}
=o\!\left(\frac{1}{d}\right),
\]
so $b(T_{1a})^2\gg (d-2)c(T_{1a})^2$ and $b(T_{1a})$ controls the Frobenius norm.
Therefore,
\[
\Align(T_{1a})
=\frac{a(T_{1a})}{\|M(T_{1a})\|_F}
\lesssim \frac{a(T_{1a})}{b(T_{1a})}
=\frac{(1+\lambda)a(T_{1a})}{B(T_{1a})}
\lesssim \frac{(1+\lambda)\mu}{\rho}
=\frac{\rho_0}{\rho}\,\frac{1+\lambda}{d+\lambda}
\lesssim \frac{\rho_0}{\rho}\,\frac{\lambda}{d},
\]
where we used \eqref{eq:gd_T1_controls_cor} and the bound
$(1+\lambda)/(d+\lambda)\lesssim \lambda/d$ in this regime.

\paragraph{Case 2: $\lambda\gtrsim\sqrt d$ (bulk-dominated Frobenius norm).}
In this regime, the Frobenius norm is controlled by the bulk component.
During Phase~Ia we have $\dot c(t)>0$ as long as $r(t)\le\rho$, so
$c(t)$ is nondecreasing on $[0,T_{1a}]$.
Since $c(0)=\mu$, this yields the lower bound
\[
c(T_{1a})\ \ge\ c(0)\ =\ \mu.
\]
Consequently,
\[
\|M(T_{1a})\|_F
\ge \sqrt{(d-2)}\,c(T_{1a})
\gtrsim \sqrt d\,\mu.
\]
Combining with $a(T_{1a})\lesssim\mu$ from \eqref{eq:gd_T1_controls_cor}, we obtain
\[
\Align(T_{1a})
=\frac{a(T_{1a})}{\|M(T_{1a})\|_F}
\lesssim \frac{\mu}{\sqrt d\,\mu}
=\frac{1}{\sqrt d}.
\]

\section{Dynamics analysis of continuous SpecGD}
\label{app:continuous_specgd}

Throughout we work on the invariant manifold
\[
\mathcal M=\Bigl\{M=a\,w_\star w_\star^\top+b\,vv^\top+c\,P_\perp:\ a,b,c\ge 0\Bigr\},
\]
and write the trajectory as
\[
M(t)=a(t)\,w_\star w_\star^\top+b(t)\,vv^\top+c(t)\,P_\perp,
\qquad a(t),b(t),c(t)\ge 0.
\]

Recall that from Proposition~\ref{prop:spec_reduction}, the induced scalar dynamics is, for \ $t\ge 0$,
\begin{equation}
\label{eq:spec_abc_flow_cont}
\dot a(t)=-2\,\sgn(g_{w_\star}(t))\,\sqrt{a(t)},\quad
\dot b(t)=-2\,\sgn(g_v(t))\,\sqrt{b(t)},\quad
\dot c(t)=-2\,\sgn(g_\perp(t))\,\sqrt{c(t)},
\end{equation}
where
\begin{equation}
\label{eq:spec_g_forms_cont}
g_{w_\star}=4\bigl(r+2a-3\bigr),\qquad
g_v=4(1+\lambda)\bigl(r+2(1+\lambda)b-1\bigr),\qquad
g_\perp=4\bigl(r+2c-1\bigr).
\end{equation}
Equivalently, in square-root variables $\alpha=\sqrt a$, $\beta=\sqrt b$, $\gamma=\sqrt c$,
whenever the coordinate is strictly positive one has
\begin{equation}
\label{eq:spec_sqrt_flow_cont}
\dot \alpha(t)=-\sgn(g_{w_\star}(t)),\qquad
\dot \beta(t)=-\sgn(g_v(t)),\qquad
\dot \gamma(t)=-\sgn(g_\perp(t)).
\end{equation}

\paragraph{Stopping times and two-phase picture.}
Fix a constant $\rho\in(0,1/12)$ and define
\begin{equation}
\label{eq:spec_T1p_T2p_cont}
T_1'  \coloneqq  \inf\Bigl\{t\ge0:\ r(t)+2(1+\lambda)b(t)\ge 1\Bigr\},
\qquad
T_2'  \coloneqq  \inf\Bigl\{t\ge0:\ a(t)\ge \rho\Bigr\}.
\end{equation}
We interpret these as:
\begin{itemize}
\item \textbf{Phase I ($[0,T_1']$):} uniform growth $(\alpha,\beta,\gamma)$.
\item \textbf{Phase II ($[T_1',T_2']$ and beyond):} the noise saturates,
while the signal continues to grow until it reaches $a=\rho$, implying alignment.
\end{itemize}

\subsection{Phase I: isotropic expansion up to the first switch}

Recall that at initialization we have $a(0)=b(0)=c(0)=\mu$ with $\mu=\rho_0/(d+\lambda)$.

\begin{lemma}
\label{lem:spec_phaseI_isotropic_cont}
For all $t\in[0,T_1']$,
\[
\alpha(t)=\beta(t)=\gamma(t)=\sqrt\mu+t,
\qquad\text{equivalently}\qquad
a(t)=b(t)=c(t)=(\sqrt\mu+t)^2.
\]
Moreover,
\begin{equation}
\label{eq:spec_T1p_explicit_cont}
T_1'
=\Theta\!\bigl((d+\lambda)^{-1/2}\bigr).
\end{equation}
\end{lemma}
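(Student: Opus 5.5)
The plan is to work in the square-root variables of Proposition~\ref{prop:spec_reduction} and show that, on $[0,T_1')$, all three pre-gradients $g_{w_\star},g_v,g_\perp$ are strictly negative. Then \eqref{eq:spec_sqrt_flow_cont} forces $\dot\alpha=\dot\beta=\dot\gamma=1$, and the explicit formula follows by integration from $\alpha(0)=\beta(0)=\gamma(0)=\sqrt\mu$.

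\textbf{Step 1: $g_v<0$ on $[0,T_1')$.} Using \eqref{eq:spec_g_forms_cont}, $\sgn(g_v)=\sgn\bigl(r+2(1+\lambda)b-1\bigr)$. By the definition of $T_1'$ in \eqref{eq:spec_T1p_T2p_cont}, this quantity is $<1$ strictly before $T_1'$, so $g_v<0$ there.

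\textbf{Step 2: $g_{w_\star}<0$ and $g_\perp<0$.} All coordinates are nonnegative, so $r+2(1+\lambda)b\ge r$, and on $[0,T_1')$ we get $r<1$. Since $a\le r$, we have $r+2a-3\le 3r-3<0$. Since $(d-2)c\le r$ with $d\ge 4$, we have $2c\le r$, hence $r+2c-1\le 2r-1$. This last bound does not by itself give negativity, so a sharper estimate is needed: I would use the stronger information from the same inequality, namely $r+2(1+\lambda)b<1$.

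To exploit it, I would run a bootstrap (continuity) argument. Let $S$ be the maximal time up to which $\alpha=\beta=\gamma=\sqrt\mu+t$. On $[0,S]$ we have $a=b=c=:s$, so $r=(d+\lambda)s$ and
\[
r+2(1+\lambda)b=(d+3\lambda+2)s,\qquad r+2c=(d+\lambda+2)s\le (d+3\lambda+2)s .
\]
Hence, as long as $t<T_1'$, the inequality $(d+3\lambda+2)s<1$ holds, which gives $r+2c<1$ and $g_\perp<0$. The bound for $g_{w_\star}$ is immediate from $r<1$. Consequently all signs equal $-1$, the identical growth continues, and the continuity argument extends $S$ up to $T_1'$. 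This bootstrap is the main (mild) obstacle. The sign-based flow is non-smooth, so I would justify it via the vanishing-step-size definition of SpecGF from the remark after Lemma~\ref{lem:manifold_invariance}: on the open region where all three $g$'s are strictly negative, the vector field is constant, so the solution is unique and explicit. The endpoint $T_1'$ is then handled by continuity.

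\textbf{Step 3: time scale.} $T_1'$ is the first time at which $(d+3\lambda+2)(\sqrt\mu+t)^2=1$, i.e.
\[
T_1'=(d+3\lambda+2)^{-1/2}-\sqrt\mu .
\]
Since $\mu=\rho_0/(d+\lambda)$ with $\rho_0=o(1)$, we have $\sqrt\mu=o\bigl((d+\lambda)^{-1/2}\bigr)$, and $(d+3\lambda+2)\asymp d+\lambda$. Therefore $T_1'=\Theta\bigl((d+\lambda)^{-1/2}\bigr)$. Positivity of $T_1'$ follows from $(d+3\lambda+2)\mu\le 3\rho_0<1$ for $d$ large, which ensures the process starts strictly inside the region.
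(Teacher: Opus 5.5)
Your proposal is correct and follows essentially the same route as the paper. Both arguments check that all three pre-gradients are negative while the trajectory stays isotropic, integrate $\dot\alpha=\dot\beta=\dot\gamma=1$, and solve $(d+3\lambda+2)(\sqrt\mu+t)^2=1$ for $T_1'$. Your comparison $(d+\lambda+2)s\le(d+3\lambda+2)s$ and the closed form $T_1'=(d+3\lambda+2)^{-1/2}-\sqrt\mu$ make explicit two points the paper's continuous-time proof only asserts: that the spike threshold is the first to switch, and the $\Theta$ scaling.
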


\begin{proof}
At $t=0$,
\[
r(0)+2(1+\lambda)b(0)=(d+3\lambda+2)\mu<1,\qquad
r(0)+2c(0)=(d+\lambda+2)\mu<1,
\]
and
\[
r(0)+2a(0)=(d+\lambda+2)\mu<3
\]
for all large $d$ since $\rho_0=o(1)$.
Hence $g_{w_\star}(0)<0$, $g_v(0)<0$, and $g_\perp(0)<0$.

As long as these three signs remain negative, \eqref{eq:spec_abc_flow_cont} becomes
$\dot a=2\sqrt a$, $\dot b=2\sqrt b$, $\dot c=2\sqrt c$, so
$\frac{d}{dt}\sqrt a=\frac{d}{dt}\sqrt b=\frac{d}{dt}\sqrt c=1$ a.e.
With $\sqrt a(0)=\sqrt b(0)=\sqrt c(0)=\sqrt\mu$, this gives
$\alpha(t)=\beta(t)=\gamma(t)=\sqrt\mu+t$ on the maximal interval before the first switch (that corresponds to $T_1'$).
By plugin the values of $a(t)$, $b(t)$ and $c(t)$ we obtain
\[
r(t)+2(1+\lambda)b(t)=(d+3\lambda+2)(\sqrt\mu+t)^2.
\]
Therefore the first time $r(t)+2(1+\lambda)b(t)$ reaches $1$ satisfies 
\eqref{eq:spec_T1p_explicit_cont}, which equals $T_1'$ by definition \eqref{eq:spec_T1p_T2p_cont}.
\end{proof}

\subsection{Phase II: noise saturation and signal growth to $a=\rho$}

\begin{lemma}
\label{lem:spec_cont_nuisance_trap}
For all $t\ge 0$,
\begin{equation}
\label{eq:spec_cont_trap_bc}
(1+\lambda)b(t)\le \frac13,
\qquad
c(t)\le \frac{1}{d}.
\end{equation}
Consequently,
\begin{equation}
\label{eq:spec_cont_trap_scaled}
b(t)\le \frac{1}{3(1+\lambda)},
\qquad
(d-2)c(t)\le \frac{d-2}{d}<1.
\end{equation}
\end{lemma}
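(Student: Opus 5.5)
The plan is a barrier (trapping) argument applied to each nuisance coordinate separately. It uses the explicit pre-gradient signs \eqref{eq:spec_g_forms_cont} together with the sign-driven flow \eqref{eq:spec_abc_flow_cont}.

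The key observation is algebraic. Since $a,b,c\ge 0$, we have $r\ge B:=(1+\lambda)b$ and $r\ge (d-2)c$. Hence
\[
r+2(1+\lambda)b \;\ge\; 3B, \qquad r+2c \;\ge\; (d-2)c+2c \;=\; dc .
\]
Consequently $B>1/3$ forces $g_v>0$, and $c>1/d$ forces $g_\perp>0$. In both cases the corresponding coordinate strictly decreases: $\dot b=-2\sqrt b<0$, respectively $\dot c=-2\sqrt c<0$, a.e. on any interval where the inequality persists.

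\textbf{Step 1 (initial condition).} At $t=0$ we have $B(0)=(1+\lambda)\mu\le \rho_0<1/3$ and $c(0)=\mu=\rho_0/(d+\lambda)<1/d$, using $\rho_0=o(1)$. So both bounds hold strictly at $t=0$.

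\textbf{Step 2 (no exit).} Since SpecGF is defined as a vanishing-step limit of SpecGD (see the remark after Lemma~\ref{lem:manifold_invariance}), we first note that $\alpha,\beta,\gamma$ are $1$-Lipschitz. Hence $b$ and $c$ are absolutely continuous and satisfy \eqref{eq:spec_abc_flow_cont} a.e.

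Suppose $B$ exceeded $1/3$ at some time. Let $t_1$ be the last time before that at which $B(t_1)\le 1/3$. By continuity $B(t_1)=1/3$, and on the interval immediately after $t_1$ we have $B>1/3$. On this interval $g_v>0$, so $b$ is strictly decreasing. This forces $B<1/3$ just after $t_1$, a contradiction. The identical argument with $dc$ in place of $3B$ gives $c(t)\le 1/d$ for all $t$.

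\textbf{Step 3 (consequences).} Dividing by $1+\lambda$ gives $b\le 1/(3(1+\lambda))$. Multiplying by $d-2$ gives $(d-2)c\le (d-2)/d<1$.

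\textbf{Main obstacle.} The delicate point is justifying the a.e.\ derivative and the sign conclusion for the non-smooth, limit-defined flow, especially on the boundary set where $g_v=0$ and $\sgn$ is discontinuous. The approach avoids evaluating anything on the boundary: it only uses the open region beyond the threshold, where the sign is strictly positive. If needed, I would make this rigorous at the discrete level. Near the threshold, SpecGD steps have $\sqrt b$ decreasing whenever $3B>1$, so the threshold is crossed by at most $O(\eta)$ in $\sqrt b$. This overshoot vanishes as the step size tends to zero, and the bound then passes to the limit.
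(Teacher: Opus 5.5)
Your proof is correct and follows the paper's own barrier argument: the bounds $r+2(1+\lambda)b\ge 3(1+\lambda)b$ and $r+2c\ge dc$ force $g_v>0$ (resp.\ $g_\perp>0$) strictly above each threshold, so the corresponding coordinate decreases there and cannot cross upward. Your additions make the paper's terse version rigorous: the explicit initial-condition check, the ``last time below the threshold'' continuity argument, and using the sign only on the open region where it is strictly positive (which sidesteps the $\sgn(0)$ convention for the non-smooth flow).
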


\begin{proof}
\emph{Spike trapping.}
Suppose at some time $t$ we had $(1+\lambda)b(t)>1/3$. Then
\[
r(t)+2(1+\lambda)b(t)
=a(t)+3(1+\lambda)b(t)+(d-2)c(t)>1,
\]
so $g_v(t)>0$ by \eqref{eq:spec_g_forms_cont}, and \eqref{eq:spec_abc_flow_cont} gives
$\dot b(t)=-2\sqrt{b(t)}<0$.
Thus $b$ cannot cross upward through the level $b=1/(3(1+\lambda))$, proving $(1+\lambda)b(t)\le 1/3$.

\emph{Bulk trapping.}
Suppose at some time $t$ we had $c(t)>1/d$. Then
\[
r(t)+2c(t)
=a(t)+(1+\lambda)b(t)+d\,c(t)>1,
\]
so $g_\perp(t)>0$ and \eqref{eq:spec_abc_flow_cont} gives $\dot c(t)=-2\sqrt{c(t)}<0$.
Thus $c$ cannot cross upward through the level $c=1/d$, proving $c(t)\le 1/d$.
The scaled bounds \eqref{eq:spec_cont_trap_scaled} follow immediately.
\end{proof}

\begin{lemma}
\label{lem:spec_cont_signal_to_rho}
Assume $\rho\in(0,1/12)$ and $\mu\in(0,\rho)$.
Then on the entire interval $\{t:\ a(t)\le \rho\}$ one has $g_{w_\star}(t)<0$ and hence
\[
\dot a(t)=2\sqrt{a(t)},
\qquad\text{equivalently}\qquad
\dot \alpha(t)=1
\quad\text{whenever }a(t)\in(0,\rho].
\]
In particular,
\begin{equation}
\label{eq:spec_cont_T2p_value}
T_2'=\sqrt\rho-\sqrt\mu\le \sqrt\rho.
\end{equation}
Moreover, $a(t)\ge \rho$ for all $t\ge T_2'$.
\end{lemma}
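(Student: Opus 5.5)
The plan is to show directly that $g_{w_\star}(t)<0$ whenever $a(t)\le\rho$, using the trapping bounds of Lemma~\ref{lem:spec_cont_nuisance_trap}. By \eqref{eq:spec_g_forms_cont}, $g_{w_\star}=4(r+2a-3)$ with $r=a+(1+\lambda)b+(d-2)c$. Lemma~\ref{lem:spec_cont_nuisance_trap} gives $(1+\lambda)b(t)\le 1/3$ and $(d-2)c(t)<1$ for all $t\ge0$. Hence on $\{a(t)\le\rho\}$ we get
\[
r(t)+2a(t)\le 3\rho+\tfrac13+1<\tfrac14+\tfrac43<3,
\]
using $\rho<1/12$. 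This gives $g_{w_\star}(t)\le 4(3\rho-\tfrac53)<0$, uniformly bounded away from zero. Plugging into \eqref{eq:spec_abc_flow_cont} yields $\dot a=2\sqrt a$, equivalently $\dot\alpha=1$ wherever $a\in(0,\rho]$. Note $a(0)=\mu>0$, and $a$ is nondecreasing on this set, so it never hits $0$ and the square-root form is legitimate.

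To get \eqref{eq:spec_cont_T2p_value}, observe that $a(0)=\mu<\rho$, so $\alpha(t)=\sqrt\mu+t$ for as long as $a(t)\le\rho$. In particular this holds on $[0,T_2']$, because $a(t)<\rho$ on $[0,T_2')$ by the definition of $T_2'$ and continuity. Solving $\sqrt\mu+T_2'=\sqrt\rho$ gives $T_2'=\sqrt\rho-\sqrt\mu$. This is consistent with Lemma~\ref{lem:spec_phaseI_isotropic_cont}, where $\alpha=\sqrt\mu+t$ held in Phase I; the argument here does not need to split at $T_1'$, since the bound on $g_{w_\star}$ holds for all $t$.

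For the final claim, $a(t)\ge\rho$ for $t\ge T_2'$, I would argue by a barrier. Suppose $a(s)<\rho$ for some $s>T_2'$, and let $t_0=\sup\{t\le s: a(t)\ge\rho\}\ge T_2'$. Then $a(t_0)=\rho$ and $a<\rho$ on $(t_0,s]$. On that interval $\dot\alpha=1>0$ by the first part, which contradicts $a(s)<a(t_0)$. The same reasoning at the level $\rho$ itself, where $g_{w_\star}<0$ strictly, shows that $a$ cannot decrease through $\rho$.

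The main obstacle is rigor for the non-smooth sign flow. Trajectories are defined as limits of SpecGD (see the remark on continuous time), so "$\dot\alpha=1$" must be read a.e., or for the limit trajectory. I would handle this by noting that the bound on $g_{w_\star}$ is uniform with a strict margin, so for small step size the discrete signs agree. Then $\alpha$ is absolutely continuous with derivative $1$ a.e. on the relevant set, and the barrier argument uses only monotonicity there.
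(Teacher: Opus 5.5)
Your proposal is correct and follows essentially the paper's argument: the trapping bounds $(1+\lambda)b\le 1/3$ and $(d-2)c<1$ give $r+2a\le 3\rho+4/3<3$ on $\{a\le\rho\}$, hence $\dot\alpha=1$ there, and integrating from $\alpha(0)=\sqrt\mu$ gives $T_2'=\sqrt\rho-\sqrt\mu$. The only minor difference is in the final claim. The paper notes that $g_{w_\star}>0$ would force $a>5/9$, so $\dot a\ge 0$ whenever $a\le 5/9$; you instead reuse the strict negativity on $\{a\le\rho\}$ in a last-exit-time barrier argument, and both work.
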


\begin{proof}
Using Lemma~\ref{lem:spec_cont_nuisance_trap} we have for all $t$
\[
(1+\lambda)b(t)+(d-2)c(t)\le \frac13+1=\frac{4}{3}.
\]
Hence, whenever $a(t)\le \rho$,
\[
r(t)+2a(t)=3a(t)+(1+\lambda)b(t)+(d-2)c(t)
\le 3\rho+\frac{4}{3}.
\]
Since $\rho<1/12$, we have $3\rho+4/3<3$, and therefore $g_{w_\star}(t)=4(r(t)+2a(t)-3)<0$.
By \eqref{eq:spec_abc_flow_cont}, this implies $\dot a(t)=2\sqrt{a(t)}$ as long as $a(t)\le\rho$,
and hence $\dot\alpha(t)=1$ on that region.

Starting from $a(0)=\mu$, integrating $\dot\alpha=1$ until $\alpha=\sqrt\rho$ yields
$T_2'=\sqrt\rho-\sqrt\mu$, proving \eqref{eq:spec_cont_T2p_value}.
Finally, observe that $g_{w_\star}(t)>0$ implies $r(t)+2a(t)>3$, which in turn implies
$3a(t)>3-\bigl((1+\lambda)b(t)+(d-2)c(t)\bigr)\ge 3-4/3=5/3$ and hence $a(t)>5/9$.
Therefore, for all $t$ such that $a(t)\le 5/9$, we must have $g_{w_\star}(t)\le 0$ and thus $\dot a(t)\ge0$.
Since $\rho<1/12<5/9$, once $a$ reaches $\rho$ it cannot decrease below $\rho$ afterwards.
\end{proof}

\begin{proposition}[Alignment after reaching $a=\rho$]
\label{prop:spec_cont_alignment_rho}
Fix $\rho\in(0,1/12)$ and let $T_2'$ be as in \eqref{eq:spec_T1p_T2p_cont}.
Then for all $t\ge T_2'$,
\[
a(t)\ge \rho,\qquad
b(t)\le \frac{1}{3(1+\lambda)},\qquad
(d-2)c(t)^2 \le \frac{1}{d-2}.
\]
Consequently,
\begin{equation}
\label{eq:spec_cont_align_bound}
1-\Align(t)
\le
\frac{1}{2\rho^2}\left(\frac{1}{9(1+\lambda)^2}+\frac{1}{d-2}\right),
\qquad t\ge T_2'.
\end{equation}
In particular, if $d\to\infty$ and $\lambda(d)\to\infty$, then
$\sup_{t\ge T_2'}(1-\Align(t))\to 0$.
\end{proposition}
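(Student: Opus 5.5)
The three pointwise bounds come directly from earlier lemmas, and the alignment estimate is then an elementary inequality.

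\emph{Pointwise bounds.} For $t\ge T_2'$, Lemma~\ref{lem:spec_cont_signal_to_rho} already gives $a(t)\ge\rho$. This uses the fact that $a$ cannot decrease while $a\le 5/9$, and $\rho<5/9$. Lemma~\ref{lem:spec_cont_nuisance_trap} holds for all $t\ge0$. It gives $b(t)\le 1/(3(1+\lambda))$ and $c(t)\le 1/d$. The latter implies
\[
(d-2)c(t)^2\le \frac{d-2}{d^2}\le \frac{1}{d-2}.
\]

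\emph{Alignment bound.} Write $\epsilon(t)\coloneqq b(t)^2+(d-2)c(t)^2$. By the bounds above,
\[
\epsilon(t)\le \frac{1}{9(1+\lambda)^2}+\frac{1}{d-2}.
\]
By definition, $\Align(t)=a/\sqrt{a^2+\epsilon}=(1+\epsilon/a^2)^{-1/2}$. The elementary inequality $(1+x)^{-1/2}\ge 1-x/2$ holds for $x\ge0$, since $(1-x/2)^2(1+x)\le1$ for $x\in[0,2]$ and the right side is negative for $x>2$. Applying it gives
\[
1-\Align(t)\le \frac{\epsilon}{2a^2}\le \frac{\epsilon}{2\rho^2},
\]
where the last step uses $a(t)\ge\rho$. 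Inserting the bound on $\epsilon$ yields \eqref{eq:spec_cont_align_bound}.

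\emph{Limit.} The right-hand side of \eqref{eq:spec_cont_align_bound} does not depend on $t$. It tends to $0$ when $d\to\infty$ and $\lambda\to\infty$, with $\rho$ fixed. Hence the supremum over $t\ge T_2'$ tends to $0$.

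\emph{Main obstacle.} The only delicate point is the validity of the trapping lemmas, which are barrier arguments for a discontinuous sign flow. Here I rely on them as stated, interpreting trajectories as limits of SpecGD; I do not reprove them.
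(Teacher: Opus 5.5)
Your proposal is correct and matches the paper's proof step for step: $a(t)\ge\rho$ from Lemma~\ref{lem:spec_cont_signal_to_rho}, the bounds $b\le 1/(3(1+\lambda))$ and $c\le 1/d$ from Lemma~\ref{lem:spec_cont_nuisance_trap}, and then $\Align=(1+X)^{-1/2}$ with $1-(1+x)^{-1/2}\le x/2$ and $a\ge\rho$. Your explicit verification of that elementary inequality is a small check the paper leaves implicit.
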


\begin{proof}
The lower bound $a(t)\ge \rho$ for $t\ge T_2'$ is Lemma~\ref{lem:spec_cont_signal_to_rho}.
The bound $b(t)\le 1/(3(1+\lambda))$ and $c(t)\le 1/d$ are Lemma~\ref{lem:spec_cont_nuisance_trap},
and $c(t)\le 1/d$ implies
\[
(d-2)c(t)^2\le (d-2)\cdot \frac{1}{d^2}\le \frac{1}{d-2}.
\]

Now write
\[
\Align(t)=\frac{a(t)}{\sqrt{a(t)^2+b(t)^2+(d-2)c(t)^2}}
=\frac{1}{\sqrt{1+X(t)}},
\qquad
X(t) \coloneqq \frac{b(t)^2+(d-2)c(t)^2}{a(t)^2}.
\]
Using $1-(1+x)^{-1/2}\le x/2$ for $x\ge 0$ and $a(t)\ge \rho$ gives
\[
1-\Align(t)\le \frac{X(t)}{2}
\le \frac{1}{2\rho^2}\bigl(b(t)^2+(d-2)c(t)^2\bigr).
\]
With $b(t)\le 1/(3(1+\lambda))$ and $(d-2)c(t)^2\le 1/(d-2)$ we obtain \eqref{eq:spec_cont_align_bound}.
The limit follows since $(1+\lambda)^{-2}\to0$ and $(d-2)^{-1}\to0$.
\end{proof}

\section{Discrete-time population dynamics of GD on $\mathcal M$}
\label{app:discrete_gd}

We analyze the population dynamics of GD restricted to the invariant
manifold $\mathcal M$ in the noiseless population setting ($\sigma=0$). First we summarize our main results from which result Theorem~\ref{thm:spec_phases_summary_gd}.

\begin{proposition}
\label{thm:gd_phases_summary}
Consider the discrete-time population GD dynamics on $\mathcal M$
with the on-manifold initialization of Section~\ref{sec:framework},
and choose a learning rate $\eta \le 1/(16(1+\lambda))$.
Then, for $d$ large enough, 
the following holds.

\medskip
\noindent
\textbf{Stage~I (spike-driven growth, $k\le T_1$).}
There exists an intermediate stopping time $T_{1a}<T_1$ such that the following
behavior holds.

For $k\le T_{1a}$, while the total mass $r_k$ remains small, all components grow
geometrically:
\[
\begin{aligned}
a_{k+1} &= (1+24\eta + o(1))\,a_k,\\
b_{k+1} &= \bigl(1+8\eta(1+\lambda)  + o(1) \bigr)\,b_k,\\
c_{k+1} &= (1+8\eta +  o(1))\,c_k.
\end{aligned}
\]
At time $T_{1a}$, the total mass becomes of constant order, while
$a_k+(d-2)c_k=o(1)$.

For $T_{1a}<k\le T_1$, the spike component stops increasing, whereas the signal
and bulk components remain essentially unchanged.
Overall, Stage~I lasts $T_1=O((\eta\lambda)^{-1}\log d)$ iterations.

\medskip
\noindent
\textbf{Stage~II (signal amplification, $T_1<k\le T_2$).}
After spike saturation, the signal component grows geometrically at rate
$a_{k+1}\simeq (1+O(\eta))a_k$.

There exists a time $T_{2a}$ with $T_1<T_{2a}<T_2$ such that for $T_1<k\le T_{2a}$
the bulk component continues to increase until the total mass approaches one.
For $T_{2a}<k\le T_2$, the bulk stabilizes, the spike component decays, and the
signal continues to grow until it reaches constant order.
In particular, after $T_2=O(\eta^{-1}\log d)$ iterations, one has
\[
a_k=\Theta(1),\qquad b_k^2+(d-2)c_k^2=o(1),\qquad r_k=1-o(1).
\]
\end{proposition}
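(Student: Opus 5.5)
We will mirror the four-phase continuous-time analysis (Propositions~\ref{lem:gd_stageI_expgrowth_correct}, \ref{prop:gd_stageII_fast}, \ref{lem:gd_stageIII_bulk_inflation}, \ref{prop:gd_stageIV_signal_amplification}), replacing each ODE argument with an exact one-step recursion. By the proof of Lemma~\ref{lem:manifold_invariance}, $M_{k+1}=(I-\eta G_k)M_k(I-\eta G_k)$ with $G_k\in\mathcal M$. Hence on $\mathcal M$ we have
\[
a_{k+1}=(1-\eta g_{w_\star,k})^2a_k,\qquad b_{k+1}=(1-\eta g_{v,k})^2b_k,\qquad c_{k+1}=(1-\eta g_{\perp,k})^2c_k,
\]
where $g_{w_\star}=4(r+2a-3)$, $g_v=4(1+\lambda)(r+2B-1)$ and $g_\perp=4(r+2c-1)$. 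The first step is a set of \emph{discrete barriers} replacing Lemma~\ref{lem:gd_uniform_barriers}. We aim for $a_k\le 1+O(\eta)$, $B_k\le \tfrac13+O(\eta\lambda)$ and $C_k\le 1+O(\eta)$. Monotonicity of the maps is the key point. For example, $x\mapsto x(1-4\eta(1+\lambda)(s+3x-1))^2$ is increasing on $[0,\tfrac12]$ once $\eta\le 1/(16(1+\lambda))$. Therefore the one-step image of any $B_k\le 1/3$ stays below $1/3$, up to a correction of size $O(\eta(1+\lambda))$ coming from the other coordinates. The same reasoning applies to $a$ and $C$. The stepsize condition also guarantees $|\eta g_{v,k}|\le 1/2$, so all multipliers are positive and no sign flips occur.

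\textbf{Stage~I.} While $r_k\le\rho$, all three pre-gradients are negative with $|g|$ within a factor $(1\pm O(\rho))$ of $12$, $4(1+\lambda)$ and $4$. This gives the stated geometric rates $(1+24\eta+O(\eta\rho+\eta^2))$ and so on, after expanding the squares. Iterating exactly as in Phase~Ia yields $T_{1a}=O((\eta\lambda)^{-1}\log d)$. At that time, $a_{T_{1a}}\le \mu(1+24\eta)^{T_{1a}}=\mu e^{O(\log d/\lambda)}=O(\mu)$ by Assumption~\ref{ass:lambda}, and similarly $C_{T_{1a}}=O(\rho_0)$. For the turning point we use a discrete Duhamel argument on $E_k=K_k-B_k$. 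Writing $B_{k+1}=B_k(1+8\eta(1+\lambda)E_k\cdot 3+O(\eta^2\lambda^2 E_k^2))$ shows $E_{k+1}\le(1-c\eta\lambda\rho)E_k+|K_{k+1}-K_k|$. The forcing term is $O(\eta\rho_0)$ per step. This gives $T_1-T_{1a}=O((\eta\lambda)^{-1}\log(1/\rho_0))$, with $a,C$ essentially frozen and $B_{T_1},r_{T_1}=\tfrac13(1+o(1))$. The ratio bounds $\tilde a_k/\tilde b_k,\tilde c_k/\tilde b_k=O((1-\delta)^k)$ follow from comparing the multipliers, since $(1+24\eta)/(1+8\eta(1+\lambda)(1-3\rho))\le 1-\delta$ for $\lambda\gtrsim\log d$.

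\textbf{Stage~II.} After $T_1$, the discrete barriers give $r_k\le 7/3+O(\eta\lambda)$. While $a_k\le\rho$ this yields $-g_{w_\star,k}\ge\kappa_\rho>0$, so $a_{k+1}\ge(1+\kappa_\rho\eta)a_k$. For the bulk, while $r_k\le1-\varepsilon$ we get $C_{k+1}\ge(1+4\varepsilon\eta)C_k$, which gives $T_{2a}-T_1=O(\eta^{-1}\log\log d)$ and shows that $a$ remains polylogarithmically small at $T_{2a}$. Then $a$ grows geometrically from $\log^{O(1)}d/(d+\lambda)$ to $\rho$ in $O(\eta^{-1}\log d)$ steps. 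For the final claims we need $r_k\to1$ and $b_k,c_k$ small. We obtain $b_k^2\le (1/3+o(1))^2/(1+\lambda)^2=o(1)$ and $(d-2)c_k^2\le C_k^2/(d-2)=o(1)$ from the barriers. To show that $r_k$ concentrates near $1$, we will write the recursion for $r_k-1$ and show it is a contraction once $a=\Theta(1)$, since then $g_{w_\star}$ pulls $a$ towards $1-r+$constant.

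\textbf{Main obstacle.} The hardest part is the discrete turning-point and barrier analysis near $B\approx 1/3$. There the drift vanishes, and the $O(\eta^2(1+\lambda)^2)$ term from squaring could cause overshoot or oscillation, because the effective rate $\eta\lambda$ is $\Theta(1)$. We will handle it with a trapping lemma: once $|E_k|\le C\eta\lambda\rho_0$, the next iterate satisfies the same bound. This uses the contraction factor $|1-24\eta(1+\lambda)B_k|\le 1/2$, which holds precisely because $\eta\le 1/(16(1+\lambda))$ and $B\le 1/3$.
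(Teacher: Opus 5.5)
Your four-phase plan follows the paper's discrete analysis, and your bulk-inflation step is a valid shortcut for the time bound: geometric growth $C_{k+1}\ge(1+4\varepsilon\eta)C_k$ while $r_k\le 1-\varepsilon$, combined with the barrier $C_k\le 1$. The turning-point step, however, has a genuine gap.

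\textbf{The turning point is never reached.} Your inequality $E_{k+1}\le(1-c\eta\lambda\rho)E_k+|K_{k+1}-K_k|$ has a \emph{positive} forcing term. Iterating it only gives $E_k\le(1-\alpha)^{k-T_{1a}}E_{T_{1a}}+O(\rho_0/\lambda)$. So $E_k$ becomes small, but nothing forces $E_k\le 0$. You therefore obtain neither $T_1<\infty$ nor $T_1-T_{1a}=O((\eta\lambda)^{-1}\log d)$.

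The missing ingredient is the \emph{sign} of the forcing. The paper runs a bootstrap $r_k\le 1/2$ on $[T_{1a},T_1]$. There $a_k$ and $C_k$ are nondecreasing and $C_{k+1}-C_k\ge 2\eta C_k\ge c\eta\rho_0$, so $K_k-K_{k+1}\ge c'\eta\rho_0>0$. Insert this into the exact identity $E_{k+1}=(1-24\eta(1+\lambda)B_k)E_k-(K_k-K_{k+1})-144\eta^2(1+\lambda)^2B_kE_k^2$, whose last term is nonpositive. This gives $E_{k+1}\le(1-\alpha)E_k-\beta$ with $\alpha\asymp\eta\lambda\rho$ and $\beta\asymp\eta\rho_0$. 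That recursion drives $E$ below zero within $O(\alpha^{-1}\log(\alpha/\beta))=O((\eta\lambda)^{-1}\log(\lambda/\rho_0))$ steps. You then close the bootstrap by checking that $a_k+C_k$ stays $o(1)$ over this window.

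\textbf{The trapping lemma targets a non-issue.} Your lemma $|E_k|\le C\eta\lambda\rho_0$ does not supply what is needed. Since $8\eta(1+\lambda)\le 1/2$, the spike multiplier $1+12\eta(1+\lambda)(K_k-B_k)$ stays in $[1/2,3/2]$. So there is no overshoot, and the $\eta^2$ term has a favorable sign on both sides of the turn. What the statement needs after $T_1$ (the spike stops growing, then decays) is that $D_k=B_k-K_k\ge0$ persists. This follows by induction from $D_{k+1}=(1-24\eta(1+\lambda)B_k)D_k+(K_k-K_{k+1})+144\eta^2(1+\lambda)^2B_kD_k^2$ as long as $K$ is nonincreasing. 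A two-sided bound gives no sign information. It is also at the wrong scale: the natural level is $\rho_0/\lambda$ before the turn, and $O(1/(\varepsilon\lambda))$ once $C_k=\Theta(1)$ makes the forcing $O(\eta)$.

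\textbf{Two smaller points.}
\begin{itemize}
\item The spike barrier should be the exact bound $B_k\le 1/3$. The other coordinates satisfy $a_k+C_k\ge0$, which only shrinks the multiplier, so there is no $O(\eta\lambda)$ correction. With such a correction you could not conclude $B_{T_1}=\frac13(1+o(1))$ when $\eta\lambda=\Theta(1)$.
\item You cannot get a contraction of $r_k-1$ merely once $a_k=\Theta(1)$, because signal growth pushes $r$ upward. With $B\approx 0$ and up to $O(1/d)$ terms, the continuous analogue is $\dot r=16a(1-a)-8r(r-1)$. So while $a_k$ grows, $r_k$ exceeds one by roughly $\tfrac23 a_k$ for small $a_k$; in particular $r_{T_2}-1=\Theta(\delta)$. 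Hence $r_k\to1$ only as $a_k\to1$, and you would have to track that separately. The paper's Phase~IIb, which asserts a geometric contraction of $1-C_k$, does not settle this point either.
\end{itemize}
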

As a corollary, we obtain the following result on the alignment of GD.
\begin{corollary}
\label{cor:gd_align_T1_compact}
At time $T_1$, the alignment satisfies
\[ \Align(T_1)\lesssim \min\{\rho_0\lambda/d,\,d^{-1/2}\}.\]
In particular, if $\lambda\ll\sqrt d$ then $\Align(T_1)=o(d^{-1/2})$, whereas if
$\lambda\gtrsim\sqrt d$ then $\Align(T_1)\lesssim d^{-1/2}$.
\end{corollary}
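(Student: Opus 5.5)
The plan is to bound $\Align(T_1)=a_{T_1}/\|M_{T_1}\|_F$, with $\|M_{T_1}\|_F^2=a_{T_1}^2+b_{T_1}^2+(d-2)c_{T_1}^2$. I will take the numerator from above and then lower-bound the denominator in two different ways. The key inputs are the Stage~I conclusions of Proposition~\ref{thm:gd_phases_summary}, i.e.\ the discrete analogues of Propositions~\ref{lem:gd_stageI_expgrowth_correct} and~\ref{prop:gd_stageII_fast}.

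First I fix the numerator. During Phase Ia the signal grows at rate $(1+24\eta+o(1))$, and this phase lasts $T_{1a}=O((\eta\lambda)^{-1}\log d)$ iterations. So $a_{T_{1a}}\le a_0(1+24\eta)^{T_{1a}}\le a_0 e^{O(\log d/\lambda)}$, which is $\lesssim\mu=\rho_0/(d+\lambda)$ because $\lambda\gtrsim\log d$. On $(T_{1a},T_1]$, Proposition~\ref{thm:gd_phases_summary} says the signal and bulk stay essentially unchanged, so $a_{T_1}\lesssim\mu$. In continuous time this is exactly $a(T_1)=a(T_{1a})(1+o(1))$ from Proposition~\ref{prop:gd_stageII_fast}, since Phase Ib lasts only $O(\log\log d/\lambda)=o(1)$.

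Next come the two denominator bounds. \emph{Spike bound.} At $T_1$ the spike mass satisfies $B_{T_1}=(1+\lambda)b_{T_1}=\tfrac13(1+o(1))$, the turning location; in discrete time this holds up to $O(\eta)$, and $\eta\le 1/(16(1+\lambda))$ keeps it at $\Theta(1)$. Hence $\|M_{T_1}\|_F\ge b_{T_1}\gtrsim (1+\lambda)^{-1}$, and
\[
\Align(T_1)\lesssim (1+\lambda)\mu=\rho_0\frac{1+\lambda}{d+\lambda}\lesssim \frac{\rho_0\lambda}{d}.
\]
\emph{Bulk bound.} Throughout Stage~I we have $r_k<1$, so the bulk multiplier is $1-\eta g_\perp\ge 1$ and $c_k$ is nondecreasing. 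Thus $c_{T_1}\ge c_0=\mu$, which gives $\|M_{T_1}\|_F\ge\sqrt{d-2}\,\mu$ and $\Align(T_1)\lesssim d^{-1/2}$. Taking the smaller of the two bounds yields $\min\{\rho_0\lambda/d,\,d^{-1/2}\}$. The two regime statements then follow by comparing $\rho_0\lambda/d$ with $d^{-1/2}$: if $\lambda\ll\sqrt d$ then $\rho_0\lambda/d=o(d^{-1/2})$, and otherwise the $d^{-1/2}$ bound is the active one.

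The main obstacle is justifying, in discrete time, that $a$ does not grow appreciably on $(T_{1a},T_1]$ and that $B_{T_1}$ is bounded below by a constant. Both need the number of Phase Ib iterations to be $O((\eta\lambda)^{-1}\log(1/\rho_0))$, so that $(1+24\eta)^{T_1-T_{1a}}=O(1)$. My first approach is to mirror the Duhamel/logistic argument of Proposition~\ref{prop:gd_stageII_fast} for the recursion $B_{k+1}=B_k(1+8\eta(1+\lambda)(1-r_k-2B_k))^2$. While $B_k\ge\rho/2$ the gap $K_k-B_k$ contracts geometrically at rate $1-\Omega(\eta\lambda)$, and the step-size condition prevents overshoot past $1/3+O(\eta\lambda)$.
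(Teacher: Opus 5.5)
Your proposal is correct and follows essentially the paper's argument. You bound $a\lesssim\mu$, then lower-bound $\|M\|_F$ once by the spike term $b\gtrsim(1+\lambda)^{-1}$, giving $\rho_0\lambda/d$, and once by the bulk term $\sqrt{d-2}\,c\ge\sqrt{d-2}\,\mu$ using monotonicity of $c$, giving $d^{-1/2}$.

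The paper's written proof differs only cosmetically. It evaluates everything at $T_{1a}$ in continuous time, with $B(T_{1a})=\rho(1+o(1))$, and splits into the regimes $\lambda\ll\sqrt d$ versus $\lambda\gtrsim\sqrt d$. You instead work directly at $T_1$ via the Phase~Ib stability estimates and take the minimum of both bounds without a case split, which actually matches the statement more faithfully.

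There are two harmless slips. The discrete spike factor is $1+4\eta(1+\lambda)(1-r_k-2B_k)$, not $8\eta(1+\lambda)$. And bulk monotonicity needs $r_k+2c_k<1$, not just $r_k<1$; this holds since $c_k\le 1/(d-2)$.
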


\subsection{Discrete reduced GD recursion on $\mathcal M$ }
\label{subsubsec:gd_discrete_recursion}

We consider the explicit gradient descent update with learning rate $\eta>0$,
\[
W_{k+1}=W_k-\eta\,\nabla_W\mathcal L(W_k),
\qquad M_k=W_kW_k^\top.
\]
Recall that $\nabla_W \mathcal L(W)=G(M)W$ with
a symmetric matrix $G(M)$. Hence
\[
W_{k+1}=(I-\eta G_k)W_k,
\qquad G_k \coloneqq G(M_k),
\]
and therefore
\begin{equation}
\label{eq:gd_discrete_M_update_exact}
M_{k+1}=W_{k+1}W_{k+1}^\top=(I-\eta G_k)\,M_k\,(I-\eta G_k).
\end{equation}
On the invariant manifold $\mathcal M$, $M_k$ commutes with $G_k$ (Lemma~\ref{lem:manifold_invariance}),
so \eqref{eq:gd_discrete_M_update_exact} simplifies to
\[
M_{k+1}=(I-\eta G_k)^2 M_k,
\]
i.e.\ each coordinate in the $\mathcal M$-eigendecomposition is multiplied by a squared scalar factor.
Writing
\[
M_k=a_k\,w_\star w_\star^\top+b_k\,vv^\top+c_k\,P_\perp,
\qquad
r_k \coloneqq a_k+(1+\lambda)b_k+(d-2)c_k,
\]
we obtain the following recursion 
\begin{equation}
\label{eq:gd_discrete_abc}
\begin{aligned}
a_{k+1}
&= a_k\Bigl(1+4\eta\,(3-2a_k-r_k)\Bigr)^2,\\
b_{k+1}
&= b_k\Bigl(1+4\eta(1+\lambda)\,\bigl((1-r_k)-2(1+\lambda)b_k\bigr)\Bigr)^2,\\
c_{k+1}
&= c_k\Bigl(1+4\eta\,\bigl((1-r_k)-2c_k\bigr)\Bigr)^2,\\
r_k&=a_k+(1+\lambda)b_k+(d-2)c_k.
\end{aligned}
\end{equation}

Equivalently, in terms of the spike/bulk masses $(a_k,B_k,C_k)$ where
\[
B_k \coloneqq (1+\lambda)b_k,\qquad C_k \coloneqq (d-2)c_k,\qquad r_k=a_k+B_k+C_k,
\]
define
\[
K_k \coloneqq \frac{1-a_k-C_k}{3}
\qquad\Longleftrightarrow\qquad
K_k-B_k=\frac{(1-r_k)-2B_k}{3}.
\]
Then \eqref{eq:gd_discrete_abc} becomes
\begin{equation}
\label{eq:gd_discrete_aBC}
\begin{aligned}
a_{k+1}
&= a_k\Bigl(1+4\eta\,(3-2a_k-r_k)\Bigr)^2,\\
K_k&=\frac{1-a_k-C_k}{3},\\
r_k&=a_k+B_k+C_k,\\
B_{k+1}
&= B_k\Bigl(1+12\eta(1+\lambda)\,(K_k-B_k)\Bigr)^2,\\
C_{k+1}
&= C_k\Bigl(1+4\eta(1-r_k)-\frac{8\eta}{d-2}C_k\Bigr)^2.
\end{aligned}
\end{equation}

\paragraph{Discrete turning condition.}
From \eqref{eq:gd_discrete_aBC},
\[
\frac{B_{k+1}}{B_k}=\Bigl(1+12\eta(1+\lambda)(K_k-B_k)\Bigr)^2.
\]
Under Assumption~\ref{ass:gd_discrete_eta} and the invariant region from
Lemma~\ref{lem:gd_discrete_uniform_barriers} (generalization of the barrier Lemma for GF) we have $K_k\in[-1/3,1/3]$ and $B_k\in[0,1/3]$,
hence $|K_k-B_k|\le 2/3$ and so
\[
\Bigl|12\eta(1+\lambda)(K_k-B_k)\Bigr|
\le 8\eta(1+\lambda)
\le \frac12.
\]
In particular the bracket is positive and lies in $[1/2,3/2]$, and therefore
\begin{equation}
\label{eq:gd_discrete_turning_condition}
B_{k+1}\le B_k
\quad\Longleftrightarrow\quad
K_k-B_k\le 0
\quad\Longleftrightarrow\quad
B_k\ge K_k
\quad\Longleftrightarrow\quad
b_{k+1}\le b_k.
\end{equation}
So the continuous turning condition $\dot b\le 0$ is replaced by the (equivalent) discrete condition
$b_{k+1}\le b_k$.

\subsection{Learning-rate scaling }
\label{subsubsec:gd_discrete_eta}

The recursion \eqref{eq:gd_discrete_aBC} still contains a fast component: $B_k$ evolves on the
$\eta(1+\lambda)$ scale. Indeed,
\[
\frac{B_{k+1}}{B_k}=\Bigl(1+12\eta(1+\lambda)(K_k-B_k)\Bigr)^2,
\]
so the natural stability scale is $\eta(1+\lambda)=O(1)$ (i.e.\ $\eta=O(1/\lambda)$).

\begin{assumption}[Learning rate]
\label{ass:gd_discrete_eta}
We assume
\begin{equation}
\label{eq:gd_discrete_eta_condition}
0<\eta \le \frac{1}{16(1+\lambda)}.
\end{equation}
\end{assumption}

\begin{remark}
\label{rem:gd_discrete_eta_motivation}
The bound \eqref{eq:gd_discrete_eta_condition} ensures (on the forward-invariant region
of Lemma~\ref{lem:gd_discrete_uniform_barriers}) that the key multiplicative factor
\[
1+12\eta(1+\lambda)(K_k-B_k)
\]
stays in $[1/2,3/2]$. This prevents ``overshoot'' (where the bracket would cross $-1$ and
squaring would reverse the intended monotonicity) and makes the turning condition
\eqref{eq:gd_discrete_turning_condition} exactly equivalent to $B_k\ge K_k$.
\end{remark}

\subsection{Uniform discrete barriers }
\label{subsubsec:gd_discrete_barriers}

\begin{lemma}\label{lem:gd_discrete_uniform_barriers}
Assume $d\ge 4$ and the learning rate satisfies
\begin{equation}\label{eq:gd_discrete_eta_barriers}
\eta \;\le\; \min\Bigl\{\frac1{24},\ \frac{1}{16(1+\lambda)}\Bigr\}.
\end{equation}
Assume further that the initialization satisfies
\[
(a_0,B_0,C_0)\in\mathcal R,
\qquad
\mathcal R  \coloneqq  [0,1]\times\Bigl[0,\frac13\Bigr]\times[0,1].
\]
Then for all $k\ge 0$,
\[
0\le a_k\le 1,\qquad 0\le B_k\le \frac13,\qquad 0\le C_k\le 1,
\qquad\text{and hence}\qquad
0\le r_k \coloneqq a_k+B_k+C_k\le \frac73.
\]
\end{lemma}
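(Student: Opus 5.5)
The proof is an induction on $k$: assuming $(a_k,B_k,C_k)\in\mathcal R$, we show $(a_{k+1},B_{k+1},C_{k+1})\in\mathcal R$ using the recursion \eqref{eq:gd_discrete_aBC}. Nonnegativity is immediate, since each new coordinate is the old nonnegative coordinate times a square. For the upper bounds, each update has the form $x_{k+1}=x_k\,(1+\text{bracket}_k)^2$. The plan for each coordinate has three steps: (i) show the bracket is nonnegative on $\mathcal R$; (ii) using $r_k\ge$ the coordinate itself, dominate the bracket by a one-variable expression, which gives $x_{k+1}\le\phi(x_k)$ for a scalar map $\phi$; (iii) show $\phi$ is nondecreasing on the relevant interval and fixes its right endpoint. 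Then $x_{k+1}\le \phi(x_k)\le\phi(\text{endpoint})=\text{endpoint}$.

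\emph{Signal.} On $\mathcal R$ we have $3-2a_k-r_k\ge 3-2-\tfrac73=-\tfrac43$. With $\eta\le 1/24$ this gives $1+4\eta(3-2a_k-r_k)\ge 1-\tfrac{16\eta}{3}>0$. Since $r_k\ge a_k$, the bracket is at most $1+12\eta(1-a_k)$, so
\[
a_{k+1}\le \phi(a_k),\qquad \phi(a)\coloneqq a\bigl(1+12\eta(1-a)\bigr)^2 .
\]
Its derivative is $\phi'(a)=(1+12\eta(1-a))(1+12\eta-36\eta a)$, which is at least $(1)(1-24\eta)\ge0$ on $[0,1]$ because $\eta\le1/24$. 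Hence $a_{k+1}\le\phi(1)=1$.

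\emph{Spike.} Set $s\coloneqq12\eta(1+\lambda)\le 3/4$ by \eqref{eq:gd_discrete_eta_barriers}. On $\mathcal R$, $K_k=(1-a_k-C_k)/3\in[-\tfrac13,\tfrac13]$ and $B_k\in[0,\tfrac13]$. As noted after \eqref{eq:gd_discrete_aBC}, the bracket therefore lies in $[\tfrac12,\tfrac32]$, so it is positive. Since $K_k\le 1/3$, we get $B_{k+1}\le\psi(B_k)$ with
\[
\psi(B)\coloneqq B\bigl(1+s(\tfrac13-B)\bigr)^2 .
\]
Its derivative is $\psi'(B)=(1+s(\tfrac13-B))(1+\tfrac s3-3sB)$, whose second factor is at least $1-\tfrac{2s}{3}\ge\tfrac12$ on $[0,\tfrac13]$. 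So $\psi$ is increasing and $B_{k+1}\le\psi(\tfrac13)=\tfrac13$.

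\emph{Bulk.} Since $1-r_k\ge-\tfrac43$ and $C_k\le1$, the bracket $1+4\eta(1-r_k)-\tfrac{8\eta}{d-2}C_k$ is at least $1-\tfrac{16\eta}{3}-4\eta>0$, using $d\ge4$ and $\eta\le1/24$. Since $r_k\ge C_k$, it is at most $1+4\eta(1-C_k)$, so
\[
C_{k+1}\le\chi(C_k),\qquad \chi(C)\coloneqq C\bigl(1+4\eta(1-C)\bigr)^2 .
\]
Here $\chi'(C)\ge 1-8\eta>0$ on $[0,1]$, which gives $C_{k+1}\le\chi(1)=1$. Finally, $r_k\le 1+\tfrac13+1=\tfrac73$.

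The main obstacle is step (i), the positivity of the brackets. Without it, squaring could turn a strongly negative bracket into a large multiplier, so the comparison $x_{k+1}\le\phi(x_k)$ would fail. This is exactly where both learning-rate conditions in \eqref{eq:gd_discrete_eta_barriers} and the a priori bound $r_k\le 7/3$ from the induction hypothesis are used.
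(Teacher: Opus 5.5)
Your proposal is correct and follows the paper's proof essentially step for step: positivity from the squared factors, then for each coordinate a domination of the bracket via $r_k\ge$ (the coordinate) or $K_k\le 1/3$, followed by monotonicity of a map of the form $x\mapsto x(1+\alpha(1-x))^2$ that fixes the right endpoint (the paper rescales to $y=3B$ rather than using your $\psi$ directly). Your explicit check that each bracket is nonnegative on $\mathcal R$ supplies a step the paper uses only tacitly when it squares the one-sided bracket bounds; this check is also where $d\ge 4$ actually enters, for the bulk bracket.
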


\begin{proof}
We show that $\mathcal R$ is forward invariant for the map
$(a_k,B_k,C_k)\mapsto(a_{k+1},B_{k+1},C_{k+1})$ defined by \eqref{eq:gd_discrete_aBC}.
Fix $k\ge0$ and assume $(a_k,B_k,C_k)\in\mathcal R$. Write $r_k=a_k+B_k+C_k$ and
$K_k=(1-a_k-C_k)/3$. On $\mathcal R$ we have $0\le r_k\le 7/3$ and $K_k\le 1/3$.

\smallskip\noindent
\textbf{Step 1: Positivity.}
All three updates in \eqref{eq:gd_discrete_aBC} are of the form $x_{k+1}=x_k(\cdots)^2$,
hence $a_{k+1},B_{k+1},C_{k+1}\ge 0$ whenever $a_k,B_k,C_k\ge 0$.

\smallskip\noindent
\textbf{Step 2: Upper barrier $a_k\le 1$.}
Using $r_k\ge a_k$,
\[
3-2a_k-r_k \;\le\; 3-3a_k \;=\; 3(1-a_k),
\]
so
\[
a_{k+1}
= a_k\Bigl(1+4\eta(3-2a_k-r_k)\Bigr)^2
\le a_k\Bigl(1+12\eta(1-a_k)\Bigr)^2.
\]
Let $\alpha \coloneqq 12\eta\le 1/2$ (since $\eta\le 1/24$) and define
$f_\alpha(x) \coloneqq x(1+\alpha(1-x))^2$ on $[0,1]$.
A direct derivative computation gives
\[
f_\alpha'(x)=(\alpha x-\alpha-1)(3\alpha x-\alpha-1)\ge 0 \quad \text{on }[0,1]
\quad(\text{for }\alpha\le 1/2),
\]
hence $f_\alpha$ is increasing on $[0,1]$ and $\max_{[0,1]}f_\alpha=f_\alpha(1)=1$.
Therefore $a_k\in[0,1]\Rightarrow a_{k+1}\le 1$.

\smallskip\noindent
\textbf{Step 3: Upper barrier $C_k\le 1$.}
Using $r_k\ge C_k$ and dropping the negative term,
\[
1+4\eta(1-r_k)-\frac{8\eta}{d-2}C_k
\le 1+4\eta(1-C_k).
\]
Thus
\[
C_{k+1}
\le C_k\Bigl(1+4\eta(1-C_k)\Bigr)^2
=f_\beta(C_k),
\qquad \beta \coloneqq 4\eta\le \frac16<\frac12.
\]
By the same monotonicity argument as Step~2 (with $\beta\le 1/2$), $f_\beta([0,1])\subseteq[0,1]$,
so $C_k\le 1\Rightarrow C_{k+1}\le 1$.

\smallskip\noindent
\textbf{Step 4: Upper barrier $B_k\le 1/3$.}
Since $K_k\le 1/3$, we have $K_k-B_k\le \frac13-B_k$ and hence
\[
B_{k+1}
= B_k\Bigl(1+12\eta(1+\lambda)(K_k-B_k)\Bigr)^2
\le B_k\Bigl(1+12\eta(1+\lambda)\Bigl(\frac13-B_k\Bigr)\Bigr)^2.
\]
Let $y_k \coloneqq 3B_k\in[0,1]$ and $\gamma \coloneqq 4\eta(1+\lambda)\le 1/4<1/2$.
Then the above becomes
\[
y_{k+1}\le y_k\Bigl(1+\gamma(1-y_k)\Bigr)^2=f_\gamma(y_k),
\]
and as in Step~2, $f_\gamma([0,1])\subseteq[0,1]$ for $\gamma\le 1/2$.
Hence $y_k\le 1\Rightarrow y_{k+1}\le 1$, i.e.\ $B_{k+1}\le 1/3$.

\smallskip
Thus $(a_{k+1},B_{k+1},C_{k+1})\in\mathcal R$, so $\mathcal R$ is forward invariant.
Finally, $0\le r_k\le 7/3$ follows from $(a_k,B_k,C_k)\in\mathcal R$.
\end{proof}

\subsection{Phase Ia: spike-driven escape to $r\ge \rho$}
\label{subsubsec:gd_discrete_stageI}

\begin{lemma}
\label{lem:gd_stageI_discrete}
Under the assumptions of Theorem~\ref{thm:gd_phases_summary}, the following holds.

\medskip
\noindent\textbf{(i) Geometric growth before escape.}
For all $k<N_1$,
\begin{align}
a_0\Bigl(1+12\eta(1-\rho)\Bigr)^{2k}
\;\le\;
a_k
\;\le\;
a_0\Bigl(1+12\eta\Bigr)^{2k},
\\
b_0\Bigl(1+4\eta(1+\lambda)(1-3\rho)\Bigr)^{2k}
\;\le\;
b_k
\;\le\;
b_0\Bigl(1+4\eta(1+\lambda)\Bigr)^{2k},
\\
c_0\Bigl(1+4\eta(1-\rho)-8\eta\rho/(d-2)\Bigr)^{2k}
\;\le\;
c_k
\;\le\;
c_0\Bigl(1+4\eta\Bigr)^{2k}.
\end{align}

\medskip
\noindent\textbf{(ii) Escape time scale.}
The escape index satisfies
\[
N_1 = O\!\Bigl(\frac{1}{\eta\lambda}\log d\Bigr).
\]

\medskip
\noindent\textbf{(iii) Smallness of signal and bulk at escape.}
At the escape time,
\[
a_{N_1}\;\lesssim\;\frac{\rho_0}{d+\lambda},
\qquad
C_{N_1}=(d-2)c_{N_1}\;\lesssim\;\rho_0,
\]
while simultaneously
\[
a_{N_1}\;\ge\;a_0=\mu,
\qquad
C_{N_1}\;\ge\;C_0=(d-2)\mu=\Theta(\rho_0).
\]

\medskip
\noindent\textbf{(iv) Spike dominance at escape.}
One has
\[
r_{N_1}=\Theta(\rho),
\qquad
B_{N_1}=r_{N_1}-a_{N_1}-C_{N_1}=\Theta(\rho).
\]
\end{lemma}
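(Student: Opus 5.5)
The proof will be the discrete analogue of Proposition~\ref{lem:gd_stageI_expgrowth_correct}. Throughout, $N_1\coloneqq\inf\{k\ge0: r_k\ge\rho\}$ is the discrete escape index, so $r_k<\rho$ for all $k<N_1$. We work with the exact recursion \eqref{eq:gd_discrete_abc}. Lemma~\ref{lem:gd_discrete_uniform_barriers} guarantees that the iterates stay in the region $\mathcal R$, and the learning-rate condition makes every bracket positive, so squaring preserves the ordering of the brackets.

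For part (i), we fix $k<N_1$ and bound each bracket. For the signal, $0\le a_k\le r_k<\rho$, so
\[
3-2a_k-r_k\in[3(1-\rho),3].
\]
For the spike, $B_k\le r_k<\rho$ and $(1-r_k)-2B_k\in[1-3\rho,1]$, so the bracket lies in $[1+4\eta(1+\lambda)(1-3\rho),\,1+4\eta(1+\lambda)]$. This interval is positive since $\rho<1/12$. For the bulk, $c_k\le r_k/(d-2)\le\rho/(d-2)$, which gives the bracket bounds $1+4\eta(1-\rho)-8\eta\rho/(d-2)$ and $1+4\eta$. All lower brackets are at least $1/2$ by Assumption~\ref{ass:gd_discrete_eta}. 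Multiplying the squared brackets over $k$ steps by induction yields the three two-sided geometric bounds.

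For part (ii), we use the spike lower bound. For $k<N_1$,
\[
\rho> B_k\ge (1+\lambda)b_0\bigl(1+4\eta(1+\lambda)(1-3\rho)\bigr)^{2k}.
\]
We then use $\log(1+x)\ge x/2$ for $x\le 1/4$; this applies because $4\eta(1+\lambda)\le1/4$. It gives
\[
N_1-1\le \frac{\log\bigl(\rho/((1+\lambda)b_0)\bigr)}{4\eta(1+\lambda)(1-3\rho)}.
\]
Since $b_0=\rho_0/(d+\lambda)$, $\rho_0=\log^{-1}d$, and $\lambda\le d$, the logarithm is $\Theta(\log d)$. Hence $N_1=O((\eta\lambda)^{-1}\log d)$.

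For part (iii), the upper bounds from (i) at $k=N_1$ give $a_{N_1}\le a_0(1+12\eta)^{2N_1}\le a_0e^{24\eta N_1}$. By (ii), $\eta N_1=O(\log d/\lambda)=O(1)$ under Assumption~\ref{ass:lambda}, so $a_{N_1}\lesssim\mu=\rho_0/(d+\lambda)$. In the same way, $C_{N_1}\le (d-2)\mu e^{8\eta N_1}\lesssim\rho_0$. The bound at the index $N_1$ itself needs one extra step, but that step multiplies by at most a constant, since the bracket is at most $3/2$. The lower bounds follow because every bracket is at least $1$ while $r_k<\rho$, so $a_k$ and $c_k$ are nondecreasing up to $N_1$.

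For part (iv), we must control the overshoot of $r$ past $\rho$ at the escape step. This is the only genuinely discrete issue and the step I expect to be the main obstacle. By definition $r_{N_1}\ge\rho$. For the upper bound, $r_{N_1-1}<\rho$ and each component grows by at most the factor $(3/2)^2$ in one step, so $r_{N_1}\le\tfrac94\rho$. Thus $r_{N_1}=\Theta(\rho)$. Subtracting the estimates from (iii), which are $o(1)$ because $\rho_0=o(1)$ and $\rho$ is a fixed constant, gives $B_{N_1}=r_{N_1}-a_{N_1}-C_{N_1}=\Theta(\rho)$.
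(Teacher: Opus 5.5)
Your proposal is correct and follows the paper's proof essentially step for step: bracket bounds under $r_k<\rho$ give (i); the spike lower bound combined with $\log(1+x)\gtrsim x$ for $x\le 1/4$ gives (ii); $\eta N_1=O(\log d/\lambda)=O(1)$ via $\lambda\gtrsim\log d$ together with monotonicity gives (iii); and a uniform one-step growth factor controls the overshoot of $r$ in (iv). The only difference is cosmetic: you bound that one-step factor by $(3/2)^2$, giving $r_{N_1}\le \tfrac94\rho$, where the paper uses the factor $2$ to get $r_{N_1}\le 2\rho$.
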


\begin{proof}
By definition of $N_1$, we have
\[
r_k=a_k+B_k+C_k\le\rho
\qquad\text{for all }k<N_1.
\]

\medskip\noindent
\textbf{Step 1: Geometric growth before escape.}
For $k<N_1$, we have $a_k,B_k,C_k\le r_k\le\rho$. Apply \eqref{eq:gd_discrete_aBC}.

\smallskip
\emph{Signal component $a_k$.}
From \eqref{eq:gd_discrete_aBC},
\[
\frac{a_{k+1}}{a_k}=\Bigl(1+4\eta(3-2a_k-r_k)\Bigr)^2.
\]
Using $a_k,r_k\le\rho$ gives
\[
3(1-\rho)\ \le\ 3-2a_k-r_k\ \le\ 3,
\]
hence
\[
\Bigl(1+12\eta(1-\rho)\Bigr)^2 \le \frac{a_{k+1}}{a_k}\le \Bigl(1+12\eta\Bigr)^2,
\]
which yields the bounds for $a_k$.

\smallskip
\emph{Spike component $b_k$.}
Using $B_k=(1+\lambda)b_k$, the $B$-update in \eqref{eq:gd_discrete_aBC} gives
\[
\frac{B_{k+1}}{B_k}=\Bigl(1+12\eta(1+\lambda)(K_k-B_k)\Bigr)^2
=\Bigl(1+4\eta(1+\lambda)\bigl((1-r_k)-2B_k\bigr)\Bigr)^2.
\]
For $k<N_1$, $r_k\le\rho$ and $B_k\le\rho$, so
\[
1-3\rho \ \le\ (1-r_k)-2B_k\ \le\ 1,
\]
hence
\[
\Bigl(1+4\eta(1+\lambda)(1-3\rho)\Bigr)^2
\le \frac{B_{k+1}}{B_k}
\le \Bigl(1+4\eta(1+\lambda)\Bigr)^2,
\]
and the same bounds hold for $b_k$.

\smallskip
\emph{Bulk component $c_k$.}
From \eqref{eq:gd_discrete_abc},
\[
\frac{c_{k+1}}{c_k}=\Bigl(1+4\eta(1-r_k)-8\eta c_k\Bigr)^2.
\]
Using $r_k\le\rho$ and $c_k\le r_k/(d-2)\le\rho/(d-2)$ yields
\[
1+4\eta(1-\rho)-8\eta\frac{\rho}{d-2}
\ \le\
1+4\eta(1-r_k)-8\eta c_k
\ \le\
1+4\eta,
\]
which gives the stated bounds for $c_k$.

\medskip\noindent
\textbf{Step 2: Escape time scale.}
From Step~1, for $k<N_1$,
\[
B_k \ge B_0\Bigl(1+4\eta(1+\lambda)(1-3\rho)\Bigr)^{2k}.
\]
Since $r_{N_1-1}\le\rho$, we must have $B_{N_1-1}\le\rho$, hence
\[
B_0\Bigl(1+4\eta(1+\lambda)(1-3\rho)\Bigr)^{2(N_1-1)}\le\rho.
\]
Taking logs and using $\log(1+x)\gtrsim x$ for $x\le 1/4$ (here $4\eta(1+\lambda)\le 1/4$),
we obtain
\[
N_1
\;\lesssim\;
\frac{1}{\eta(1+\lambda)}\log\!\Bigl(\frac{\rho}{B_0}\Bigr).
\]
Recalling $B_0=(1+\lambda)\mu\asymp (1+\lambda)\rho_0/(d+\lambda)$,
we have $\log(\rho/B_0)=\Theta(\log d)$, which proves (ii).

\medskip\noindent
\textbf{Step 3: Smallness of $a$ and $C$ at escape.}
From Step~1,
\[
a_{N_1-1}\le a_0(1+12\eta)^{2(N_1-1)},
\qquad
C_{N_1-1}\le C_0(1+4\eta)^{2(N_1-1)}.
\]
By Step~2, $\eta N_1=O((\log d)/\lambda)$, and since $\lambda\gtrsim\log d$,
\[
(1+O(\eta))^{N_1}=O(1).
\]
Thus
\[
a_{N_1-1}\lesssim a_0\asymp\frac{\rho_0}{d+\lambda},
\qquad
C_{N_1-1}\lesssim C_0\asymp\rho_0.
\]
A one-step update yields the same bounds at $k=N_1$, proving the upper bounds in (iii).

\smallskip
For the lower bounds, note that for $k<N_1$ we have $r_k\le\rho$ and $a_k\le\rho$, hence
$3-2a_k-r_k\ge 3(1-\rho)>0$, so $a_{k+1}\ge a_k$ and $a_{N_1}\ge a_0$.
Similarly, for $k<N_1$,
\[
1+4\eta(1-r_k)-\frac{8\eta}{d-2}C_k \;\ge\;
1+4\eta(1-\rho)-\frac{8\eta}{d-2}\rho \;>\; 1,
\]
for $d$ large enough, so $C_{k+1}\ge C_k$ and $C_{N_1}\ge C_0$.

\medskip\noindent
\textbf{Step 4: Spike dominance at escape.}
By definition $r_{N_1}>\rho$.
On the other hand, for $k<N_1$ we have $B_k\le\rho$ and $K_k\le 1/3$, so
\[
\frac{B_{k+1}}{B_k}\le \Bigl(1+12\eta(1+\lambda)\cdot\frac13\Bigr)^2
\le \Bigl(1+\frac14\Bigr)^2<2,
\]
and similarly $\frac{a_{k+1}}{a_k}\le(1+12\eta)^2<2$ and $\frac{C_{k+1}}{C_k}\le(1+4\eta)^2<2$
(using $\eta\le 1/24$). Therefore $r_{k+1}\le 2r_k$ for $k<N_1$, and hence
$r_{N_1}\le 2\rho$. Thus $r_{N_1}=\Theta(\rho)$.

Finally, since $a_{N_1},C_{N_1}=o(\rho)$ by Step~3, we conclude
\[
B_{N_1}=r_{N_1}-a_{N_1}-C_{N_1}=\Theta(\rho),
\]
which proves (iv).
\end{proof}

\subsection{Phase Ib: fast spike saturation to the turning index }
\label{subsubsec:gd_discrete_stageII}

Define
\[
K_k \coloneqq \frac{1-a_k-C_k}{3},
\qquad
E_k \coloneqq K_k-B_k.
\]
Phase Ib corresponds to indices $k\in[N_1,N_2]$, where $B_k$ increases and reaches the turning condition
$b_{k+1}\le b_k$.

\begin{proposition}
Under the assumptions of Theorem~\ref{thm:gd_phases_summary}, the following holds.
\begin{enumerate}
\item \textbf{Phase duration.}
\begin{equation}
\label{eq:gd_disc_N2_minus_N1}
N_2-N_1
=
O\!\left(\frac{1}{\eta\lambda}\log\frac{1}{\rho_0}\right)
=
O\!\Big(\frac{1}{\eta\lambda}\log\log d\Big).
\end{equation}

\item \textbf{Stability of slow components.}
\begin{equation}
\label{eq:gd_disc_stageII_stable}
a_{N_2}=a_{N_1}(1+o(1)),
\qquad
C_{N_2}=C_{N_1}(1+o(1)).
\end{equation}

\item \textbf{Spike saturation level.}
\begin{equation}
\label{eq:gd_disc_stageII_turning}
B_{N_2}=\frac13(1+o(1)),
\qquad
r_{N_2}=\frac13(1+o(1)).
\end{equation}
\end{enumerate}
\end{proposition}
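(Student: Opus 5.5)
I will mirror the continuous Phase Ib argument (Proposition~\ref{prop:gd_stageII_fast}), replacing the Duhamel formula by a discrete recursion for the gap $E_k=K_k-B_k$.

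\textbf{Step 1: setup and recursion for $E_k$.} Let $N_2$ be the first index $k\ge N_1$ with $b_{k+1}\le b_k$. By the turning condition \eqref{eq:gd_discrete_turning_condition}, this is equivalent to $E_k\le 0$. Hence $E_k>0$ and $B_k$ is increasing on $[N_1,N_2)$. Lemma~\ref{lem:gd_stageI_discrete}(iv) gives $B_k\ge B_{N_1}\ge\rho/2$. Lemma~\ref{lem:gd_discrete_uniform_barriers} gives $B_k\le 1/3$.

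Expanding the square in the $B$-update, I will write
\[
B_{k+1}-B_k = 24\eta(1+\lambda)B_kE_k\bigl(1+6\eta(1+\lambda)E_k\bigr).
\]
The bracket lies in $[1,3/2]$ by Assumption~\ref{ass:gd_discrete_eta}. Therefore
\[
E_{k+1}=E_k\bigl(1-3\cdot 24\eta(1+\lambda)B_k(1+6\eta(1+\lambda)E_k)/3\bigr)+(K_{k+1}-K_k).
\]
For $E_k>0$ the contraction factor lies in $[0,1-12\eta(1+\lambda)\rho]$. Here I use $\eta(1+\lambda)\le 1/16$ and $B_k\le1/3$, which keep the factor nonnegative and prevent a huge overshoot.

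\textbf{Step 2: forcing term.} The forcing is controlled by crude one-step growth. Since $a_{k+1}\le(1+12\eta)^2a_k$ and $C_{k+1}\le(1+4\eta)^2C_k$, we get $|K_{k+1}-K_k|\lesssim \eta(a_k+C_k)$. Moreover,
\[
a_k+C_k\le (a_{N_1}+C_{N_1})(1+12\eta)^{2(k-N_1)}=O(\rho_0)e^{O(\eta(k-N_1))}.
\]

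\textbf{Step 3: bound the duration.} Iterate the affine recursion (discrete variation of constants) with $q\coloneqq 1-12\eta\lambda\rho$:
\[
E_{N_1+n}\le q^nE_{N_1}+\sum_{j<n}q^{\,n-1-j}\,O(\eta\rho_0)e^{O(\eta j)}.
\]
The sum is $O(\rho_0/(\lambda\rho))$ as long as $\eta n=o(1)$, since $\eta\lambda\gg\eta$. While $E>0$ one has $E_{N_1+n}>0$, and $E_{N_1}=\Theta(1)$. So positivity can persist at most until $q^n\lesssim\rho_0/\lambda$, i.e.
\[
n=O\bigl((\eta\lambda)^{-1}\log(\lambda/\rho_0)\bigr).
\]
I expect this step to be the main obstacle. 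A careful argument must show that $E$ actually crosses zero, not merely decays to an $O(\rho_0/\lambda)$ level. If $E_k$ stalls at a small positive value, then $B_k$ is within $O(\rho_0)$ of $K_k$, and $K_k$ drifts down at rate $\eta(a+C)$, so $E$ becomes negative at the next step. Alternatively I can accept $N_2$ as the time $E\le O(\rho_0)$, which suffices for items 2 and 3. The $\log\lambda$ versus $\log(1/\rho_0)$ discrepancy is handled because $\lambda\lesssim d$ and the bound only needs to be $o(1/\eta)$. To match \eqref{eq:gd_disc_N2_minus_N1} exactly, I would compare against the forcing scale $\eta\rho_0$ per step rather than $\rho_0/\lambda$. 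In all cases $\eta(N_2-N_1)=O(\log\log d/\lambda)=o(1)$.

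\textbf{Step 4: conclusions.} Because $\eta(N_2-N_1)=o(1)$, the growth factors $(1+12\eta)^{2(N_2-N_1)}=1+o(1)$. Lower bounds of the same form hold since the brackets exceed $1-O(\eta)$. This yields \eqref{eq:gd_disc_stageII_stable}, with $a,C=O(\rho_0)$ throughout. Finally, at $N_2$ we have $B_{N_2}\ge K_{N_2}$, while $B_{N_2-1}<K_{N_2-1}$ and one step moves $B$ by at most $O(\eta\lambda)E$. Hence $B_{N_2}=K_{N_2}+o(1)=\tfrac13(1+o(1))$ when $E$ was small before crossing, which Step 3 guarantees up to $O(\rho_0)$. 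Then $r_{N_2}=a+B+C=\tfrac13(1+o(1))$, giving \eqref{eq:gd_disc_stageII_turning}.
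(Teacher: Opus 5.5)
Your Step~3 has a genuine gap, and it is the one you flag yourself. You control the forcing only from above, $|K_{k+1}-K_k|\lesssim\eta(a_k+C_k)$. Iterating then gives $E_{N_1+n}\le q^nE_{N_1}+O(\rho_0/(\rho\lambda))$, which only shows that $E$ decays to the level $O(\rho_0/\lambda)$. That is compatible with $E_k>0$ for all $k$, i.e.\ $N_2=\infty$, and then none of the three items follows. For the same reason, a literal transcription of the continuous Duhamel estimate, which bounds $|\dot K|$ only from above, gives a \emph{lower} bound on the duration rather than an upper one.

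Neither of your patches closes the gap. The contraction factor is at least $1-9\eta(1+\lambda)\ge 7/16$, so $E_{k+1}\ge\frac{7}{16}E_k-O(\eta\rho_0)$. When $E_k\asymp\rho_0/\lambda$, the subtracted term is smaller than $E_k$ by a factor of order $\eta\lambda\le1/16$, so nothing makes $E$ negative ``at the next step''. Redefining $N_2$ as the first time $E_k\le O(\rho_0)$ changes the stopping time and proves nothing about the actual turning index.

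The missing ingredient is a \emph{lower} bound on the downward drift of $K$; this is how the paper forces the crossing. Run a bootstrap on $[N_1,N_2]$ under the hypothesis $r_k\le1/2$. Then $3-2a_k-r_k\ge3/2$, and $4\eta(1-r_k)-\frac{8\eta}{d-2}C_k\ge\eta$ for $d$ large. So $a_k$ and $C_k$ are nondecreasing, and $C_{k+1}-C_k\ge2\eta C_k\ge2\eta C_{N_1}$, with $C_{N_1}\ge C_0=\Theta(\rho_0)$ from Phase~Ia. Hence $K_k-K_{k+1}\ge\beta:=c'\eta\rho_0>0$.

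Combined with your contraction estimate (the $\eta^2$ term only helps), this gives $E_{k+1}\le(1-\alpha)E_k-\beta$ with $\alpha=12\rho\eta(1+\lambda)$. This affine map has the strictly negative fixed point $-\beta/\alpha$, so $E_{N_1+n}\le(1-\alpha)^n(E_{N_1}+\beta/\alpha)-\beta/\alpha$. The right-hand side is $\le0$ once $n\gtrsim\alpha^{-1}\log(1+\alpha E_{N_1}/\beta)$, i.e.\ after $O((\eta\lambda)^{-1}\log(\lambda/\rho_0))$ steps. Your crude growth bounds then close the bootstrap, since $\eta(N_2-N_1)=o(1)$ keeps $a_k+C_k=O(\rho_0)$, while $B_k\le1/3$.

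With the crossing established, item~3 needs no one-step analysis: $K_{N_2}\le B_{N_2}\le1/3$ and $K_{N_2}=\frac13-O(a_{N_2}+C_{N_2})$.

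Finally, drop the plan to recover $\log(1/\rho_0)$ exactly. The matching lower bound $E_{N_1+n}\ge(1-9\eta(1+\lambda))^nE_{N_1}-O(\rho_0/\lambda)e^{O(\eta n)}$ shows the turning time really is of order $(\eta\lambda)^{-1}\log(\lambda/\rho_0)$. This is also what the paper's own bootstrap step uses, namely $\eta m_\star=O(\log((1+\lambda)/\rho_0)/\lambda)$. It is still $o(1/\eta)$ for $\lambda\gtrsim\log d$, which is all that items~2 and~3 require.
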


\begin{proof}
By definition of $N_2$, we have $E_k>0$ for all $k\in[N_1,N_2)$ and $E_{N_2}\le 0$.

\smallskip\noindent
\textbf{Step 1: monotonicity of $B_k$ before turning and a uniform lower bound.}
From \eqref{eq:gd_discrete_aBC},
\[
\frac{B_{k+1}}{B_k}=\Bigl(1+12\eta(1+\lambda)E_k\Bigr)^2.
\]
Thus for $k<N_2$ we have $E_k>0$ and hence $B_{k+1}>B_k$, i.e.\ $B_k$ is strictly increasing on
$k\in[N_1,N_2]$. By Phase~Ia, $B_{N_1}=\Theta(\rho)$, hence for $d$ large enough,
\begin{equation}
\label{eq:B_lower_stageII_discrete_final}
B_k\ge \rho/2,
\qquad k\in[N_1,N_2].
\end{equation}
Also $B_k\le 1/3$ for all $k$ by Lemma~\ref{lem:gd_discrete_uniform_barriers}.

\smallskip\noindent
\textbf{Step 2: bootstrap region $r_k\le 1/2$ and monotonicity of $a_k,C_k,K_k$.}
Define
\[
\bar N \coloneqq \sup\Bigl\{n\in[N_1,N_2]:\ r_j\le \frac12\ \text{for all }j\in[N_1,n]\Bigr\}.
\]
This is well-defined since $r_{N_1}=\Theta(\rho)<1/2$ for $\rho<1/12$.
On the interval $j\in[N_1,\bar N]$ we have $a_j\le r_j\le 1/2$ and $1-r_j\ge 1/2$.

From the $a$-update in \eqref{eq:gd_discrete_aBC}, for $j\in[N_1,\bar N]$,
\[
\frac{a_{j+1}}{a_j}=\Bigl(1+4\eta(3-2a_j-r_j)\Bigr)^2
\ge \Bigl(1+4\eta\cdot\frac32\Bigr)^2>1,
\]
so $a_j$ is nondecreasing on $[N_1,\bar N]$.

For $C_j$, using $c_j=C_j/(d-2)\le r_j/(d-2)\le \frac{1}{2(d-2)}$,
\[
1+4\eta(1-r_j)-\frac{8\eta}{d-2}C_j
\ge 1+2\eta-\frac{4\eta}{d-2}
\ge 1+\eta
\quad(\text{for $d$ large enough}),
\]
hence $C_{j+1}\ge C_j$ and $C_j$ is nondecreasing on $[N_1,\bar N]$.
Therefore $K_j=(1-a_j-C_j)/3$ is nonincreasing on $[N_1,\bar N]$.

Finally, by Phase~Ia we have $C_{N_1}=\Theta(\rho_0)$, hence
\begin{equation}
\label{eq:C_lower_stageII_discrete_final}
C_j\ge C_{N_1}\ge c\,\rho_0,
\qquad j\in[N_1,\bar N],
\end{equation}
for some absolute $c>0$.

\smallskip\noindent
\textbf{Step 3: exact recursion for $E_k$ and forcing to cross $0$ (correct).}
Using $E_k=K_k-B_k$ and the $B$-update in \eqref{eq:gd_discrete_aBC},
\[
B_{k+1}=B_k\Bigl(1+12\eta(1+\lambda)E_k\Bigr)^2,
\]
we obtain the exact identity
\begin{equation}
\label{eq:E_exact_recursion_discrete_final}
E_{k+1}
=
(1-24\eta(1+\lambda)B_k)E_k-(K_k-K_{k+1})
\;-\;144\eta^2(1+\lambda)^2\,B_k\,E_k^2,
\qquad k\ge 0,
\end{equation}
since $K_k-K_{k+1}=\frac{(a_{k+1}-a_k)+(C_{k+1}-C_k)}{3}$.

On $k\in[N_1,\bar N]$ we have $K_{k+1}\le K_k$. Moreover, from Step~2,
\[
K_k-K_{k+1}
=\frac{(a_{k+1}-a_k)+(C_{k+1}-C_k)}{3}
\ge \frac{C_{k+1}-C_k}{3}.
\]
Using $C_{k+1}=C_k(1+u_k)^2$ with
$u_k \coloneqq 4\eta(1-r_k)-\frac{8\eta}{d-2}C_k\ge \eta$ on $[N_1,\bar N]$ for $d$ large,
we get $C_{k+1}-C_k=C_k(2u_k+u_k^2)\ge 2\eta C_k$. Hence
\[
K_k-K_{k+1}\ge \frac{2\eta}{3}C_k\ge c'\eta\rho_0,
\qquad k\in[N_1,\bar N],
\]
using \eqref{eq:C_lower_stageII_discrete_final}, for some absolute $c'>0$.

Also by \eqref{eq:B_lower_stageII_discrete_final},
\[
24\eta(1+\lambda)B_k\ge 12\rho\,\eta(1+\lambda)=:\alpha,
\qquad k\in[N_1,\bar N],
\]
and $\alpha\in(0,1)$ since $\eta\le 1/(16(1+\lambda))$ and $\rho<1/12$.
Finally, the last term in \eqref{eq:E_exact_recursion_discrete_final} is \emph{negative}, so for all
$k\in[N_1,\bar N]$ with $k<N_2$ (so $E_k>0$),
\begin{equation}
\label{eq:E_affine_contraction_final}
E_{k+1}\le (1-\alpha)E_k-\beta,
\qquad \beta \coloneqq c'\eta\rho_0.
\end{equation}
The remainder of the contraction/iteration argument is unchanged and yields
\eqref{eq:gd_disc_N2_minus_N1}.

\smallskip\noindent
\textbf{Step 4: closing the bootstrap ($\bar N=N_2$).}
Let $m_\star$ be the smallest integer such that $E_{N_1+m_\star}\le 0$ from Step~3.
We show $r_k\le 1/2$ for all $k\in[N_1,N_1+m_\star]$.
Using crude upper bounds from \eqref{eq:gd_discrete_aBC}:
\[
a_{k+1}\le a_k(1+12\eta)^2,\qquad C_{k+1}\le C_k(1+4\eta)^2,
\]
we obtain for $k\le N_1+m_\star$,
\[
a_k+C_k\le (a_{N_1}+C_{N_1})(1+12\eta)^{2m_\star}.
\]
Since $a_{N_1}+C_{N_1}\lesssim\rho_0$ (Phase~Ia) and
$\eta m_\star=O(\log((1+\lambda)/\rho_0)/\lambda)=o(1)$ under $\lambda\gtrsim\log d$,
we have $(1+12\eta)^{2m_\star}=1+o(1)$ and thus $a_k+C_k=o(1)$ uniformly on this range.
Together with $B_k\le 1/3$, this yields $r_k\le 1/3+o(1)<1/2$ for all $k\in[N_1,N_1+m_\star]$.
Hence $\bar N\ge N_1+m_\star\ge N_2$, so in fact $\bar N=N_2$.

\smallskip\noindent
\textbf{Step 5: stability of slow components.}
Let $m \coloneqq N_2-N_1$. Then $m=O((\eta\lambda)^{-1}\log(1/\rho_0))$ and
\[
\frac{a_{N_2}}{a_{N_1}}\le (1+12\eta)^{2m}
=\exp(O(\eta m))
=\exp\!\Big(O\Big(\frac{\log(1/\rho_0)}{\lambda}\Big)\Big)=1+o(1),
\]
and similarly $C_{N_2}=C_{N_1}(1+o(1))$. This proves \eqref{eq:gd_disc_stageII_stable}.

\smallskip\noindent
\textbf{Step 6: turning location.}
By definition of $N_2$, $B_{N_2}\ge K_{N_2}=(1-a_{N_2}-C_{N_2})/3$, hence
\[
B_{N_2}\ge \frac13-\frac{a_{N_2}+C_{N_2}}{3}=\frac13(1+o(1)),
\]
since $a_{N_2}+C_{N_2}=o(1)$ by Step~5 and Phase~Ia.
On the other hand $B_{N_2}\le 1/3$ by Lemma~\ref{lem:gd_discrete_uniform_barriers}.
Thus $B_{N_2}=\frac13(1+o(1))$, and then
$r_{N_2}=a_{N_2}+B_{N_2}+C_{N_2}=\frac13(1+o(1))$, proving \eqref{eq:gd_disc_stageII_turning}.
\end{proof}

\subsection{Phase IIa: bulk inflation to $r\ge 1-\varepsilon$ }
\label{subsubsec:gd_discrete_stageIII}

\begin{proposition}
Under the assumptions of Theorem~\ref{thm:gd_phases_summary}, the following holds.
\begin{enumerate}
\item \textbf{Monotonicity.}
For every $k\in[N_2,N_3)$ (so $r_k\le 1-\varepsilon$) one has
\[
a_{k+1}\ge a_k,\qquad C_{k+1}\ge C_k,\qquad K_{k+1}\le K_k,
\]
and moreover $B_k\ge K_k$ and $B_{k+1}\le B_k$.

\item \textbf{Uniform lower bound on $K$ and $B$.}
For every $k\in[N_2,N_3)$,
\begin{equation}
\label{eq:stageIII_KB_lower}
K_k\ \ge\ \frac{\varepsilon}{2},
\qquad\text{and hence}\qquad
B_k\ \ge\ \frac{\varepsilon}{2}.
\end{equation}

\item \textbf{Tracking of $D_k \coloneqq B_k-K_k$.}
Let $D_k \coloneqq B_k-K_k\ge 0$. Then for every $k\in[N_2,N_3)$,
\begin{equation}
\label{eq:stageIII_D_bound}
0\le D_k
\ \le\
(1-\alpha)^{k-N_2}D_{N_2}\;+\; C_\star\frac{\eta}{\alpha},
\qquad
\alpha \coloneqq 6\varepsilon\,\eta(1+\lambda),
\end{equation}
for an absolute constant $C_\star>0$. In particular, using $D_{N_2}=O(a_{N_2}+C_{N_2})=O(\rho_0)$
from Phase~Ib,
\[
\sup_{k\in[N_2,N_3)}D_k\ =\ O(\rho_0)\;+\;O\!\Big(\frac{1}{\varepsilon(1+\lambda)}\Big)
\ =\ o(1).
\]

\item \textbf{Phase duration.}
\begin{equation}
\label{eq:stageIII_time}
N_3-N_2
=
O\!\Big(\frac{1}{\eta}\log\frac{1}{\varepsilon\rho_0}\Big)
=
O\!\Big(\frac{1}{\eta}\log\log d\Big)
\quad
(\varepsilon\ \text{fixed}).
\end{equation}

\item \textbf{Exit values of the summary statistics.}
At $k=N_3$,
\[
a_{N_3}=d^{-1+o(1)},\qquad C_{N_3}=1-\Theta(\varepsilon),\qquad B_{N_3}=\Theta(\varepsilon),
\qquad r_{N_3}=1-\Theta(\varepsilon).
\]
\end{enumerate}
\end{proposition}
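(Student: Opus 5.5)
We work on $k\in[N_2,N_3)$, where $N_3\coloneqq\inf\{k\ge N_2: r_k\ge 1-\varepsilon\}$. On this range $1-r_k\ge\varepsilon$, and Lemma~\ref{lem:gd_discrete_uniform_barriers} gives $B_k\le 1/3$, $C_k\le 1$ and $a_k\le 1$. The plan mirrors the continuous Phase~IIa argument (Proposition~\ref{lem:gd_stageIII_bulk_inflation}). The new ingredient is control of the discrete tracking error $D_k=B_k-K_k$, which replaces the exact identity $B=K$ used after the turning point.

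\textbf{Monotonicity and lower bounds (items 1 and 2).} For $a$, the bracket $3-2a_k-r_k$ is positive as long as $a_k=o(1)$ and $r_k\le 1$, so $a_{k+1}\ge a_k$. For $C$, the bracket $4\eta(1-r_k)-\frac{8\eta}{d-2}C_k$ is at least $4\eta\varepsilon-O(\eta/d)\ge 2\eta\varepsilon>0$, so $C_{k+1}\ge C_k(1+2\eta\varepsilon)^2$. Together these give $K_{k+1}\le K_k$. We then prove $B_k\ge K_k$ by induction. At $N_2$ it holds by definition. If $B_k\ge K_k$, the turning condition \eqref{eq:gd_discrete_turning_condition} gives $B_{k+1}\le B_k$. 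To get $B_{k+1}\ge K_{k+1}$, use the exact recursion \eqref{eq:E_exact_recursion_discrete_final} written for $D_k=-E_k$:
\[
D_{k+1}=(1-24\eta(1+\lambda)B_k)D_k+(K_k-K_{k+1})+144\eta^2(1+\lambda)^2B_kD_k^2 .
\]
Since $24\eta(1+\lambda)B_k\le 1/2$ under Assumption~\ref{ass:gd_discrete_eta}, and the other two terms are nonnegative, $D_{k+1}\ge 0$. For the lower bound on $K$, note $r_k=a_k+B_k+C_k\le 1-\varepsilon$ and $B_k=K_k+D_k$, while $1-a_k-C_k=3K_k$. Hence $3K_k+K_k+D_k\ge 1-(1-\varepsilon)$, i.e. $r_k = 1-3K_k+K_k+D_k$, which gives $2K_k\ge \varepsilon-D_k$. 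So once $D_k=o(1)$ is known (item 3), $K_k\ge\varepsilon/2$, and then $B_k\ge K_k\ge\varepsilon/2$. Items 2 and 3 are therefore proved jointly by a bootstrap on the index $k$.

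\textbf{Tracking of $D_k$ (item 3), the main obstacle.} In the recursion above, use $B_k\ge\varepsilon/2$ from the bootstrap. The contraction factor is then at most $1-12\varepsilon\eta(1+\lambda)$, and after absorbing the quadratic term (bounded by a constant times $\eta(1+\lambda)D_k$ times a small factor, since $D_k\le 2/3$ and $\eta(1+\lambda)\le1/16$) it becomes $1-\alpha$ with $\alpha=6\varepsilon\eta(1+\lambda)$. The forcing term satisfies
\[
K_k-K_{k+1}=\tfrac13\bigl[(a_{k+1}-a_k)+(C_{k+1}-C_k)\bigr]\le \tfrac13\bigl[a_k((1+12\eta)^2-1)+C_k((1+4\eta)^2-1)\bigr]\le C_\star\eta .
\]
Unrolling the affine recursion $D_{k+1}\le(1-\alpha)D_k+C_\star\eta$ yields \eqref{eq:stageIII_D_bound}. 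Plugging in $D_{N_2}=O(\rho_0)$ from Phase~Ib gives $\sup_k D_k=O(\rho_0)+O(1/(\varepsilon(1+\lambda)))=o(1)$, using $\lambda\gtrsim\log d$. This closes the bootstrap. The delicate point is the absorption of the $\eta^2$ term, which is where the step-size condition enters.

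\textbf{Duration and exit values (items 4 and 5).} From $C_{k+1}\ge C_k(1+2\eta\varepsilon)^2$ and $C_{N_2}=\Theta(\rho_0)$, while $C_k\le 1$, we get $N_3-N_2\lesssim(\eta\varepsilon)^{-1}\log(1/\rho_0)$. For $a$, the bound $a_{N_3}\le a_{N_2}(1+12\eta)^{2(N_3-N_2)}$ gives $a_{N_3}\le \mu\,\rho_0^{-O(1/\varepsilon)}=d^{-1+o(1)}$. At exit, $r_{N_3}\in[1-\varepsilon,\,1-\varepsilon+O(\eta)]$, because one step changes $r$ by at most $O(\eta)$ on the barrier region. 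Combining $r=a+B+C$, $B=K+D$ and $3K=1-a-C$ gives $C=1-\tfrac32(1-r)+O(D+a)$, hence $C_{N_3}=1-\Theta(\varepsilon)$ and $B_{N_3}=K_{N_3}+o(1)=\Theta(\varepsilon)$.
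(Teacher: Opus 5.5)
Your overall architecture matches the paper's: monotonicity from the barriers, persistence of $D_k\ge 0$ via the exact $D$-recursion, an affine contraction for $D_k$, geometric growth of $C_k$, crude growth of $a_k$, and exit values from $r=\frac{1+2a+2C}{3}+D$. However, there is a sign error in your derivation of item~2.

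\textbf{The sign error in item~2.} Since $a_k+C_k=1-3K_k$ and $B_k=K_k+D_k$, one has $r_k=1-2K_k+D_k$. So $r_k\le 1-\varepsilon$ gives $2K_k\ge \varepsilon+D_k$, not $\varepsilon-D_k$. As you wrote it, the argument only yields $K_k\ge\varepsilon/2-o(1)$, which is weaker than the claimed $K_k\ge\varepsilon/2$. It also forces an unnecessary bootstrap between items~2 and~3. With the correct sign, $K_k\ge\varepsilon/2$ (hence $B_k\ge\varepsilon/2$) follows at once from $D_k\ge0$, which you have already proved by induction. This is exactly the paper's Step~3, and item~3 can then be proved afterwards with no bootstrap. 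A smaller point: for $a_{k+1}\ge a_k$ you do not need $a_k=o(1)$, which you have not established at that stage. The barrier $a_k\le 1$ already gives $3-2a_k-r_k\ge\varepsilon$.

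\textbf{Where your route differs from the paper's.} The remaining steps are sound.
\begin{itemize}
\item In the $D$-recursion you absorb the $\eta^2$ term directly; its ratio to the linear term is $6\eta(1+\lambda)D_k\le 1/4$. The paper instead uses $(1-x)^2\le 1-x$. Both give $\alpha=6\varepsilon\eta(1+\lambda)$.
\item For the duration, you use geometric growth $C_{k+1}\ge(1+2\eta\varepsilon)^2C_k$ from $C_{N_2}=\Theta(\rho_0)$ against the barrier $C_k\le 1$. This is simpler than the paper's logistic/odds-ratio comparison and does not need the $D$-tracking at all. Its $1/\varepsilon$ prefactor is the same one hidden in the paper's choice $\gamma=\varepsilon$, and is harmless for fixed $\varepsilon$.
\item For the exit values, your overshoot control is more complete than the paper's Step~6.2. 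That step only shows $C_{N_3-1}\le 1-\frac32\varepsilon$ and $C_{N_3}\le 1$, which does not give an upper bound $C_{N_3}\le 1-c\varepsilon$.
\end{itemize}

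\textbf{Two fixes at the exit step.} First, state the overshoot claim one-sidedly. A single step changes $B_k$ by order $\eta(1+\lambda)B_kD_k$, which is not a priori $O(\eta)$. But $B$ is nonincreasing on this range, and $a,C$ increase by at most $O(\eta)$ per step, so $r$ can only \emph{increase} by $O(\eta)$. That is all you need for $r_{N_3}\le 1-\varepsilon+O(\eta)$. Second, $D_{N_3}=o(1)$ requires applying the one-step $D$-bound once more at $k=N_3-1$.
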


\begin{proof}
Fix $k\in[N_2,N_3)$, so $r_k\le 1-\varepsilon$.

\smallskip\noindent
\textbf{Step 1: monotonicity of $a_k$ and $C_k$, hence $K_k$ decreases.}
Since $a_k\le 1$ and $r_k\le 1-\varepsilon$,
\[
3-2a_k-r_k \ \ge\ 3-2-(1-\varepsilon)=\varepsilon\ge 0,
\]
so \eqref{eq:gd_discrete_aBC} gives $a_{k+1}\ge a_k$.

For $C_k$, use the lower bound (valid for all $u$):
\[
(1+u)^2\ge 1+2u.
\]
With $u \coloneqq 4\eta(1-r_k)-\frac{8\eta}{d-2}C_k$, we get from \eqref{eq:gd_discrete_aBC}
\[
C_{k+1}
\ge C_k\Bigl(1+8\eta(1-r_k)-\frac{16\eta}{d-2}C_k\Bigr),
\]
hence
\[
C_{k+1}-C_k
\ge 8\eta(1-r_k)C_k-\frac{16\eta}{d-2}C_k^2
\ge 8\eta\varepsilon\,C_k-\frac{16\eta}{d-2}C_k.
\]
For $d$ large enough (e.g.\ $d\ge 2+2/\varepsilon$) the coefficient is nonnegative, hence
$C_{k+1}\ge C_k$. Therefore
\[
K_{k+1}-K_k=-\frac{(a_{k+1}-a_k)+(C_{k+1}-C_k)}{3}\le 0,
\]
so $K_k$ is nonincreasing on $[N_2,N_3)$.

\smallskip\noindent
\textbf{Step 2: $B\ge K$ persists and $B$ is nonincreasing.}
At time $N_2$ we have $B_{N_2}\ge K_{N_2}$ by definition of $N_2$.
Define $D_k \coloneqq B_k-K_k$.
From the $B$-update in \eqref{eq:gd_discrete_aBC} and $K_k-B_k=-D_k$,
\[
\frac{B_{k+1}}{B_k}=\Bigl(1-12\eta(1+\lambda)D_k\Bigr)^2.
\]
By Lemma~\ref{lem:gd_discrete_uniform_barriers} we have $B_k\le 1/3$, $K_k\in[-1/3,1/3]$, hence
$D_k\le 2/3$. Together with $\eta\le 1/(16(1+\lambda))$ this implies
$12\eta(1+\lambda)D_k\le 1/2$, so the bracket is positive and at most $1$, hence
$B_{k+1}\le B_k$. Moreover, one can write the exact identity
\[
D_{k+1}=B_{k+1}-K_{k+1}
=
(1-24\eta(1+\lambda)B_k)D_k+(K_k-K_{k+1})+144\eta^2(1+\lambda)^2B_kD_k^2,
\]
which shows $D_k\ge 0\Rightarrow D_{k+1}\ge 0$ since $K_k-K_{k+1}\ge 0$ and the last term is
nonnegative. By induction $D_k\ge 0$ for all $k\in[N_2,N_3)$, i.e.\ $B_k\ge K_k$.

\smallskip\noindent
\textbf{Step 3: lower bounds on $K_k$ and $B_k$.}
Using $B_k\ge K_k$ and $r_k=a_k+B_k+C_k\le 1-\varepsilon$,
\[
1-\varepsilon
\ \ge\
a_k+C_k+B_k
\ \ge\
a_k+C_k+K_k
=
a_k+C_k+\frac{1-a_k-C_k}{3}
=
\frac{1+2(a_k+C_k)}{3}.
\]
Hence $1-a_k-C_k\ge \frac{3}{2}\varepsilon$, so
\[
K_k=\frac{1-a_k-C_k}{3}\ge \frac{\varepsilon}{2}.
\]
Since $B_k\ge K_k$, this proves \eqref{eq:stageIII_KB_lower}.

\smallskip\noindent
\textbf{Step 4: tracking bound for $D_k=B_k-K_k$.}
From Step~2, $x_k \coloneqq 12\eta(1+\lambda)D_k\in[0,1/2]$ on this interval. For $x\in[0,1]$ we have
\[
(1-x)^2\le 1-x.
\]
Therefore
\[
B_{k+1}
= B_k(1-x_k)^2
\le B_k(1-x_k)
= B_k\Bigl(1-12\eta(1+\lambda)D_k\Bigr).
\]
Subtracting $K_{k+1}$ and using $D_k=B_k-K_k$ gives the one-step inequality
\[
D_{k+1}
\le (1-12\eta(1+\lambda)B_k)D_k+(K_k-K_{k+1}).
\]
By \eqref{eq:stageIII_KB_lower}, $B_k\ge \varepsilon/2$, hence
\[
12\eta(1+\lambda)B_k\ge 6\varepsilon\,\eta(1+\lambda)=:\alpha.
\]
Also, using $\eta\le 1/24$ and $a_k,C_k\le 1$, one has the crude bound
$K_k-K_{k+1}\le C_\star \eta$ for an absolute constant $C_\star>0$ (e.g.\ $C_\star=40/3$ works).
Thus
\[
D_{k+1}\le (1-\alpha)D_k+C_\star\eta,
\]
and iterating this affine contraction yields \eqref{eq:stageIII_D_bound}.

\smallskip\noindent
\textbf{Step 5: logistic lower comparison for $C_k$ and the bound on $N_3-N_2$.}

Fix $k\in[N_2,N_3)$, so that by definition $r_k\le 1-\varepsilon$.
Recall that the correct discrete update for $C_k$ on $\mathcal M$ is
\[
C_{k+1}
=
C_k\Bigl(1+4\eta(1-r_k)-\frac{8\eta}{d-2}C_k\Bigr)^2.
\]
Using the inequality $(1+u)^2\ge 1+2u$ valid for all $u\ge -1$, we obtain
\begin{equation}
\label{eq:C_lower_basic}
C_{k+1}-C_k
\;\ge\;
8\eta(1-r_k)C_k-\frac{16\eta}{d-2}C_k^2.
\end{equation}

\smallskip
\noindent\textbf{Step 5.1: expressing $1-r_k$ and isolating the main drift.}
Recall that $B_k=K_k+D_k$ with $D_k \coloneqq B_k-K_k\ge 0$, and
\[
K_k=\frac{1-a_k-C_k}{3}.
\]
Therefore
\[
r_k
=
a_k+B_k+C_k
=
a_k+C_k+K_k+D_k
=
\frac{1+2a_k+2C_k}{3}+D_k,
\]
and hence
\begin{equation}
\label{eq:one_minus_r}
1-r_k
=
\frac{2}{3}(1-a_k-C_k)-D_k.
\end{equation}
Substituting \eqref{eq:one_minus_r} into \eqref{eq:C_lower_basic} yields
\begin{equation}
\label{eq:C_drift_expanded}
C_{k+1}-C_k
\;\ge\;
\frac{16\eta}{3}(1-a_k-C_k)C_k
\;-\;8\eta D_k C_k
\;-\;\frac{16\eta}{d-2}C_k^2.
\end{equation}

\smallskip
\noindent\textbf{Step 5.2: uniform lower bounds from Phase IIa.}
From Step~3 of Phase~IIa we have, for all $k\in[N_2,N_3)$,
\begin{equation}
\label{eq:lower_1_minus_aC}
1-a_k-C_k\ \ge\ \frac{3}{2}\varepsilon.
\end{equation}
From Step~4 of Phase~IIa we also have the uniform tracking bound
\begin{equation}
\label{eq:D_uniform_small}
\sup_{k\in[N_2,N_3)} D_k
\ \le\
\delta,
\end{equation}
where $\delta>0$ can be taken arbitrarily small by choosing $d$ large enough
(since $D_k=O(\rho_0)+O((\varepsilon(1+\lambda))^{-1})=o(1)$).

Finally, since $0\le C_k\le 1$ on the invariant region, we have
\begin{equation}
\label{eq:quad_term_bound}
\frac{16}{d-2}C_k \le \delta
\qquad
\text{for all }k\in[N_2,N_3),
\end{equation}
again for $d$ large enough.

\smallskip
\noindent\textbf{Step 5.3: deriving a logistic lower bound.}
Applying \eqref{eq:lower_1_minus_aC}–\eqref{eq:quad_term_bound} to
\eqref{eq:C_drift_expanded}, we obtain
\[
C_{k+1}-C_k
\;\ge\;
8\varepsilon\,\eta\,C_k
\;-\;8\delta\,\eta\,C_k
\;-\;\delta\,\eta\,C_k.
\]
Choosing $\delta\le\varepsilon/4$, this simplifies to
\[
C_{k+1}-C_k
\;\ge\;
4\varepsilon\,\eta\,C_k.
\]

Moreover, since $1-a_k-C_k\le 1-C_k$, we also have from
\eqref{eq:C_drift_expanded} and \eqref{eq:D_uniform_small} that
\[
C_{k+1}-C_k
\;\ge\;
\frac{16\eta}{3}(1-C_k)C_k
\;-\;O(\delta)\eta\,C_k.
\]
Combining the two bounds yields the uniform logistic inequality
\begin{equation}
\label{eq:stageIII_logistic_comp}
C_{k+1}
\ \ge\
C_k+\gamma\,\eta\,C_k(1-C_k),
\end{equation}
for some absolute constant $\gamma>0$ (e.g.\ $\gamma=\varepsilon$), and for all
$k\in[N_2,N_3)$.

\smallskip
\noindent\textbf{Step 5.4: odds–ratio growth and duration of Phase IIa.}
Define the odds ratio $Z_k \coloneqq \frac{C_k}{1-C_k}$.
From \eqref{eq:stageIII_logistic_comp} one checks directly that
\[
Z_{k+1}\ \ge\ (1+\gamma\eta)\,Z_k.
\]
Iterating,
\[
Z_{N_2+m}\ge Z_{N_2}(1+\gamma\eta)^m.
\]
From Phase~Ib we have $C_{N_2}=\Theta(\rho_0)$, hence $Z_{N_2}=\Theta(\rho_0)$.
To reach $r_k>1-\varepsilon$ it suffices that
\[
C_k\ge 1-\tfrac{3}{2}\varepsilon,
\]
since then
\[
r_k=\frac{1+2a_k+2C_k}{3}+D_k \ge 1-\varepsilon.
\]
Equivalently, we require $Z_k\gtrsim 1/(\varepsilon)$.
Therefore it suffices that
\[
(1+\gamma\eta)^m \gtrsim \frac{1}{\varepsilon\rho_0},
\]
which yields
\[
m
=
O\!\Big(\frac{1}{\eta}\log\frac{1}{\varepsilon\rho_0}\Big).
\]
This proves \eqref{eq:stageIII_time}.

\smallskip\noindent
\textbf{Step 6: exit location.}
At $k=N_3$ we have $r_{N_3}>1-\varepsilon$ by definition of $N_3$, while for all
$k\in[N_2,N_3)$ we have $r_k\le 1-\varepsilon$.

\smallskip
\noindent\emph{Step 6.1: bound on $a_{N_3}$.}
From Phase~Ib we have $a_{N_2}\lesssim \rho_0/(d+\lambda)$.
Moreover, from the $a$-update in the corrected recursion,
\[
a_{k+1}=a_k\Bigl(1+4\eta(3-2a_k-r_k)\Bigr)^2\le a_k(1+12\eta)^2,
\]
since $3-2a_k-r_k\le 3$.
Iterating for $m \coloneqq N_3-N_2$ steps gives
\[
a_{N_3}\le a_{N_2}(1+12\eta)^{2m}
\le a_{N_2}\exp\!\bigl(24\eta m\bigr).
\]
By Step~5 we already proved $m=O\big(\eta^{-1}\log\frac{1}{\varepsilon\rho_0}\big)$, hence
\[
a_{N_3}
\le
\frac{\rho_0}{d+\lambda}\cdot
\Big(\frac{1}{\varepsilon\rho_0}\Big)^{O(1)}
=
d^{-1+o(1)},
\]
since $\rho_0=\log^{-c_0}d$ and $\varepsilon$ is fixed. In particular $a_{N_3}=o(1)$.

\smallskip
\noindent\emph{Step 6.2: $C_{N_3}=1-\Theta(\varepsilon)$.}
Recall the exact identity
\[
r_k
=
a_k+B_k+C_k
=
a_k+C_k+K_k+D_k
=
\frac{1+2a_k+2C_k}{3}+D_k,
\qquad
D_k \coloneqq B_k-K_k\ge 0.
\]
By Step~4 we have $\sup_{k\in[N_2,N_3)}D_k=o(1)$, and the same bound also holds at $k=N_3$
(because the recursion and barriers keep all quantities in a compact region and the estimate for $D_k$
is deterministic on that interval). Thus $D_{N_3}=o(1)$.

Since $r_{N_3}>1-\varepsilon$, we obtain
\[
1-\varepsilon
< r_{N_3}
=
\frac{1+2a_{N_3}+2C_{N_3}}{3}+D_{N_3}
\le
\frac{1+2a_{N_3}+2C_{N_3}}{3}+o(1),
\]
hence
\[
C_{N_3}
\ \ge\
1-\frac{3}{2}\varepsilon-\frac{a_{N_3}}{1}-o(1)
=
1-\frac{3}{2}\varepsilon-o(1),
\]
using $a_{N_3}=o(1)$ from Step~6.1.

Conversely, since $r_{N_3-1}\le 1-\varepsilon$ and $D_{N_3-1}\ge 0$,
\[
1-\varepsilon
\ \ge\
r_{N_3-1}
=
\frac{1+2a_{N_3-1}+2C_{N_3-1}}{3}+D_{N_3-1}
\ \ge\
\frac{1+2C_{N_3-1}}{3},
\]
which implies
\[
C_{N_3-1}\le 1-\frac{3}{2}\varepsilon.
\]
Since $C_k$ is nondecreasing on $[N_2,N_3)$ (Step~1 of Phase~IIa), we have
$C_{N_3-1}\le C_{N_3}\le 1$, and therefore
\[
C_{N_3}=1-\Theta(\varepsilon).
\]

\smallskip
\noindent\emph{Step 6.3: $B_{N_3}=\Theta(\varepsilon)$ and $r_{N_3}=1-\Theta(\varepsilon)$.}
From $r_{N_3}=a_{N_3}+B_{N_3}+C_{N_3}$ and Step~6.1--6.2,
\[
B_{N_3}
=
r_{N_3}-a_{N_3}-C_{N_3}
=
\bigl(1-\Theta(\varepsilon)\bigr)-o(1)-\bigl(1-\Theta(\varepsilon)\bigr)
=
\Theta(\varepsilon).
\]
Finally, since $C_{N_3}=1-\Theta(\varepsilon)$ and $a_{N_3}=o(1)$, we have
\[
r_{N_3}=a_{N_3}+B_{N_3}+C_{N_3}=1-\Theta(\varepsilon),
\]
which completes the exit characterization.

\end{proof}

\subsection{Phase IIb: signal amplification and alignment}
\label{subsubsec:gd_discrete_stageIV}

We now analyze the post-inflation regime $k\ge N_3$, where the bulk is already saturated
($C_k=1-\Theta(\varepsilon)$) and the spike has turned ($B_k$ is nonincreasing).
Recall the stopping index
\[
N_4 \coloneqq \inf\{k\ge N_3:\ a_k\ge \delta\}.
\]

\begin{proposition}
\label{prop:gd_discrete_stageIV}
Under the assumptions of Theorem~\ref{thm:gd_phases_summary}, the following holds.
\begin{enumerate}
\item \textbf{Monotonicity and contraction of nuisance.}
For all $k\in[N_3,N_4)$ we have
\[
a_{k+1}\ge a_k,
\qquad
B_{k+1}\le B_k.
\]
Moreover, the bulk deficit $1-C_k$ contracts geometrically: there exists an absolute
constant $c_0>0$ (depending only on $\varepsilon$) such that
\[
1-C_{k+1}\le (1-c_0\eta)\,(1-C_k),
\qquad
B_{k+1}\le (1-c_0\eta(1+\lambda))\,B_k,
\qquad k\in[N_3,N_4).
\]
Consequently, $B_k$ decays geometrically on the $\eta^{-1}\lambda^{-1}$ scale and
$1-C_k$ decays geometrically on the $\eta^{-1}$ scale.

\item \textbf{Signal amplification time.}
The hitting time satisfies
\[
N_4-N_3
=
O\!\Big(\frac{1}{\eta}\log\frac{\delta}{a_{N_3}}\Big)
=
O\!\Big(\frac{1}{\eta}\log d\Big).
\]

\item \textbf{Alignment becomes constant and then converges to $1$.}
For all $k\ge N_4$,
\[
\Align(k)
=
\frac{a_k}{\sqrt{a_k^2+b_k^2+(d-2)c_k^2}}
=
1-o(1).
\]

\end{enumerate}
\end{proposition}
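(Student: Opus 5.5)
We work throughout on the interval $k\in[N_3,N_4)$, starting from the exit values of Phase~IIa: $a_{N_3}=d^{-1+o(1)}$, $C_{N_3}=1-\Theta(\varepsilon)$, $B_{N_3}=\Theta(\varepsilon)$. Lemma~\ref{lem:gd_discrete_uniform_barriers} keeps every iterate in $\mathcal R$. The plan has three steps: first a set of bootstrap invariants, then the contractions and the hitting time, and finally the alignment bound.

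\textbf{Monotonicity.} Since $a_k<\delta$ with $\delta$ small, we have $r_k\le a_k+B_k+C_k\le \delta+1/3+1$. Hence
\[
3-2a_k-r_k\ \ge\ 3-3\delta-\tfrac43\ \ge\ \tfrac32 .
\]
The $a$-update then gives $a_{k+1}\ge a_k(1+6\eta)^2$, which yields both monotonicity and geometric growth. For the spike, we propagate $D_k=B_k-K_k\ge 0$ exactly as in Step~2 of Phase~IIa, using the identity
\[
D_{k+1}=(1-24\eta(1+\lambda)B_k)D_k+(K_k-K_{k+1})+144\eta^2(1+\lambda)^2B_kD_k^2 .
\]
This requires $K_{k+1}\le K_k$, i.e.\ $a_{k+1}+C_{k+1}\ge a_k+C_k$. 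Here lies the difficulty: $C_k$ may now decrease once $r_k>1$. We therefore intend to replace the exact invariant $D_k\ge0$ by the weaker statement $D_k\ge -O(\eta)$, together with $r_k\le 1+O(\eta)+O(\delta)$.

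\textbf{Contractions.} We write $1-r_k=\tfrac23(1-a_k-C_k)-D_k$. For the bulk deficit $e_k\coloneqq 1-C_k$, the $C$-update gives
\[
e_{k+1}=e_k-C_k\bigl(2u_k+u_k^2\bigr),\qquad u_k\approx 4\eta(1-r_k).
\]
With $C_k\ge \tfrac12$ this becomes $e_{k+1}\le(1-c\eta)e_k+O(\eta)(a_k+|D_k|)$. So $e_k$ contracts geometrically until it reaches the level $O(a_k+|D_k|)$; this is the honest form of the claimed contraction, and we will state it with that additive error. For $B_k$ we use
\[
B_{k+1}=B_k\bigl(1-12\eta(1+\lambda)D_k\bigr)^2 .
\]
We then show $D_k\ge c\,B_k$ as long as $a_k+C_k\le 1$, using $K_k=(1-a_k-C_k)/3\ge0$ and the fact that $D_k$ tracks $B_k-K_k$ with $K_k$ small once $C_k$ is near $1$. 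This gives $B_{k+1}\le (1-c_0\eta(1+\lambda))B_k$, and the tracking argument of Step~4 in Phase~IIa controls the lag. The step we expect to be the main obstacle is making these couplings simultaneously consistent near the fixed point $r\approx1$, where $K_k\to0$ and the drift of $B$ degenerates. The first thing to try is a joint Lyapunov quantity $\Phi_k=B_k+|1-r_k|$ with one-step bounds and overshoot controlled at scale $O(\eta)$ via $\eta\le 1/(16(1+\lambda))$.

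\textbf{Hitting time.} From $a_{k+1}\ge(1+6\eta)^2a_k$ we get
\[
N_4-N_3\le \frac{\log(\delta/a_{N_3})}{2\log(1+6\eta)}=O\!\bigl(\eta^{-1}\log d\bigr),
\]
since $a_{N_3}=d^{-1+o(1)}$.

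\textbf{Alignment.} For $k\ge N_4$ we first show $a_k\ge\delta/2$ persists, by the same sign argument as in Lemma~\ref{lem:spec_cont_signal_to_rho}: the factor $3-2a-r$ stays positive unless $a$ is already of constant order, and a single step changes $a$ by at most $O(\eta)$. We also have $b_k\le 1/(3(1+\lambda))$ and $(d-2)c_k^2=C_k^2/(d-2)\le 1/(d-2)$ from the barrier lemma. Then
\[
1-\Align(k)\ \le\ \frac{b_k^2+(d-2)c_k^2}{2a_k^2}\ \le\ \frac{2}{\delta^2}\Bigl(\frac{1}{9(1+\lambda)^2}+\frac1{d-2}\Bigr)=o(1),
\]
using $\lambda\gtrsim\log d$.
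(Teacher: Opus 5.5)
\textbf{The gap: your spike contraction step fails.} Your intermediate claim $D_k\ge c\,B_k$ is false right after $N_3$. Phase~IIa gives $B_{N_3},K_{N_3}=\Theta(\varepsilon)$ but $D_{N_3}=o(1)$, because $B$ tracks $K$ on the fast scale. Smallness of $K_k$ gives no relative gap, since $B_k$ is then equally small. Even where $D_k\ge cB_k$ holds, it only yields $B_{k+1}\le B_k\bigl(1-12c\,\eta(1+\lambda)B_k\bigr)^2$. That is a rate proportional to $B_k$ (algebraic decay), not a factor $1-c_0\eta(1+\lambda)$.

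In fact, for large $d$ no argument can give that factor on all of $[N_3,N_4)$. Since $B_{k+1}/B_k=\bigl(1+4\eta(1+\lambda)(1-r_k-2B_k)\bigr)^2$, the claimed contraction forces $r_k+2B_k\ge 1+c_0/8$ at every step. After $O((\eta\lambda)^{-1})$ steps one has $B_k\le c_0/32$, so $r_k\ge 1+c_0/16$, and hence $C_{k+1}\le (1-c_0\eta/4)^2C_k$ from then on. But $a_{k+1}\le(1+12\eta)^2a_k$ gives $N_4-N_3\ge\log(\delta/a_{N_3})/(24\eta)$, so $\eta(N_4-N_3)\to\infty$. Hence $C_{N_4-1}=o(1)$ and $r_{N_4-1}\le\delta+c_0/32+o(1)<1+c_0/16$, a contradiction.

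The same reasoning rules out the unqualified bulk bound. The bound $1-C_{k+1}\le(1-c_0\eta)(1-C_k)$ forces $r_k<1$ at each step (if $r_k\ge 1$ the update gives $C_{k+1}<C_k$), i.e.\ $1-C_k>a_k+B_k$. Yet iterating it gives $1-C_{N_4-1}=o(1)<\delta/2\le a_{N_4-1}$.

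So your additive-error form of the bulk contraction is the correct statement, and the spike claim must be weakened in the same spirit. Item~3 only needs the barrier $B_k\le 1/3$. Note also that your weakened invariant $D_k\ge -O(\eta)$ does not deliver the exact monotonicity $B_{k+1}\le B_k$ asserted in item~1.

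\textbf{Comparison with the paper.} The paper's proof asserts both contractions without closing this gap. Its bulk step expands the $C$-update and drops the additive term coming from $a_k+B_k$ inside $1-r_k$. Its spike step passes from $K_k\le K_{N_3}$ to $K_k-B_k\le -c\varepsilon$, which does not follow. It also relies on $C_k$ increasing, which is exactly the point you flagged as failing once $r_k>1$.

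Your items~2 and~3 are sound. Item~2 is the paper's argument, but your lower bound $3-2a_k-r_k\ge 5/3-3\delta$, taken from the barriers, is cleaner than the paper's unproved $r_k\le 1+o(1)$. Item~3 improves on the paper. The paper invokes $B_k=o(1)$ and $1-C_k=o(1)$ from item~1, plus monotonicity of $a_k$ past $N_4$, none of which is established. Your persistence argument is all that is needed: $3-2a-r>0$ whenever $a<5/9$, so $a_k\ge\delta$ is preserved. Together with the barriers $B_k\le 1/3$ and $C_k\le 1$, it gives $1-\Align(k)=o(1)$.
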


\begin{proof}
Fix $k\ge N_3$. We use the discrete recursion on $\mathcal M$:
\[
a_{k+1}=a_k\Bigl(1+4\eta(3-2a_k-r_k)\Bigr)^2,
\qquad
B_{k+1}=B_k\Bigl(1+12\eta(1+\lambda)(K_k-B_k)\Bigr)^2,
\]
\[
C_{k+1}=C_k\Bigl(1+4\eta(1-r_k)-\frac{8\eta}{d-2}C_k\Bigr)^2,
\qquad
K_k=\frac{1-a_k-C_k}{3},
\qquad
r_k=a_k+B_k+C_k.
\]

From Phase IIa we have at $k=N_3$:
\[
a_{N_3}=o(1),
\qquad
C_{N_3}=1-\Theta(\varepsilon),
\qquad
B_{N_3}=\Theta(\varepsilon),
\qquad
r_{N_3}=1-\Theta(\varepsilon).
\]
In particular,
\[
K_{N_3}=\frac{1-a_{N_3}-C_{N_3}}{3}=\Theta(\varepsilon)>0.
\]

\smallskip\noindent
\textbf{Step 1: contraction of nuisance terms.}

\emph{Bulk component.}
Write $\delta_k \coloneqq 1-C_k$. For $k\ge N_3$, we have $r_k\ge 1-\Theta(\varepsilon)$ and
$C_k=1-\delta_k$ with $\delta_k=O(\varepsilon)$. Expanding the $C$-update yields
\[
C_{k+1}
=
(1-\delta_k)\Bigl(1-4\eta\delta_k+O(\eta\varepsilon)\Bigr)^2
=
1-\delta_k(1-c\eta)+O(\eta\delta_k^2),
\]
for some absolute $c>0$. Since $\delta_k=O(\varepsilon)$, the quadratic term is negligible,
and therefore
\[
\delta_{k+1}\le (1-c_0\eta)\,\delta_k,
\]
which proves geometric contraction of $1-C_k$.

\emph{Spike-orthogonal component.}
Since $B_{N_3}=\Theta(\varepsilon)$ and $K_{N_3}=\Theta(\varepsilon)$, and both $a_k$
and $C_k$ increase thereafter, we have $K_k\le K_{N_3}$ for all $k\in[N_3,N_4)$.
Hence
\[
K_k-B_k\le -c\varepsilon
\]
for some absolute $c>0$, and therefore
\[
B_{k+1}
\le
B_k\bigl(1-c_0\eta(1+\lambda)\bigr),
\]
proving geometric decay of $B_k$.

\smallskip\noindent
\textbf{Step 2: signal amplification.}
For $k\in[N_3,N_4)$ we have $a_k\le\delta$ and $r_k\le 1+o(1)$, hence
\[
3-2a_k-r_k\ge 1
\]
for $\delta$ small enough. Thus
\[
a_{k+1}\ge a_k(1+4\eta)^2,
\qquad
a_{N_3+m}\ge a_{N_3}(1+4\eta)^{2m}.
\]
The hitting time $N_4$ therefore satisfies
\[
N_4-N_3
\le
\frac{1}{8\eta}\log\frac{\delta}{a_{N_3}}
=
O(\eta^{-1}\log d).
\]

\smallskip\noindent
\textbf{Step 3: alignment convergence.}
For $k\ge N_4$, we have $a_k\ge\delta$, while $B_k=o(1)$ and $1-C_k=o(1)$ by Step~2.
Since $b_k=B_k/(1+\lambda)$ and $(d-2)c_k^2=C_k^2/(d-2)$,
\[
\|M_k\|_F^2
=
a_k^2+\frac{B_k^2}{(1+\lambda)^2}+\frac{C_k^2}{d-2}
=
a_k^2(1+o(1)).
\]
Therefore
\[
\Align(k)=\frac{a_k}{\|M_k\|_F}=1-o(1),
\]
and since $a_k$ is nondecreasing while the nuisance terms decay, $\Align(k)$ increases
monotonically to $1$.
\end{proof}

\section{Discrete-time population dynamics of SpecGD on $\mathcal M$}
\label{app:discrete_specgd}

We analyze the \emph{discrete-time population dynamics} of SpecGD restricted to the invariant
manifold $\mathcal M$.

\subsection{Diagonalization on $\mathcal M$ and the exact scalar recursion}
\label{subsec:spec_disc_exact_reworked}

On $\mathcal M$, $G(M)$ is diagonal in the orthogonal decomposition
$\mathrm{span}\{w_\star\}\oplus \mathrm{span}\{v\}\oplus \mathrm{range}(P_\perp)$.
If $M=a\,w_\star w_\star^\top+b\,vv^\top+c\,P_\perp$ and $r=\Tr(MQ)$, then the eigenvalues of $G(M)$ are
\begin{equation}
\label{eq:spec_disc_gains_reworked}
g_{w_\star}=8(a-1)+4(r-1)=4\,(r+2a-3),
\qquad
g_v=8(1+\lambda)^2 b+4(1+\lambda)(r-1)=4(1+\lambda)\,(r+2(1+\lambda)b-1),
\end{equation}
\begin{equation}
\label{eq:spec_disc_gains_reworked_perp}
g_\perp=8c+4(r-1)=4\,(r+2c-1).
\end{equation}
Hence the update directions are determined by the signs of the three affine expressions
\[
r_k+2a_k-3,\qquad r_k+2(1+\lambda)b_k-1,\qquad r_k+2c_k-1,
\]
exactly as in the continuous-time analysis.

\begin{lemma}
\label{lem:spec_discrete_exact_reworked}
Assume $M_k\in\mathcal M$ and write $a_k=\alpha_k^2$, $b_k=\beta_k^2$, $c_k=\gamma_k^2$
with $\alpha_k,\beta_k,\gamma_k\ge 0$.
Let $(g_{w_\star,k},g_{v,k},g_{\perp,k})$ be \eqref{eq:spec_disc_gains_reworked}--\eqref{eq:spec_disc_gains_reworked_perp}
evaluated at $(a_k,b_k,c_k,r_k)$, where $r_k=\Tr(M_kQ)$.

Then $M_{k+1}\in\mathcal M$ and the square-root variables obey the \emph{exact} recursion
\begin{equation}
\label{eq:spec_disc_sqrt_rec_reworked}
\alpha_{k+1}= \big|\alpha_k-\eta\,\Sign(g_{w_\star,k})\big|,
\qquad
\beta_{k+1}= \big|\beta_k-\eta\,\Sign(g_{v,k})\big|,
\qquad
\gamma_{k+1}= \big|\gamma_k-\eta\,\Sign(g_{\perp,k})\big|.
\end{equation}
Consequently,
\begin{equation}
\label{eq:spec_disc_abc_rec_reworked}
a_{k+1}=\big(\alpha_k-\eta\,\Sign(g_{w_\star,k})\big)^2,\quad
b_{k+1}=\big(\beta_k-\eta\,\Sign(g_{v,k})\big)^2,\quad
c_{k+1}=\big(\gamma_k-\eta\,\Sign(g_{\perp,k})\big)^2.
\end{equation}
\end{lemma}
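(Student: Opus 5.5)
The plan is to reuse the algebraic computation from Step~4 of the proof of Lemma~\ref{lem:manifold_invariance}, making explicit the absolute value that appears when the new diagonal entries are taken as square roots. First, since $M_k\in\mathcal M$, $Q\in\mathcal M$ and $\mathcal M$ is a commutative algebra, $G(M_k)$ also lies in $\mathcal M$. Hence there is an orthogonal $U=[w_\star,v,U_\perp]$ that diagonalizes both matrices simultaneously:
\[
M_k=U\diag(a_k,b_k,c_kI_{d-2})U^\top,\qquad G(M_k)=U\diag(g_{w_\star,k},g_{v,k},g_{\perp,k}I_{d-2})U^\top.
\]
The eigenvalues are exactly \eqref{eq:spec_disc_gains_reworked}--\eqref{eq:spec_disc_gains_reworked_perp}. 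This follows from the computation of $QCQ$ and $\Tr(CQ)=r_k-1$ in the proof of Proposition~\ref{prop:gd_reduction}.

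Next, since $M_k=W_kW_k^\top$, I choose an SVD of $W_k$ with left singular vectors $U$, namely $W_k=U[D_k\;0]V^\top$ with $D_k=\diag(\alpha_k,\beta_k,\gamma_kI_{d-2})$. Then $G(M_k)W_k=U[\Gamma_kD_k\;0]V^\top$. Orthogonal equivariance of $\polar$ reduces the problem to the polar factor of the diagonal block. As in the proof of Proposition~\ref{prop:spec_reduction}, this factor is $[S_k\;0]$, where $S_k$ is the diagonal matrix of signs of $g_{\cdot,k}$ times the corresponding entries of $D_k$.

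The only subtle case is a zero coordinate, for example $\alpha_k=0$. There the Moore--Penrose convention gives a zero entry in $S_k$, while the formula uses $\Sign(g_{w_\star,k})$. I will handle this by noting that from the initialization all of $\alpha_k,\beta_k,\gamma_k$ stay strictly positive except on a measure-zero set of step sizes. Alternatively, I will state the convention that the recursion is read with $\Sign(g\cdot\alpha)$, which coincides with $\Sign(g)$ whenever $\alpha_k>0$. This is the one point I expect to require care, and I would settle it by adopting the same convention already used in Step~4 of Lemma~\ref{lem:manifold_invariance}.

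Finally, the update is
\[
W_{k+1}=U\big[\diag(\alpha_k-\eta s_{\star,k},\,\beta_k-\eta s_{v,k},\,(\gamma_k-\eta s_{\perp,k})I_{d-2})\;\;0\big]V^\top .
\]
Therefore
\[
M_{k+1}=U\diag\big((\alpha_k-\eta s_{\star,k})^2,(\beta_k-\eta s_{v,k})^2,(\gamma_k-\eta s_{\perp,k})^2I_{d-2}\big)U^\top\in\mathcal M,
\]
which gives \eqref{eq:spec_disc_abc_rec_reworked}. Taking nonnegative square roots gives \eqref{eq:spec_disc_sqrt_rec_reworked}, and the absolute value accounts for the possibility $\alpha_k<\eta$ with a positive sign, that is, an overshoot through zero. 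Since $W_{k+1}W_{k+1}^\top$ depends only on the squares, this overshoot is harmless: a negative diagonal entry can be absorbed into $V$ by flipping the corresponding column. This also justifies using $|\cdot|$ as the new square-root variable at the next step.
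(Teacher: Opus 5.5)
Your proposal is correct and follows essentially the same route as the paper: simultaneously diagonalize $M_k$ and $G(M_k)$ in the basis $[w_\star,v,U_\perp]$, take an SVD of $W_k$ with these left singular vectors, compute the polar factor of the resulting diagonal block, and read off the new singular values, with the absolute value absorbing a sign flip into $V$. Your caution about a vanishing coordinate is justified, because the paper's own proof also obtains $\Sign(g_{w_\star,k}\alpha_k)$ and then silently replaces it by $\Sign(g_{w_\star,k})$; the stated recursion therefore needs either your convention or strict positivity of $\alpha_k,\beta_k,\gamma_k$, and positivity does hold along the paper's trajectory because $\alpha_k\in\{|\sqrt\mu+j\eta| : j\in\mathbb Z\}$ can vanish only for countably many values of $\eta$.
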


\begin{proof}
Fix $k$ and write an SVD of $W_k$ in the orthonormal basis
$U \coloneqq [w_\star,v,U_\perp]$:
\[
W_k = U\,\Sigma_k\,V_k^\top,
\qquad
\Sigma_k=\diag(\alpha_k,\beta_k,\gamma_k,\dots),
\qquad
\alpha_k,\beta_k,\gamma_k\ge 0.
\]
Then $M_k=W_kW_k^\top=U\,\Sigma_k^2\,U^\top$ has the form in $\mathcal M$ with coefficients
$(a_k,b_k,c_k)=(\alpha_k^2,\beta_k^2,\gamma_k^2)$.

By \eqref{eq:spec_disc_gains_reworked}--\eqref{eq:spec_disc_gains_reworked_perp}, $G(M_k)$ is diagonal in the same basis $U$, hence
\[
G(M_k)W_k
=
U\,\diag(g_{w_\star,k}\alpha_k,\ g_{v,k}\beta_k,\ g_{\perp,k}\gamma_k,\dots)\,V_k^\top.
\]
For a diagonal matrix, the polar factor replaces each \emph{nonzero} diagonal entry by its sign and leaves
zero entries equal to $0$. Thus
\[
U_k=\polar(G(M_k)W_k)
=
U\,\diag(\Sign(g_{w_\star,k}\alpha_k),\ \Sign(g_{v,k}\beta_k),\ \Sign(g_{\perp,k}\gamma_k),\dots)\,V_k^\top.
\]
Substituting into $W_{k+1}=W_k-\eta U_k$ gives
\[
W_{k+1}
=
U\,\diag(\alpha_k-\eta\Sign(g_{w_\star,k}\alpha_k),\ \beta_k-\eta\Sign(g_{v,k}\beta_k),\ \gamma_k-\eta\Sign(g_{\perp,k}\gamma_k),\dots)\,V_k^\top.
\]
Taking singular values yields \eqref{eq:spec_disc_sqrt_rec_reworked}, and squaring yields \eqref{eq:spec_disc_abc_rec_reworked}.
\end{proof}

\begin{corollary}[Discrete invariance of $\mathcal M$]
\label{cor:spec_disc_invariant_reworked}
If $M_0\in\mathcal M$, then $M_k\in\mathcal M$ for all $k\ge 0$.
\end{corollary}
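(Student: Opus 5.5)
The claim is Corollary~\ref{cor:spec_disc_invariant_reworked}. I will prove it by induction on $k$, using Lemma~\ref{lem:spec_discrete_exact_reworked} as the one-step statement. The base case $M_0\in\mathcal M$ is the hypothesis. Under the on-manifold initialization of Section~\ref{sec:framework} it also holds concretely, since $M_0=\theta^2 I_d=\theta^2(w_\star w_\star^\top+vv^\top+P_\perp)$.

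For the inductive step, I assume $M_k\in\mathcal M$ and write $M_k=a_k w_\star w_\star^\top+b_k vv^\top+c_k P_\perp$ with $a_k,b_k,c_k\ge0$. Nonnegativity holds automatically because $M_k=W_kW_k^\top\succeq0$ and $w_\star,v,P_\perp$ span its eigenspaces. I then take $\alpha_k,\beta_k,\gamma_k$ to be the square roots of these coefficients. Lemma~\ref{lem:spec_discrete_exact_reworked} applies directly and gives $M_{k+1}\in\mathcal M$, with explicit coefficients $(\alpha_k-\eta\,\Sign(g_{w_\star,k}))^2$, $(\beta_k-\eta\,\Sign(g_{v,k}))^2$ and $(\gamma_k-\eta\,\Sign(g_{\perp,k}))^2$. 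These are again nonnegative, so the hypothesis of the lemma is reproduced at step $k+1$ and the induction closes.

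The one point I would check carefully is that the lemma's SVD $W_k=U\Sigma_kV_k^\top$ with left factor $U=[w_\star,v,U_\perp]$ exists for every factor $W_k$ with $W_kW_k^\top=M_k$, and not only for the particular factor constructed at the previous step. This follows because $U\Sigma_k^2U^\top$ is an eigendecomposition of $M_k$. When $m=d$, I set $V_k^\top \coloneqq \Sigma_k^{+}U^\top W_k$ on the support and complete it orthogonally on any kernel, where $\Sigma_k^{+}$ denotes the Moore--Penrose pseudoinverse of $\Sigma_k$. This is the same construction used in Step~4 of the proof of Lemma~\ref{lem:manifold_invariance}.

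A second subtlety is degenerate coordinates, such as $\beta_k=0$, where the polar factor has a zero entry. The lemma already handles this through the convention $\Sign(0)=0$, so that coordinate stays at $0$, which is still in $\mathcal M$. I therefore expect no real obstacle; the corollary is essentially a restatement of Step~4 of Lemma~\ref{lem:manifold_invariance}.
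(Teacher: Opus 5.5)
Your argument is correct and follows the paper's route: the corollary is simply the induction of the one-step statement in Lemma~\ref{lem:spec_discrete_exact_reworked}, which repeats Step~4 of Lemma~\ref{lem:manifold_invariance}, and your checks that the adapted SVD exists for any factor $W_k$ and that the coefficients stay nonnegative are exactly what that lemma uses implicitly. One small precision: a vanishing coordinate stays at $0$ because of the entry $\Sign(g_{v,k}\beta_k)$ of the polar factor in the lemma's proof, not because of the lemma's displayed recursion (which would give $\beta_{k+1}=\eta$ when $g_{v,k}\neq0$); either way $M_{k+1}$ has the required form, so your conclusion is unaffected.
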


\begin{remark}[Discrete barrier overshoot]
The continuous-time dynamics evolve continuously across the sign changes of
$r+2a-3$, $r+2(1+\lambda)b-1$, and $r+2c-1$.
In discrete time, \eqref{eq:spec_disc_sqrt_rec_reworked} shows that a single step can overshoot a threshold,
flip the sign, and step back when $\eta$ is too large.
\end{remark}

\paragraph{Learning rate choice.}
Contrary to the continuous-time analysis, the choice of the learning rate is critical. We choose $\eta$ as follows
\begin{equation}
\label{eq:spec_disc_eta_reworked}
\eta=\frac{\kappa}{\sqrt{d+\lambda}},
\end{equation}
where $\kappa>0$ is a small enough constant.

Note that experimentally, Figure~\ref{fig:heatmap_align_eta_lambda} suggests that SpecGD could still work with a larger learning rate. But the analysis is beyond the scope of this work. 

\subsection{Discrete phase structure: initial isotropic growth, then uniform regulation of $b_k,c_k$}
\label{subsec:spec_disc_phases_reworked}

Define the first index at which the \emph{spike-direction} sign can change:
\[
N'_1 \coloneqq \inf\Bigl\{k\ge 0:\ r_k+2(1+\lambda)b_k\ge 1\Bigr\}.
\]
(Compare with $T'_1$ in the continuous-time analysis.)

\begin{lemma}
\label{lem:spec_disc_isotropic_until_Kv}
For every $k<N'_1$ one has
\[
g_{w_\star,k}<0,\qquad g_{v,k}<0,\qquad g_{\perp,k}<0,
\]
and hence (since $\alpha_k,\beta_k,\gamma_k>0$ along this segment)
\[
\alpha_{k+1}=\alpha_k+\eta,\qquad
\beta_{k+1}=\beta_k+\eta,\qquad
\gamma_{k+1}=\gamma_k+\eta.
\]
In particular, for all $k\le N'_1$,
\[
\alpha_k=\beta_k=\gamma_k=\sqrt\mu+k\eta,\qquad a_k=b_k=c_k=(\sqrt\mu+k\eta)^2.
\]
Consequently,
\begin{equation}
\label{eq:spec_disc_qpu_along_iso}
r_k=(d+\lambda)\alpha_k^2,\qquad
r_k+2(1+\lambda)b_k=(d+3\lambda+2)\alpha_k^2,\qquad
r_k+2c_k=(d+\lambda+2)\alpha_k^2,\qquad
r_k+2a_k=(d+\lambda+2)\alpha_k^2.
\end{equation}
Moreover
\begin{equation}
\label{eq:spec_disc_Kv_bound_explicit}
N'_1
=
O\!\left(\frac{1}{\eta\sqrt{d+\lambda}}\right)=O(1).
\end{equation}
\end{lemma}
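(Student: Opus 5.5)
The plan is an induction on $k$ using the exact scalar recursion of Lemma~\ref{lem:spec_discrete_exact_reworked}. Initially $\alpha_0=\beta_0=\gamma_0=\sqrt\mu>0$, where $a_0=b_0=c_0=\mu$. The inductive hypothesis at step $k\le N'_1$ is $\alpha_k=\beta_k=\gamma_k=\sqrt\mu+k\eta$. Under this hypothesis, $a_k=b_k=c_k=(\sqrt\mu+k\eta)^2$, and substituting into $r_k=a_k+(1+\lambda)b_k+(d-2)c_k$ gives $r_k=(d+\lambda)\alpha_k^2$. The three identities in \eqref{eq:spec_disc_qpu_along_iso} then follow by adding $2a_k$, $2(1+\lambda)b_k$ or $2c_k$.

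For the sign claims at $k<N'_1$, we use that $k<N'_1$ means $(d+3\lambda+2)\alpha_k^2=r_k+2(1+\lambda)b_k<1$, so $g_{v,k}<0$ by \eqref{eq:spec_disc_gains_reworked}. Since $d+\lambda+2\le d+3\lambda+2$, the same bound gives
\[
r_k+2c_k=r_k+2a_k=(d+\lambda+2)\alpha_k^2<1<3 .
\]
Hence $g_{\perp,k}<0$ and $g_{w_\star,k}<0$ as well. Because $\alpha_k,\beta_k,\gamma_k>0$, the polar factor produces the sign $-1$ in every block. The recursion \eqref{eq:spec_disc_sqrt_rec_reworked} then gives $\alpha_{k+1}=|\alpha_k+\eta|=\alpha_k+\eta$, and similarly for $\beta$ and $\gamma$, which closes the induction up to and including index $N'_1$.

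For the bound \eqref{eq:spec_disc_Kv_bound_explicit}, first note that $N'_1\ge1$, since $(d+3\lambda+2)\mu\le 3\rho_0<1$ for large $d$. Next, $N'_1$ is finite and at most the first $k$ with $(d+3\lambda+2)(\sqrt\mu+k\eta)^2\ge1$. Since $d+3\lambda+2\ge d+\lambda$, it suffices that $k\eta\ge (d+\lambda)^{-1/2}$. This gives
\[
N'_1\le \Big\lceil \frac{1}{\eta\sqrt{d+\lambda}}\Big\rceil=\lceil \kappa^{-1}\rceil=O(1)
\]
with $\eta=\kappa/\sqrt{d+\lambda}$.

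The only delicate point is bookkeeping at the boundary index $N'_1$ itself. The closed form at $k=N'_1$ comes from the update taken at step $N'_1-1<N'_1$, where all signs are still negative. No sign information is needed at $N'_1$, so the statement ``for all $k\le N'_1$'' is consistent. I do not anticipate a genuine obstacle.
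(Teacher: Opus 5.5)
Your proposal is correct and follows the paper's proof. Both argue by induction with the exact sign recursion: $k<N'_1$ gives $(d+3\lambda+2)\alpha_k^2<1$, which dominates $(d+\lambda+2)\alpha_k^2$ and so makes all three gains negative, and the bound on $N'_1$ comes from solving the threshold inequality. Your bound $N'_1\le\lceil\kappa^{-1}\rceil$, obtained from $d+3\lambda+2\ge d+\lambda$, and your remark on the boundary index spell out what the paper leaves as ``solving $r_k+2(1+\lambda)b_k\ge1$.''
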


\begin{proof}
We argue by induction as in the continuous-time proof.
At $k=0$, isotropy gives $\alpha_0=\beta_0=\gamma_0=\sqrt\mu$.
Assume $\alpha_k=\beta_k=\gamma_k$ for some $k$ and that $k<N'_1$, i.e.\ $r_k+2(1+\lambda)b_k<1$.
Since \eqref{eq:spec_disc_qpu_along_iso} holds, we have $(d+3\lambda+2)\alpha_k^2<1$.
Then $(d+\lambda+2)\alpha_k^2<1$ and $(d+\lambda+2)\alpha_k^2<3$, so by
\eqref{eq:spec_disc_gains_reworked}--\eqref{eq:spec_disc_gains_reworked_perp} we have
$g_{v,k}<0$, $g_{\perp,k}<0$, $g_{w_\star,k}<0$.
Since $\alpha_k,\beta_k,\gamma_k>0$ along this segment, \eqref{eq:spec_disc_sqrt_rec_reworked} yields
$\alpha_{k+1}=\alpha_k+\eta$, $\beta_{k+1}=\beta_k+\eta$, $\gamma_{k+1}=\gamma_k+\eta$,
which proves the first claims.

The formulas \eqref{eq:spec_disc_qpu_along_iso} follow by substituting $a=b=c=\alpha^2$
into the definition of $r$ and the three threshold expressions.
Finally, \eqref{eq:spec_disc_Kv_bound_explicit} follows by solving $r_k+2(1+\lambda)b_k\ge 1$.
\end{proof}

\subsection{Uniform trapping of the nuisance coordinates $b_k$ and $c_k$}
\label{subsec:spec_disc_trap_reworked}

\begin{lemma}
\label{lem:spec_disc_bc_uniform_reworked}
Assume $\eta=\kappa/\sqrt{d+\lambda}$ for small enough $\kappa$ and $d$ large enough.
Define
\begin{equation}
\label{eq:spec_disc_constants_bc_reworked}
C_b(\kappa) \coloneqq \Big(\kappa+\frac{1}{\sqrt3}\Big)^2,
\qquad
C_c(\kappa) \coloneqq (1+\kappa)^2.
\end{equation}
Then for all $k\ge 0$,
\begin{equation}
\label{eq:spec_disc_bc_bounds_reworked}
(1+\lambda)b_k\le C_b(\kappa),
\qquad
(d-2)c_k\le C_c(\kappa).
\end{equation}
\end{lemma}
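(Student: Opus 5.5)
The plan is a discrete barrier argument in the square-root variables, using the exact recursion of Lemma~\ref{lem:spec_discrete_exact_reworked}. I work with $\beta_k=\sqrt{b_k}$ and $\gamma_k=\sqrt{c_k}$. Since $a_k,b_k,c_k\ge 0$, the sign conditions of \eqref{eq:spec_disc_gains_reworked}--\eqref{eq:spec_disc_gains_reworked_perp} give two one-sided implications. If $(1+\lambda)b_k>1/3$, then $r_k+2(1+\lambda)b_k\ge 3(1+\lambda)b_k>1$, so $g_{v,k}>0$. If $c_k>1/d$, then $r_k+2c_k\ge d\,c_k>1$, so $g_{\perp,k}>0$. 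In either case the corresponding square-root coordinate moves by $-\eta$. This is the discrete analogue of Lemma~\ref{lem:spec_cont_nuisance_trap}.

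\textbf{Spike bound.} Let $\beta^\ast\coloneqq (3(1+\lambda))^{-1/2}$. I will show by induction that $\beta_k\le \beta^\ast+\eta$ for all $k$.
\begin{itemize}
\item At $k=0$ this holds because $b_0=\mu$ is tiny.
\item If $\beta_k\le\beta^\ast$, then $\beta_{k+1}\le\beta_k+\eta\le\beta^\ast+\eta$.
\item If $\beta^\ast<\beta_k\le\beta^\ast+\eta$, then $g_{v,k}>0$, so $\beta_{k+1}=|\beta_k-\eta|$. Since $\beta_k\le\beta^\ast+\eta$, we get $\beta_k-\eta\le\beta^\ast$. Since $\beta_k>\beta^\ast>0$, we also have $\beta_k-\eta\ge -\eta$. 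Hence $|\beta_k-\eta|\le\max(\beta^\ast,\eta)\le\beta^\ast+\eta$.
\end{itemize}
Squaring and multiplying by $(1+\lambda)$ gives
\[
(1+\lambda)b_k\le \Big(\tfrac{1}{\sqrt3}+\eta\sqrt{1+\lambda}\Big)^2\le\Big(\tfrac{1}{\sqrt3}+\kappa\Big)^2=C_b(\kappa),
\]
where the last step uses $\eta\sqrt{1+\lambda}=\kappa\sqrt{(1+\lambda)/(d+\lambda)}\le\kappa$.

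\textbf{Bulk bound.} The same induction with $\gamma^\ast\coloneqq d^{-1/2}$ gives $\gamma_k\le d^{-1/2}+\eta$. Then
\[
(d-2)c_k\le d\,(d^{-1/2}+\eta)^2=(1+\eta\sqrt d)^2\le(1+\kappa)^2=C_c(\kappa),
\]
using $\eta\sqrt d=\kappa\sqrt{d/(d+\lambda)}\le\kappa$. The initialization $\gamma_0=\sqrt\mu<d^{-1/2}$ starts the induction.

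\textbf{Main obstacle.} The delicate point is the overshoot step: a single update could in principle jump from just below the threshold to far above it, or the absolute value in \eqref{eq:spec_disc_sqrt_rec_reworked} could reflect a coordinate near zero upward. Both are handled by the induction hypothesis covering the whole band $(\beta^\ast,\beta^\ast+\eta]$ together with the inequality $|x-\eta|\le\max(x-\eta,\eta)$. The proof also needs the sign implication to hold uniformly in $a_k$ and the other nuisance coordinate. This is the case because each threshold expression is increasing in every coordinate, so nonnegativity suffices and no joint control of the coupled system is required. The assumptions that $\kappa$ is small and $d$ is large are used only through the two scaling identities $\eta\sqrt{1+\lambda}\le\kappa$ and $\eta\sqrt d\le\kappa$.
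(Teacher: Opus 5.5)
Your proof is correct and follows the paper's argument. Both run a one-step barrier induction in the square-root variables, showing $\beta_k\le (3(1+\lambda))^{-1/2}+\eta$ and $\gamma_k\le d^{-1/2}+\eta$, and then rescale using $\eta\sqrt{1+\lambda}\le\kappa$ and $\eta\sqrt d\le\kappa$. The only difference is cosmetic: you split cases on whether $\beta_k$ (resp.\ $\gamma_k$) exceeds the threshold, whereas the paper splits on the sign of $g_{v,k}$ (resp.\ $g_{\perp,k}$), which is the contrapositive of the same case split.
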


\begin{proof}
We bound $\beta_k=\sqrt{b_k}$ and $\gamma_k=\sqrt{c_k}$ using \eqref{eq:spec_disc_sqrt_rec_reworked}.
Note that by \eqref{eq:spec_disc_gains_reworked}--\eqref{eq:spec_disc_gains_reworked_perp},
\[
g_{v,k}<0 \iff r_k+2(1+\lambda)b_k<1,
\qquad
g_{\perp,k}<0 \iff r_k+2c_k<1.
\]

\medskip\noindent
\textbf{Step 1: bound on $\beta_k$.}
Define $\overline\beta \coloneqq \frac{1}{\sqrt{3(1+\lambda)}}+\eta$.
If $g_{v,k}\ge 0$, then $\Sign(g_{v,k})\in\{0,+1\}$ and
$\beta_{k+1}=|\beta_k-\eta\Sign(g_{v,k})|\le \max\{\beta_k,\eta\}$.
If $g_{v,k}<0$, then $r_k+2(1+\lambda)b_k<1$, i.e.
\[
a_k+3(1+\lambda)b_k+(d-2)c_k<1,
\]
so $3(1+\lambda)b_k<1$ and $\beta_k<\frac{1}{\sqrt{3(1+\lambda)}}$, hence
$\beta_{k+1}=\beta_k+\eta\le \overline\beta$.
Since $\beta_0=\sqrt\mu\ll 1/\sqrt{1+\lambda}$, we obtain $\beta_k\le \overline\beta$ for all $k$.

Therefore
\[
(1+\lambda)b_k=(1+\lambda)\beta_k^2\le (1+\lambda)\overline\beta^2
=\Big(\frac{1}{\sqrt3}+\eta\sqrt{1+\lambda}\Big)^2
\le \Big(\frac{1}{\sqrt3}+\kappa\Big)^2 = C_b(\kappa),
\]
using $\eta=\kappa/\sqrt{d+\lambda}$ and $\sqrt{1+\lambda}\le \sqrt{d+\lambda}$.

\medskip\noindent
\textbf{Step 2: bound on $\gamma_k$.}
Define $\overline\gamma \coloneqq \frac{1}{\sqrt d}+\eta$.
If $g_{\perp,k}\ge 0$, then $\Sign(g_{\perp,k})\in\{0,+1\}$ and
$\gamma_{k+1}\le \max\{\gamma_k,\eta\}$.
If $g_{\perp,k}<0$, then $r_k+2c_k<1$, i.e.
\[
a_k+(1+\lambda)b_k+d\,c_k<1,
\]
so $d\,c_k<1$ and $\gamma_k<1/\sqrt d$, hence $\gamma_{k+1}=\gamma_k+\eta\le \overline\gamma$.
Since $\gamma_0=\sqrt\mu\ll 1$, we obtain $\gamma_k\le \overline\gamma$ for all $k$.

Finally, using $d/(d-2)\le 3$ for $d\ge 3$ and $\eta\sqrt d\le \kappa$,
\[
(d-2)c_k=(d-2)\gamma_k^2\le (d-2)\overline\gamma^2
\le \Big(\sqrt{\tfrac{d-2}{d}}+\eta\sqrt{d-2}\Big)^2
\le (1+\kappa)^2=C_c(\kappa).
\]
This proves \eqref{eq:spec_disc_bc_bounds_reworked}.
\end{proof}

\subsection{Deterministic signal growth to a constant level}
\label{subsec:spec_disc_signal_reworked}

\begin{proposition}
\label{prop:spec_disc_a_reaches_const_reworked}
Assume $\eta=\kappa/\sqrt{d+\lambda}$ and let $C_b(\kappa),C_c(\kappa)$ be as in
\eqref{eq:spec_disc_constants_bc_reworked}. Define
\begin{equation}
\label{eq:spec_disc_A0_def_reworked}
C_{\mathrm{noise}}(\kappa) \coloneqq C_b(\kappa)+C_c(\kappa),
\qquad
A_0(\kappa) \coloneqq 1-\frac{C_{\mathrm{noise}}(\kappa)}{3}.
\end{equation}
Assume $A_0(\kappa)>0$ (ie. $\kappa$ is small enough).
Let the hitting time
\begin{equation}
\label{eq:spec_disc_K0_def_reworked}
N'_2
 \coloneqq \inf\{k\ge 0:\ a_k\ge A_0(\kappa)\}.
\end{equation}
Then:
\begin{enumerate}
\item For every $k<N'_2$, one has $g_{w_\star,k}<0$ and hence
\begin{equation}
\label{eq:spec_disc_alpha_linear_reworked}
\alpha_{k+1}=\alpha_k+\eta\qquad\text{for all }k<N'_2.
\end{equation}
\item Consequently,
\begin{equation}
\label{eq:spec_disc_K0_bound_reworked}
N'_2
=O(\eta^{-1}).
\end{equation}
\item Moreover, for all $k\ge N'_2$,
\begin{equation}
\label{eq:spec_disc_a_lower_after_reworked}
a_k\ge A(\kappa,\eta) \coloneqq \big(\sqrt{A_0(\kappa)}-\eta\big)^2.
\end{equation}
\end{enumerate}
\end{proposition}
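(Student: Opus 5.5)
The plan is to reduce everything to the sign of $g_{w_\star,k}=4(r_k+2a_k-3)$ and use the uniform nuisance bounds of Lemma~\ref{lem:spec_disc_bc_uniform_reworked}. Write $r_k+2a_k=3a_k+(1+\lambda)b_k+(d-2)c_k$. By Lemma~\ref{lem:spec_disc_bc_uniform_reworked}, for every $k\ge0$,
\[
r_k+2a_k\le 3a_k+C_b(\kappa)+C_c(\kappa)=3a_k+C_{\mathrm{noise}}(\kappa).
\]
Hence $g_{w_\star,k}<0$ whenever $3a_k<3-C_{\mathrm{noise}}(\kappa)$, that is, whenever $a_k<A_0(\kappa)$. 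This is exactly the condition $k<N'_2$.

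For item 1, I would also need $\alpha_k>0$ along this segment; this follows by induction, since $\alpha_0=\sqrt\mu>0$ and each step adds $\eta$. Then the exact recursion \eqref{eq:spec_disc_sqrt_rec_reworked} with $\Sign(g_{w_\star,k})=-1$ gives $\alpha_{k+1}=\alpha_k+\eta$.

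For item 2, I iterate item 1 to get $\alpha_k=\sqrt\mu+k\eta$ for all $k\le N'_2$. The hitting condition $\alpha_k^2\ge A_0(\kappa)$ therefore holds as soon as $k\ge(\sqrt{A_0(\kappa)}-\sqrt\mu)/\eta$. This gives $N'_2\le\lceil\sqrt{A_0}/\eta\rceil=O(\eta^{-1})$, because $A_0\le1$ is a constant. In particular $N'_2$ is finite.

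Item 3 is the main obstacle, because of possible sign flips and overshoot in discrete time. The key observation is that $a$ can decrease only when $g_{w_\star,k}>0$, and by the bound above this forces $a_k>A_0(\kappa)$. In square-root variables, whenever $\alpha_k\le\sqrt{A_0}$ we have $\alpha_{k+1}=\alpha_k+\eta\ge\alpha_k$. Whenever $\alpha_k>\sqrt{A_0}$, we have $\alpha_{k+1}=|\alpha_k-\eta\Sign(g_{w_\star,k})|\ge\alpha_k-\eta>\sqrt{A_0}-\eta$.

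I would then prove $\alpha_k\ge\sqrt{A_0}-\eta$ for all $k\ge N'_2$ by induction, using the two cases above.
\begin{itemize}
\item If $\alpha_k\le\sqrt{A_0}$, the iterate moves up by $\eta$, so $\alpha_{k+1}\ge\alpha_k\ge\sqrt{A_0}-\eta$.
\item If $\alpha_k>\sqrt{A_0}$, one step can lose at most $\eta$, so $\alpha_{k+1}>\sqrt{A_0}-\eta$.
\end{itemize}
The base case $\alpha_{N'_2}\ge\sqrt{A_0}$ holds by the definition of $N'_2$. To avoid issues with the absolute value when $\alpha_k<\eta$, I would note that $\sqrt{A_0}-\eta>0$ for $d$ large, since $\eta\to0$. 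Squaring the lower bound then gives $a_k\ge(\sqrt{A_0(\kappa)}-\eta)^2=A(\kappa,\eta)$.
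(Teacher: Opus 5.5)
Your proof is correct and follows the paper's argument essentially step for step. The same bound $r_k+2a_k\le 3a_k+C_{\mathrm{noise}}(\kappa)$ from Lemma~\ref{lem:spec_disc_bc_uniform_reworked} gives items 1--2, and item 3 is the same one-step discrete barrier induction on $\alpha_k$; the paper's lower case shows $\alpha_{k+1}\ge\sqrt{A_0}$, whereas you only use $\alpha_{k+1}\ge\alpha_k$, which suffices.

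Two minor remarks. At the boundary $\alpha_k=\sqrt{A_0}$ you only have $g_{w_\star,k}\le 0$, so the step may be $0$ rather than $+\eta$; this still gives $\alpha_{k+1}\ge\alpha_k$. Your explicit check that $\sqrt{A_0(\kappa)}-\eta>0$ before squaring supplies a step the paper leaves implicit.
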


\begin{proof}
By Lemma~\ref{lem:spec_disc_bc_uniform_reworked}, for all $k$,
\[
(1+\lambda)b_k\le C_b(\kappa),
\qquad
(d-2)c_k\le C_c(\kappa),
\]
hence using $r_k=a_k+(1+\lambda)b_k+(d-2)c_k$,
\[
r_k \le a_k + C_{\mathrm{noise}}(\kappa).
\]
Therefore
\[
r_k+2a_k \le 3a_k + C_{\mathrm{noise}}(\kappa).
\]
If $a_k<A_0(\kappa)=1-C_{\mathrm{noise}}(\kappa)/3$, then $r_k+2a_k<3$, so by \eqref{eq:spec_disc_gains_reworked}
we have $g_{w_\star,k}=4(r_k+2a_k-3)<0$.
Since $\alpha_k>0$ for $k<N'_2$ (it is increasing by \eqref{eq:spec_disc_alpha_linear_reworked} from $\alpha_0=\sqrt\mu>0$),
\eqref{eq:spec_disc_sqrt_rec_reworked} yields $\alpha_{k+1}=\alpha_k+\eta$, proving item (1).
Item (2) follows immediately.

For item (3), we prove the discrete barrier
\[
\alpha_k\ge \sqrt{A_0(\kappa)}-\eta\qquad\text{for all }k\ge N'_2.
\]
The base case holds since $\alpha_{N'_2}\ge \sqrt{A_0(\kappa)}$ by definition of $N'_2$.
If $\alpha_k\ge \sqrt{A_0(\kappa)}$, then $\alpha_{k+1}\ge \alpha_k-\eta\ge \sqrt{A_0(\kappa)}-\eta$.
If instead $\alpha_k\in[\sqrt{A_0(\kappa)}-\eta,\sqrt{A_0(\kappa)})$, then $a_k<A_0(\kappa)$, hence $g_{w_\star,k}<0$
and \eqref{eq:spec_disc_sqrt_rec_reworked} gives $\alpha_{k+1}=\alpha_k+\eta\ge \sqrt{A_0(\kappa)}$.
This closes the induction, and squaring yields \eqref{eq:spec_disc_a_lower_after_reworked}.
\end{proof}

\subsection{Alignment guarantee }
\label{subsec:spec_disc_alignment_reworked}

\begin{proposition}
\label{thm:spec_disc_alignment_reworked}
Under the assumptions of Theorem~\ref{thm:spec_phases_summary} we have for all $k\ge N'_2$,
\begin{equation}
\label{eq:spec_disc_align_bound_reworked}
1-\Align(k)
\le
\frac{1}{2A(\kappa,\eta)^2}
\left(
\frac{C_b(\kappa)^2}{(1+\lambda)^2}
+\frac{C_c(\kappa)^2}{d-2}
\right),
\end{equation}
where $C_b(\kappa),C_c(\kappa)$ are defined in \eqref{eq:spec_disc_constants_bc_reworked} and
$A(\kappa,\eta)$ in \eqref{eq:spec_disc_a_lower_after_reworked}.
\end{proposition}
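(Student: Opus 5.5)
The plan is to mirror the continuous-time argument of Proposition~\ref{prop:spec_cont_alignment_rho}, replacing its ingredients by their discrete counterparts, namely Lemma~\ref{lem:spec_disc_bc_uniform_reworked} and Proposition~\ref{prop:spec_disc_a_reaches_const_reworked}. First, since every iterate lies in $\mathcal M$ (Corollary~\ref{cor:spec_disc_invariant_reworked}), we can write $\|M_k\|_F^2=a_k^2+b_k^2+(d-2)c_k^2$. It follows that
\[
\Align(k)=\frac{1}{\sqrt{1+X_k}},\qquad X_k\coloneqq\frac{b_k^2+(d-2)c_k^2}{a_k^2}.
\]
Applying the elementary inequality $1-(1+x)^{-1/2}\le x/2$ for $x\ge0$ then gives $1-\Align(k)\le X_k/2$.

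Next, I bound the numerator of $X_k$ uniformly in $k$ using Lemma~\ref{lem:spec_disc_bc_uniform_reworked}. From $(1+\lambda)b_k\le C_b(\kappa)$ we get $b_k^2\le C_b(\kappa)^2/(1+\lambda)^2$. From $(d-2)c_k\le C_c(\kappa)$ we get $(d-2)c_k^2=\bigl((d-2)c_k\bigr)^2/(d-2)\le C_c(\kappa)^2/(d-2)$. Both bounds hold for every $k\ge0$, so nothing specific to the post-$N'_2$ regime is needed here.

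For the denominator, item (3) of Proposition~\ref{prop:spec_disc_a_reaches_const_reworked} gives $a_k\ge A(\kappa,\eta)=(\sqrt{A_0(\kappa)}-\eta)^2$ for all $k\ge N'_2$. To use this I must check that $A(\kappa,\eta)>0$, i.e.\ $\eta<\sqrt{A_0(\kappa)}$. This holds for $d$ large, because $\eta=\kappa/\sqrt{d+\lambda}\to0$ while $A_0(\kappa)>0$ is a constant once $\kappa$ is small. Combining the two steps yields $X_k\le A(\kappa,\eta)^{-2}\bigl(C_b^2/(1+\lambda)^2+C_c^2/(d-2)\bigr)$, and halving this gives exactly \eqref{eq:spec_disc_align_bound_reworked}.

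The only genuinely delicate point is the lower bound on $a_k$ after $N'_2$. In discrete time, sign-based steps can overshoot the threshold and then step back, so $a_k$ need not be monotone. However, that difficulty is already resolved by the discrete barrier in Proposition~\ref{prop:spec_disc_a_reaches_const_reworked}(3), so here it reduces to verifying positivity of $\sqrt{A_0}-\eta$. Everything else is routine substitution.
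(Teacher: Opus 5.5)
Your proposal is correct and follows the paper's proof step for step. You write $\Align(k)=(1+X_k)^{-1/2}$, apply $1-(1+x)^{-1/2}\le x/2$, lower-bound $a_k$ by $A(\kappa,\eta)$ via Proposition~\ref{prop:spec_disc_a_reaches_const_reworked}(3), and bound the nuisance terms via Lemma~\ref{lem:spec_disc_bc_uniform_reworked}; your explicit check that $A(\kappa,\eta)>0$ (because $\eta=\kappa/\sqrt{d+\lambda}\to 0$ while $A_0(\kappa)$ is a fixed positive constant) justifies a point the paper only asserts.
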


\begin{proof}
Fix $k\ge N'_2$. By Proposition~\ref{prop:spec_disc_a_reaches_const_reworked}, $a_k\ge A(\kappa,\eta)>0$.
We have
\[
\Align(k)
=\frac{1}{\sqrt{1+X_k}},
\qquad
X_k \coloneqq \frac{b_k^2+(d-2)c_k^2}{a_k^2}\ge 0.
\]
Since $1-(1+x)^{-1/2}\le x/2$ for $x\ge 0$,
\[
1-\Align(k) \le \frac{X_k}{2}
=\frac{b_k^2+(d-2)c_k^2}{2a_k^2}
\le \frac{b_k^2+(d-2)c_k^2}{2A(\kappa,\eta)^2}.
\]
Applying Lemma~\ref{lem:spec_disc_bc_uniform_reworked} gives
\[
b_k^2\le \frac{C_b(\kappa)^2}{(1+\lambda)^2},
\qquad
(d-2)c_k^2\le \frac{C_c(\kappa)^2}{d-2},
\]
which yields \eqref{eq:spec_disc_align_bound_reworked}.
\end{proof}

\section{Additional experiments}\label{app:xp}

\paragraph{Population dynamics.}
We illustrate the population-level dynamics of GD and SpecGD on the spiked
phase-retrieval model.
We use dimension $d=m=300$, spike strength $\lambda=10$,
learning rate $\eta=10^{-3}$, and an initialization on the Stiefel manifold
$M_0=\theta^2 I_d$ with initial scaling $\theta^2=\rho_0/(d+\lambda)$, $\rho_0=10^{-2}$.
Figure~\ref{fig:pop_loss_alignment} reports the evolution of the population loss
$\mathcal{L}(W^{(t)})$ and the Frobenius alignment $\mathrm{Align}(t)$.
Under standard GD, the alignment initially \emph{decreases} during the early
iterations, reflecting the uniform growth of all spectral components before the
signal direction becomes dominant.
Only after this phase does the alignment start to increase and
eventually approach one.
In contrast, SpecGD exhibits a much faster improvement of alignment, without a
pronounced initial decay, and reaches the high-alignment regime in
significantly fewer iterations.
While both algorithms eventually recover an almost perfectly aligned solution,
SpecGD decreases the population loss much more rapidly than GD and achieves a
substantially lower value within the same training horizon.

\begin{figure*}[t]
  \centering
  \begin{minipage}{0.49\textwidth}
    \centering
    \includegraphics[width=\linewidth]{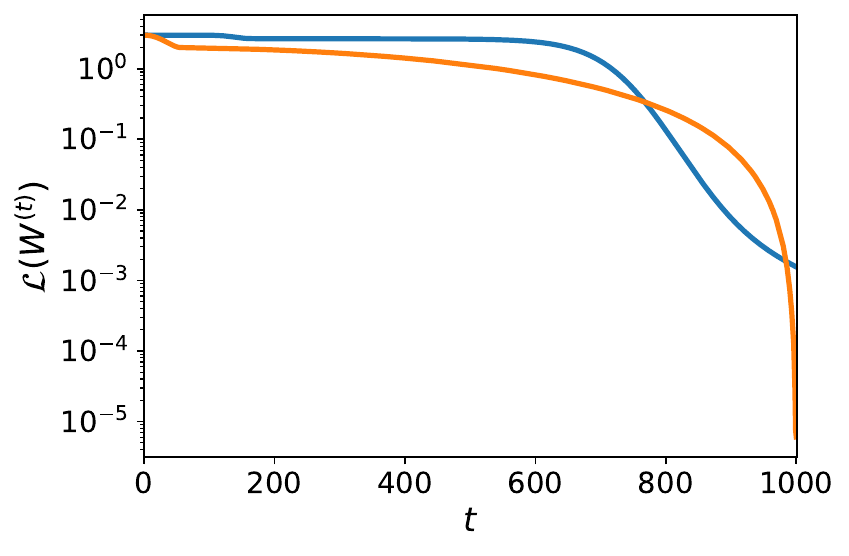}
  \end{minipage}\hfill
  \begin{minipage}{0.49\textwidth}
    \centering
    \includegraphics[width=\linewidth]{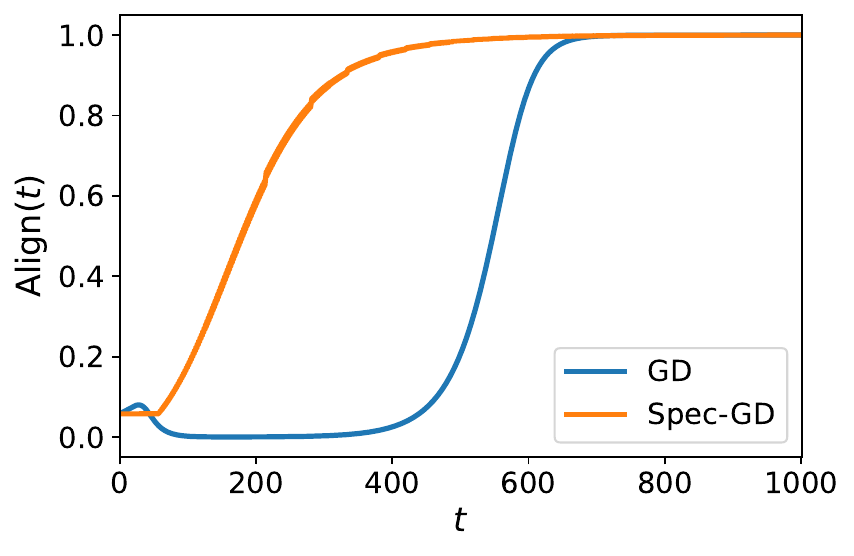}
  \end{minipage}
  \caption{Population training dynamics.
  \textbf{Left:} Population loss $\mathcal{L}(W^{(t)})$.
  \textbf{Right:} Alignment $\mathrm{Align}(t)$.}
  \label{fig:pop_loss_alignment}
\end{figure*}

\paragraph{Influence of the learning rate.}
We consider the experimental setting described in the main text.
As shown in Figure~\ref{fig:mse_gd_vs_muon}, an overly aggressive learning rate causes
the loss associated with GD to diverge, whereas the loss for SpecGD remains stable
and well controlled.
Figure~\ref{fig:abc_gd_vs_muon} further shows that, under GD, the spike component
reaches a large magnitude, after which the contraction mechanism on the nuisance
components no longer operates effectively.

\begin{figure}[t]
  \centering
  \begin{minipage}[t]{0.32\linewidth}
    \centering
    \includegraphics[width=\linewidth]{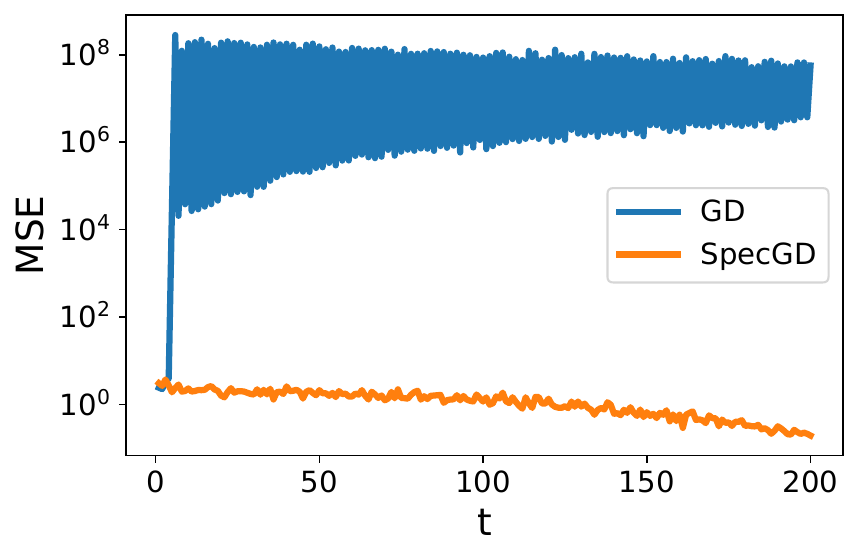}
    \caption{Loss evolution.}
    \label{fig:mse_gd_vs_muon}
  \end{minipage}\hfill
  \begin{minipage}[t]{0.32\linewidth}
    \centering
    \includegraphics[width=\linewidth]{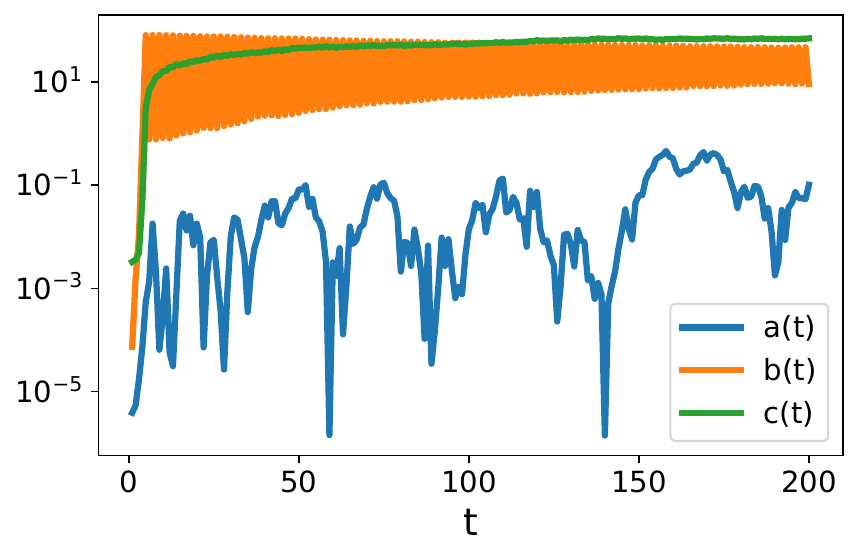}
    \caption{Coefficients $a(t),b(t),c(t)$ (GD).}
    \label{fig:abc_gd_vs_muon}
  \end{minipage}\hfill
  \begin{minipage}[t]{0.32\linewidth}
    \centering
    \includegraphics[width=\linewidth]{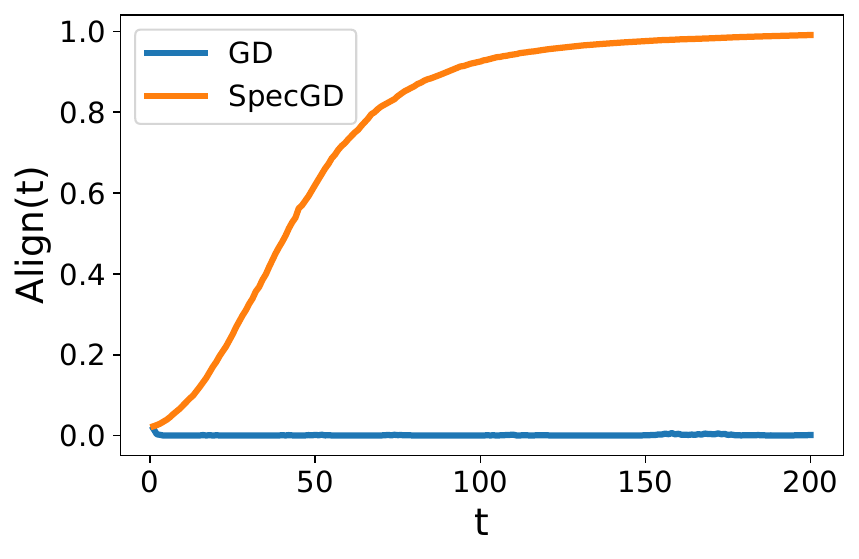}
    \caption{Alignment (GD vs SpecGD).}
    \label{fig:align_gd_vs_muon}
  \end{minipage}
\end{figure}

\paragraph{Power-law covariance.}
We now consider a more challenging anisotropic setting in which the input covariance
matrix $Q$ has a power-law spectrum.
Specifically, we take $Q = U \diag(\lambda_1,\dots,\lambda_d) U^\top$ with
$\lambda_i \propto i^{-\alpha}$ for $\alpha=2$, normalized so that $\Tr(Q)=d$.
The teacher vector $w_\star$ is chosen so that $Q^{1/2} w_\star$ aligns with one of the
smallest-eigenvalue directions of $Q$, and is normalized to satisfy
$\|Q^{1/2} w_\star\|_2 = 1$.
We use Gaussian random initialization for $W$, dimension $d=300$, width $m=100$,
learning rate $\eta=10^{-3}$, batch size $500$, and run the algorithms for $1000$
iterations.

We additionally study the robustness of the phenomenon under power-law anisotropy by varying the exponent \(\alpha\). Figure~\ref{fig:pl_heatmap} reports the final performance obtained by GD, SpecGD, and Muon across different learning rates and values of \(\alpha\). As anisotropy increases, GD deteriorates significantly, whereas SpecGD and Muon remain substantially more stable.
\begin{figure}[t]
    \centering
    \includegraphics[width=\linewidth]{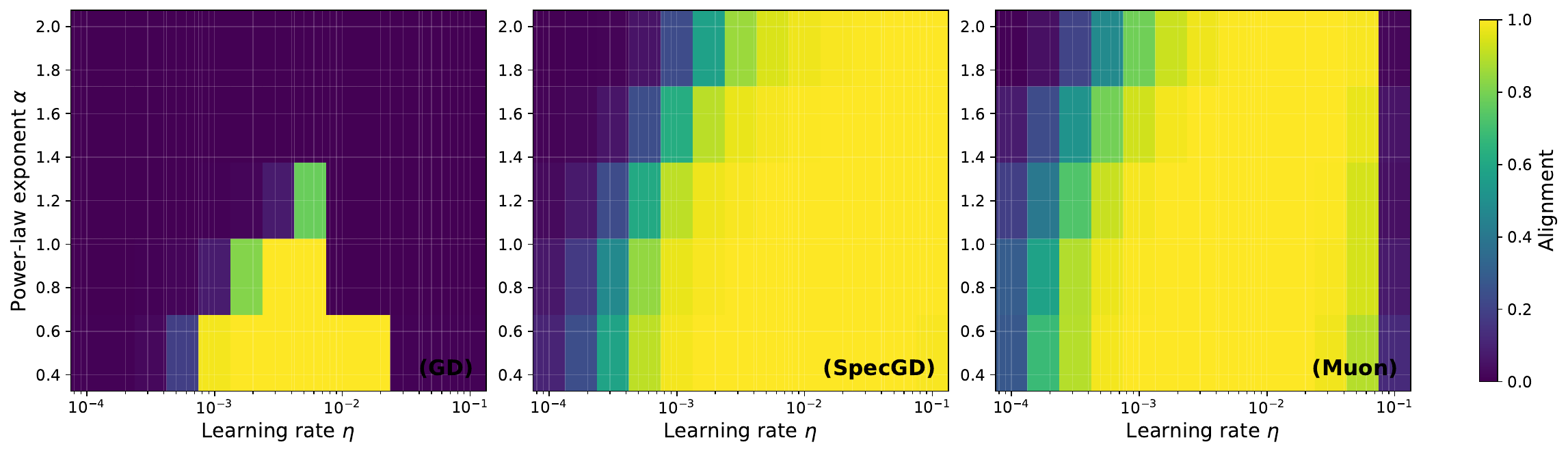}
   \caption{Comparison of the performances of GD, SpecGD, and Muon across different learning rates and power-law exponents, $d=300$.}
    \label{fig:pl_heatmap}
\end{figure}

\paragraph{MNIST dataset.} To investigate whether anisotropy-driven misalignment persists beyond the phase-retrieval setting, we train a two-layer ReLU network on a binary MNIST task (digits 4 vs.~9). Each image is flattened into a standardized vector \(x \in \mathbb{R}^{784}\). We introduce a controlled rank-one nuisance anisotropy by perturbing the inputs according to
\[
x' = x + g u, \qquad g \sim \mathcal N(0,\tau^2),
\]
where \(u \in \mathbb{R}^{784}\) is a fixed unit vector orthogonal to the difference of class means, shared across all samples, and \(\tau > 0\) controls the spike strength. The scalar coefficient \(g\) is sampled independently for each example. This construction induces a rank-one perturbation of the input covariance while remaining label-independent by construction. We compare standard gradient descent (GD) with spectral gradient descent (SpecGD) and Muon in the full-batch regime, and measure test accuracy.  The corresponding experimental results are shown in Figure~\ref{fig:mnist_spike}. GD struggles to separate the perturbed images, whereas SpecGD and Muon remain significantly more stable and display qualitatively similar behavior, supporting the hypothesis that spectral normalization of the updates mitigates the effect of anisotropic nuisance directions.

\begin{figure}[t]
    \centering
    \includegraphics[width=0.5\linewidth]{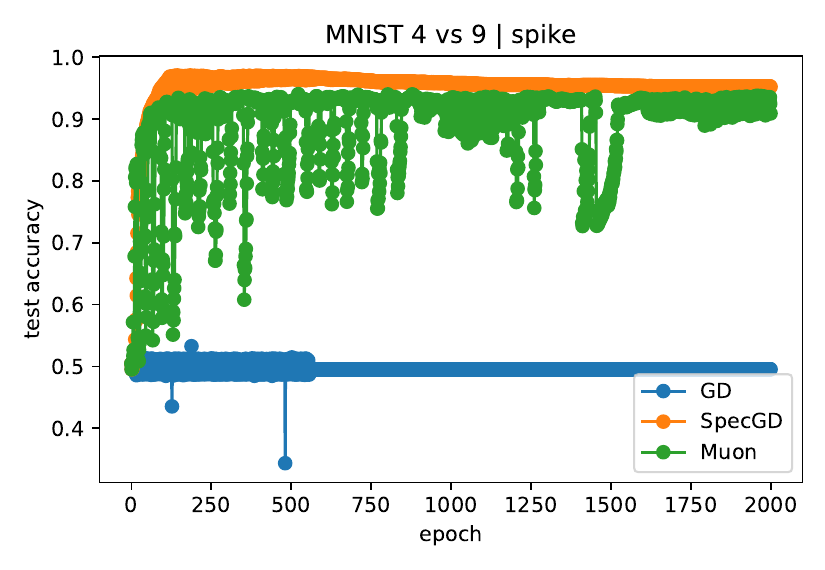}
    \caption{Test accuracy of the model trained by different optimizers on the MNIST spiked-covariance experiment for $\tau=200$.}
    \label{fig:mnist_spike}
\end{figure}

\end{document}